\theoremstyle{plain}
\newtheorem{theorem}{Theorem}[section]
\newtheorem{proposition}[theorem]{Proposition}
\newtheorem{lemma}[theorem]{Lemma}
\newtheorem{corollary}[theorem]{Corollary}
\theoremstyle{definition}
\newtheorem{definition}[theorem]{Definition}
\newtheorem{assumption}[theorem]{Assumption}
\theoremstyle{remark}
\newtheorem{remark}[theorem]{Remark}
\newcommand{\A}{\mathcal{A}}
\newcommand{\M}{\mathcal{M}}
\newcommand{\N}{\mathcal{N}}
\newcommand{\R}{\mathbb{R}}
\newcommand{\lap}[1]{\operatorname{Lap}\left(#1\right)}
\newcommand{\epsg}{\epsilon_g}
\newcommand{\epsH}{\epsilon_H}
\title{Differentially Private Optimization for Smooth Nonconvex ERM}
\author{Changyu Gao\thanks{Department of Industrial and Systems Engineering, University of Wisconsin-Madison, WI 53706, USA \newline (email: changyu.gao@wisc.edu)} \ and Stephen J. Wright\thanks{Department of Computer Sciences, University of Wisconsin-Madison, WI 53706, USA (email: swright@cs.wisc.edu)}}
\date{}
\begin{document}
\maketitle

\begin{abstract}
    We develop simple differentially private optimization algorithms that move along directions of (expected) descent to find an approximate second-order solution for nonconvex ERM.\@ We use line search, mini-batching, and a two-phase strategy to improve the speed and practicality of the algorithm. Numerical experiments demonstrate the effectiveness of these approaches.
\end{abstract}

\section{Introduction}
Privacy protection has become a central issue in machine learning algorithms,
and  differential privacy \citep{dworkAlgorithmicFoundationsDifferential2014} is a rigorous and popular framework for quantifying privacy.
We propose a differentially private optimization algorithm that finds an approximate second-order solution for (possibly nonconvex) ERM problems. 
We propose several techniques to improve the practical performance of the method, including backtracking line search, mini-batching, and a heuristic to avoid the effects of conservative assumptions made in the analysis.

For given \(f : \R^d \to \R\), smooth but nonconvex, we consider the following minimization problem:
\[
    \min_{w \in \R^d} f(w).
\]
We seek an approximate second-order solution, defined  as follows.
\begin{definition}[\((\epsg,\epsH)\)-2S]
    For given positive values of \(\epsg\) and \(\epsH\), We say that \(w\) is an \((\epsg, \epsH)\)-approximate second-order solution (abbreviated as \((\epsg,\epsH)\)-2S) if
    \begin{equation} \label{eq:2on}
        \|\nabla f(w)\| \le \epsg, \quad \lambda_{\text{min}}\left(\nabla^2 f(w) \right) \ge -\epsH.
    \end{equation}
\end{definition}

We are mostly interested in the case of \(\epsg = \alpha\) and \(\epsH = \sqrt{M \alpha}\), for some \(\alpha>0\).
That is, we seek an \((\alpha, \sqrt{M \alpha})\)-2S, where \(M\) is the Lipschitz constant for \( \nabla^2 f\).

%%    Given \(\alpha>0\), we say \(f\) is an \(\alpha\)-approximate second-order  solution (abbreviated as \(\alpha\)-2S) if it satisfies~\eqref{eq:2on} with  \(\epsg = \alpha\) and \(\epsH = \sqrt{M \alpha}\) where \(M\) is the Lipschitz constant for \( \nabla^2 f\).
%%    \gcy{Use \(\alpha-2\)NS in the following to shorten the concept?}

We will focus on empirical risk minimization (ERM).
\begin{definition}[ERM]\label{def:erm}
    Given a dataset \(D=\left\{x_{1}, \ldots, x_{n}\right\}\) and a loss function \(l(w,x)\), we seek the parameter \(w \in \mathbb{R}^d\) that minimizes the empirical risk
    \begin{equation} \label{eq:ERM}
        f(w) = L(w, D) := \frac{1}{n} \sum_{i=1}^{n} l \left(w, x_{i}\right).
    \end{equation}
\end{definition}

ERM is a classical problem in machine learning that has been studied extensively; see, for example \citet{shalev-shwartzUnderstandingMachineLearning2014}.
In this paper, we describe differentially private (DP) techniques for solving ERM.\@

Previous research on DP algorithms for ERM and optimization has focused mainly on convex loss functions.
Recent research on differentially private algorithms for nonconvex ERM \citep{wangDifferentiallyPrivateEmpirical2018,wangDifferentiallyPrivateEmpirical2019a,zhangEfficientPrivateERM2017} targets an approximate stationary point, which satisfies only the first condition in~\eqref{eq:2on}.
\citet{wangEscapingSaddlePoints2021} proposes a trust-region type (DP-TR) algorithm that gives an approximate second-order  solution for ERM, satisfying both conditions in~\eqref{eq:2on}, for particular choices of \(\epsg\) and \(\epsH\).
This work requires the trust-region subproblem to be solved exactly at each iteration, and fixes the radius of the trust region at a small value, akin to a ``short step'' in a line-search method.
An earlier work \citep{wangDifferentiallyPrivateEmpirical2019} proposed the DP-GD algorithm, which takes short steps in a noisy gradient direction, then sorts through all the iterates so generated to find one that satisfies second-order necessary conditions.
Our approach matches the sample complexity bound in DP-GD, which is \(O\left(\frac{\sqrt{d}}{\alpha^2\sqrt{\rho}}\right)\) for \(\rho\)-\(z\)CDP or \(O\left(\frac{\sqrt{d\ln(1/\delta)}}{\alpha^2\varepsilon}\right)\) for \((\varepsilon, \delta)\)-DP for finding an \((\alpha, \sqrt{M\alpha})\)-2S, and has an iteration complexity of  \(O(\alpha^{-2})\).
% \(O\left(\max(\epsg^{-2}, \epsH^{-3})\right)\).

Our contributions can be summarized as follows.
\begin{itemize}
    \item Our algorithm is elementary and is based on a simple (non-private) line-search algorithm for finding an approximate second-order  solution.
          It evaluates second-order information (a noisy Hessian matrix) only when insufficient progress can be made using first-order (gradient) information alone.
          By contrast, DP-GD uses the (noisy) Hessian only for checking the second-order approximate condition, while DP-TR requires the noisy Hessian to be calculated at every iteration.
    \item Our algorithm is practical and fast. DP-TR has a slightly better worst-case sample complexity bound than our method, depending on \(O(\alpha^{-7/4})\) rather than \(O(\alpha^{-2})\).
          However, our method makes use of line searches, allowing it to adapt to the local geometry of the function and thus attain better practical performance than worst-case bounds would suggest.
         By contrast, DP-TR uses a small trust region  whose size is based on the worst-case global properties of the function, and requires exact solution of its trust-region subproblem. This operation is relatively expensive and also undesirable when the gradient is large enough to take a productive step without any need to utilize the Hessian.
          %%     For most iterations we do not need to compute the noisy Hessian, DP-TR  requires this computation at every iteration.
          Experiments demonstrate that our algorithm requires fewer iterations than DP-TR and does less computation on average at each iteration, and thus runs significantly faster than DP-TR in practice.\@
          % \gcy{Add this to say Di Wang's mini-batch algorithm actually requires an exponent of 2 (missed in their paper)}
          In any case, the \emph{mini-batch} version of DP-TR has a sample complexity \(O(\alpha^{-2})\), matching the sample complexity of the mini-batch version of our algorithm.
    \item We use line search and mini-batching to accelerate the algorithm. Differentially private line search algorithms have been proposed by \citep{chenStochasticAdaptiveLine2020}.
          We use the same sparse vector technique as used by their work, but provide a tighter analysis of the sensitivity of the query for checking the sufficient decrease condition. In addition, we provide a rigorous analysis of the guaranteed function decrease with high probability.
    \item  To complement our worst-case analysis,  we propose a heuristic that can obtain much more rapid convergence while retaining the guarantees provided by the analysis.
          % \item
\end{itemize}

The remainder of the paper is structured as follows. In Section 2, we review basic definitions and properties from differential privacy, and outline our assumptions about the function \(f\) to be optimized.  In Section 3, we describe our algorithm and its analysis. We will discuss the basic short step version of the algorithm in Section 3.1 and an extension to a practical line search method in Section 3.2. A mini-batch adaptation of the algorithm is described in Section 3.3. In Section 4, we present experimental results and demonstrate the effectiveness of our algorithms.

\section{Preliminaries}
We use several variants of DP in our analysis, including \((\varepsilon, \delta)\)-DP \citep{dworkAlgorithmicFoundationsDifferential2014}, \((\alpha, \epsilon)\)-RDP \citep{mironovRenyiDifferentialPrivacy2017}, and \(z\)CDP \citep{bunConcentratedDifferentialPrivacy2016}.
We review their definitions and properties in Appendix~\ref{sec:prelim-dp}.

We make the following assumptions about the smoothness of the objective function \(f\).
\begin{assumption}
    We assume \(f\) is lower bounded by \(\underline f\). Assume further that \(f\) is \(G\)-smooth and has \(M\)-Lipschitz Hessian, that is, for all \(w_1, w_2 \in \operatorname{dom}(f)\), we have
    \begin{align*}
        \|\nabla f(w_1) - \nabla f(w_2)\|     & \le G \|w_1 - w_2\|, \\
        \|\nabla^2 f(w_1) - \nabla^2 f(w_2)\| & \le M \|w_1 - w_2\|,
    \end{align*}
\end{assumption}
where \(\|\cdot\|\) denotes the vector \(2\)-norm and the matrix \(2\)-norm respectively. We use this notation throughout the paper.

For the ERM version of \(f\) (see Definition~\ref{def:erm}), we make additional assumptions.
\begin{assumption}
    For the ERM setting~\eqref{eq:ERM}, we assume the loss function is \(l(w, x)\) is \(G\)-smooth and has \(M\)-Lipschitz Hessian with respect to \(w\). Thus \(L(w, D)\) (the average loss across \(n\) samples) is also \(G\)-smooth and has \(M\)-Lipschitz Hessian with respect to \(w\). In addition, we assume \(l(w, x)\) has bounded function values, gradients, and Hessians. That is, there are constants \(B\), \(B_g\), and \(B_H\) such that for any \(w, x\) we have,
    \begin{equation*}
        0 \le l(w, x) \le B, \; \|\nabla_w l(w, x)\| \le B_g, \; \|\nabla^2_w l(w, x)\| \le B_H.
    \end{equation*}
    % \begin{equation}
    %     \begin{split}
    %         0 \le l(w, x) &\le B, \\
    %         \|\nabla_w l(w, x)\| &\le B_g, \\
    %         \|\nabla^2_w l(w, x)\| &\le B_H.
    %     \end{split}
    % \end{equation}
    % \gcy{Some redundancy here, having \(B_H\), we can set \(G \gets B_H\).}
    As a consequence, the \(l^2\) sensitivity of \(L(w, D)\) and \(\nabla L(w, D)\) is bounded by \(B / n\) and \(2B_g / n\) respectively. We have
    \begin{equation*}
    \begin{split}
        &\|\nabla^2 L(w, D) - \nabla^2 L(w, D')\|_F \\
        &\le \sqrt{d} \, \|\nabla^2 L(w, D) - \nabla^2 L(w, D')\| \le \frac{2B_H\sqrt{d}}{n}.
    \end{split}
    \end{equation*}
    To simplify notation, we define \(g(w) := \nabla f(w)\) and \(H(w) := \nabla^2 f(w)\).
    From the definition~\eqref{eq:sens} of \(\ell_2\)-sensitivity,   the sensitivities of \(f\), \(g\), and \(H\) are
    \begin{equation} \label{eq:sensitivity}
        \Delta_f = \frac{B}{n}, \quad \Delta_g = 2\frac{B_g}{n}, \quad \Delta_H = \frac{2B_H\sqrt{d}}{n}.
    \end{equation}
\end{assumption}

\section{Algorithms and their Properties}

Our algorithmic starting point is the elementary algorithm described in \citet[Chapter~3.6]{wrightOptimizationDataAnalysis2022} that has convergence guarantees to points that satisfy approximate second-order  conditions.
For simplicity, we use the following notation to describe and analyze the method:
\begin{equation}
\begin{split}
    f_{k} &:= f(w_k), \\
    g_{k} &:= g(w_k) = \nabla f(w_k), \\
    H_{k} &:= H(w_k) = \nabla^{2} f(w_k).
\end{split}
\end{equation}
We employ the Gaussian mechanism to perturb gradients and Hessians, and denote
\[
    \tilde g_{k}=g_{k}+\varepsilon_{k}, \quad \tilde H_k = H_k + E_k,
\]
where \(\varepsilon_k \sim \mathcal{N}(0, \Delta_g^2 \sigma_g^2 I_d)\) for some chosen parameter \(\sigma_g\) and \(E_k\) is a symmetric matrix in which
each entry on and above its diagonal is i.i.d.\ as \(\mathcal{N}\left(0, \Delta_H^2 \sigma_H^2\right)\), for some chosen value of \(\sigma_H\).
Let \(\tilde \lambda_k\) denote the minimum eigenvalue of \(\tilde H_k\) with the corresponding eigenvector \(\tilde p_k\) of which the sign and norm are chosen to satisfy
\begin{equation}\label{eq:min_eig}
    \|\tilde p_k\| = 1 \quad \text{and} \quad (\tilde p_k)^T \tilde g_{k} \le 0.
\end{equation}
% \sw{Why does \(p\) have a superscript while \(g\) has a subscript?}
% \gcy{I was using superscripts for vectors, but for \(g\) and \(H\) I use subscripts to align with \(f_k\). I could change this.}
% \sw{I did a quick scan and it looks like it's possible to use subscripts for everything, so I think we should do this.} 

Algorithm~\ref{alg:dpopt} specifies the general form of our optimization algorithm. We will discuss two strategies --- a ``short step'' strategy and one based on backtracking line search ---  to choose the step sizes \(\gamma_{k, g}\) and \(\gamma_{k, H}\) to be taken along the directions \(\tilde g_k\) and \(\tilde p_k\), respectively.
For each variant, we define a quantity MIN\_DEC to be the minimum decrease, and use it together with a specific lower bound on \(f\) to define an upper bound \(T\) of the required number of iterations.
%% We run the optimization loop for at most \(T\) iterations. 
In each iteration, we take a step in the negative of the perturbed gradient direction \(\tilde g_k\) if \(\|\tilde g_k\| > \epsg\).
Otherwise, we check the minimum eigenvalue \(\tilde \lambda_k\) of the perturbed Hessian \(\tilde H_k\). If \(\tilde \lambda_k < -\epsH\), we take a step along the direction \(\tilde p_k\).
In the remaining case, we have \(\|\tilde g_k\| \le \epsg\) and \(\tilde \lambda_k \ge -\epsH\), so the approximate second-order  conditions are satisfied (up to some constant factors) and we output the current iterate \(w_k\) as a 2S solution.

\begin{algorithm}[!htb]
    \caption{DP Optimization with Second-Order Guarantees (General form)}\label{alg:dpopt}
    \begin{algorithmic}
        % \Procedure{Euclid}{\(a,b\)}\Comment{The g.c.d. of a and b}
        \State \textbf{Given:} minimum decrease per iteration MIN\_DEC, tolerances \(\epsilon_g\), \(\epsilon_H\), noise parameters \(\sigma_f\), \(\sigma_g\), \(\sigma_H\)
        \vspace{0.1in}
        \State Initialize \(w_0\) and sample \(z \sim \mathcal{N}(0, \Delta_f^2 \sigma_f^2)\)
        \State Compute an upper bound of the required number of iterations as follows
        \begin{equation} \label{eq:T}
            T = \left\lceil\frac{f(w_0) + |z| - \underline{f}}{\text{MIN\_DEC}} \right\rceil
        \end{equation}
        \State \(\sigma_g\) and \(\sigma_H\) using \(T\) (See theorems for details)
        % \State \sw{I guess here you have to say that you define \(\sigma_g\) and \(\sigma_H\) according to the \(T\) just defined.}
        \For{\(k = 0,1, \ldots, T-1\)}
        \State Sample \(\varepsilon_k \sim \mathcal{N}\left(0, \Delta_g^2 \sigma_g^2 I_d\right)\)
        \State Compute the perturbed gradient \(\tilde g_{k} = g_{k}+\varepsilon_{k}\)
        \If{\(\left\|\tilde g_k\right\| > \epsilon_{g}\)}
            \State Choose step size \(\gamma_{k,g}\) and set \(w_{k+1} \gets w_k -  \gamma_{k, g} \tilde g_{k}\) \Comment{Gradient step}
        \Else
            \State Sample \(E_k\) such that \(E_k\) is a \(d \times d\) symmetric matrix in which
            each entry on and above its diagonal is i.i.d.\ as \(\mathcal{N}\left(0, \Delta_H^2 \sigma_H^2\right)\)
            \State Compute perturbed Hessian \(\tilde H_k = H_k + E_k\)
            \State Compute the minimum eigenvalue of \(\tilde H_k\) and the corresponding eigenvector \((\tilde \lambda_k, \tilde p_k)\) satisfying~\eqref{eq:min_eig}
            \If{\(\tilde \lambda_{k} < -\epsH\)}
                \State Choose step size \(\gamma_{k,H}\) and set \(w_{k+1} \gets w_k + \gamma_{k, H} \tilde p_k\) \Comment{Negative curvature step}
            \Else
                \State {\bfseries return} \(w_k\)
            \EndIf
        \EndIf
        \EndFor
    \end{algorithmic}
\end{algorithm}

The quantities \(\sigma_f\), \(\sigma_g\), \(\sigma_H\) determine the amount of noise added to function, gradient, and Hessian evaluations, respectively, with the goal of preserving privacy via the Gaussian Mechanism.
We can target a certain privacy level for the overall algorithm (\(\rho\) in \(\rho\)-zCDP, for example), find an upper bound on the number of iterations required by whatever variant of Algorithm~\ref{alg:dpopt} we are using, and then choose \(\sigma_f\), \(\sigma_g\), and \(\sigma_H\) to ensure this level of privacy.
Conversely, we can choose positive values for \(\sigma_f\), \(\sigma_g\), and \(\sigma_H\) and then determine what level of privacy can be ensured by this choice.
We can keep track of the privacy leakage as the algorithm progresses, leading to the possibility of adaptive schemes for choosing the noise variances.

\subsection{Short Step}
In the short step version of the algorithm, we choose step sizes as follows % this is not true independent of \(k\):
\begin{equation} \label{eq:short_step}
    \gamma_{k, g} \equiv \frac{1}{G}, \quad \gamma_{k, H} \equiv \frac{2|\tilde\lambda_{k}|}{M}.
\end{equation}
%% and set 
%% \sw{You can't use the following definition as you haven't said anything about what roles are played by \(c_1\), \(c_2\), and \(c\). At least you have to give some preview of what these quantities are.}
%% \gcy{You mean in the next lemma? Maybe I can say they are constants that control the tightness of the bound. We can fix some choice of them}
%% \gcy{Added a remark after the proof}
% \sw{Why do you define MIN\_DEC here then give the same definition on the next page}
% \gcy{Here it's setting the parameter in the algorithm. In the next page I'm saying the decrease is exactly this number.}
% \sw{I think it's a bad idea to define it here for all the reasons mentioned. I'll remove and replace with this:}
%% \begin{equation} \label{eq:short_dec}
%%     \text{MIN\_DEC} := \min\left(
%%     \frac{1-2 c_{1}}{2 G}\epsg^{2}, 2\left(\frac{1}{3}-c_{2}-c\right) \frac{\epsH^{3}}{M^{2}}
%%     \right).
%% \end{equation}
The choices of MIN\_DEC and the noise parameters \(\sigma_f\), \(\sigma_g\), and \(\sigma_H\) are discussed in the following results.

First, we discuss the privacy guarantee and its relationship to the noise variances and the number of iterations.
\begin{theorem}
    Let the noise parameters \(\sigma_f\), \(\sigma_g\), \(\sigma_H\) be given.
    Suppose a run of Algorithm~\ref{alg:dpopt} takes \(k_g\) gradient steps and \(k_H\) negative curvature steps. Then the run is  \(\rho\)-\(z\)CDP where
    %% \sw{Is this terminology "data-dependent \(\rho\)-zCDP" correct? I didn't see this kind of language in the definitions. Perhaps omit the "data-dependent"? Or else include this term in the definitions?}
    %% \gcy{I guess I should not use the term ``data-dependent'' here but there is no other standard term, probably `run-dependent (specific)' is better. Normally in DP, we have fixed number of steps.}
    \(
    \rho = \frac 12 \left(\frac{1}{\sigma_f^2} +
    \frac{k_g + k_H}{\sigma_g^2} + \frac{k_H}{\sigma_H^2}
    \right).
    \)

    Recall that \(T\) is the maximum number of iterations defined in~\eqref{eq:T}. Let
    \(
    \bar \rho = \frac 12 \left(\frac{1}{\sigma_f^2} +
    \frac{T}{\sigma_g^2} + \frac{T}{\sigma_H^2}
    \right).
    \)
    We always have \(\bar \rho \ge \rho\), so the algorithm is \(\bar \rho\)-\(z\)CDP.\@
    Conversely, for given \(\rho > 0\) and \(c_f \in (0,1) \), we can choose \(\rho_f = c_f \rho\), and
    \begin{equation}
        \sigma_f^2 = \frac{1}{2\rho_f}, \quad \sigma_g^2 = \sigma_H^2 = \frac{T}{(1-c_f) \rho}. \label{eq:short_sigma}
    \end{equation}
    to ensure that the algorithm is \(\rho\)-\(z\)CDP.\@
\end{theorem}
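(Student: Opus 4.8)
The plan is to view a run of Algorithm~\ref{alg:dpopt} as an adaptive composition of elementary Gaussian-mechanism queries and to sum their individual zCDP costs. The basic building block is the zCDP form of the Gaussian mechanism: perturbing a query of $\ell_2$-sensitivity $\Delta$ by $\mathcal N(0,\Delta^2\sigma^2 I)$ yields $\tfrac{1}{2\sigma^2}$-zCDP, a cost that is independent of $\Delta$ once the noise has been scaled by $\Delta$. Using the sensitivities $\Delta_f,\Delta_g,\Delta_H$ recorded in~\eqref{eq:sensitivity}, I would conclude that the single function evaluation producing $T$ costs $\tfrac{1}{2\sigma_f^2}$, each perturbed gradient $\tilde g_k$ costs $\tfrac{1}{2\sigma_g^2}$, and each perturbed Hessian $\tilde H_k$ costs $\tfrac{1}{2\sigma_H^2}$. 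For the Hessian I would note that drawing i.i.d.\ entries on and above the diagonal and symmetrizing is exactly the Gaussian mechanism on the vector of independent entries, whose $\ell_2$-norm is dominated by the Frobenius norm; hence $\Delta_H$ from~\eqref{eq:sensitivity} is a valid sensitivity and the cost $\tfrac{1}{2\sigma_H^2}$ is legitimate.

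Next I would count the queries. Inspecting the loop, every iteration issues one gradient query, while a Hessian query is issued precisely on the iterations that fall through to the else branch. A run with $k_g$ gradient steps and $k_H$ negative-curvature steps therefore issues $k_g+k_H$ gradient queries and $k_H$ Hessian queries, in addition to the one function query. Adaptive composition of zCDP adds these costs, and crucially remains valid even though the decision of whether to query the Hessian at step $k$, and whether to stop, is made from the noisy outputs released so far; this gives $\rho=\tfrac12\bigl(\tfrac{1}{\sigma_f^2}+\tfrac{k_g+k_H}{\sigma_g^2}+\tfrac{k_H}{\sigma_H^2}\bigr)$. Since the loop is capped at $T$ iterations, every run satisfies $k_g+k_H\le T$ and $k_H\le T$, so monotonicity of each term in the counts yields $\bar\rho\ge\rho$ and hence the stated $\bar\rho$-zCDP guarantee.

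For the converse I would simply substitute the prescribed variances into $\bar\rho$. With $\sigma_f^2=\tfrac{1}{2\rho_f}$ and $\rho_f=c_f\rho$, the function term is $\tfrac12\cdot\tfrac{1}{\sigma_f^2}=\rho_f=c_f\rho$. With $\sigma_g^2=\sigma_H^2=\tfrac{T}{(1-c_f)\rho}$, each of the gradient and Hessian terms equals $\tfrac12\cdot\tfrac{T}{\sigma_g^2}=\tfrac12(1-c_f)\rho$, summing to $(1-c_f)\rho$. Adding the pieces gives $\bar\rho=c_f\rho+(1-c_f)\rho=\rho$, as in~\eqref{eq:short_sigma}. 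The structural point worth emphasizing is that taking $\sigma_g^2,\sigma_H^2$ proportional to $T$ makes the ratios $T/\sigma_g^2$ and $T/\sigma_H^2$ constant, so the guarantee does not degrade as the iteration budget $T$ grows.

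The hard part will be making the composition fully rigorous under data-dependent control flow. Two issues need care. First, the branch taken at each step and the stopping time are random functions of the data through the noisy queries, so I must invoke the \emph{adaptive} composition theorem and, to control the number of composed mechanisms, regard the transcript as padded out to the $T$-iteration cap (the extra queries of the terminating iteration are then automatically covered by $k_g+k_H\le T$ and $k_H\le T$). Second, in the converse the cap $T$---and therefore the noise scales $\sigma_g,\sigma_H$ themselves---are computed from the privately perturbed quantity $f(w_0)+|z|$, so the noise distribution of the later queries depends on the data. I would resolve this by placing the function query first in the composition and letting $T$, hence $\sigma_g,\sigma_H$, be chosen as a function of its released output, so that conditioned on that output the gradient and Hessian mechanisms have fixed noise scales and the identity $T/\sigma_g^2=(1-c_f)\rho$ makes their conditional cost identical across realizations; verifying that this adaptive, data-dependent choice of $\sigma_g,\sigma_H$ is admissible in the composition theorem is the crux of the argument.
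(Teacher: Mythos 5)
Your proposal is correct and follows essentially the same route as the paper, whose proof is the one-line observation that the result "follows directly from the $z$CDP guarantee for the Gaussian mechanism combined with postprocessing and composition of $z$CDP"; you have simply spelled out the query counting, the adaptive composition under data-dependent branching, and the substitution in the converse direction. Your attention to the fact that $T$ (and hence $\sigma_g,\sigma_H$) depends on the privately released $f(w_0)+|z|$ is a legitimate refinement that the paper leaves implicit, but it does not change the underlying argument.
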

\begin{proof}
    The proof follows directly from the \(z\)CDP guarantee for the Gaussian mechanism combined with postprocessing and composition of \(z\)CDP.\@
\end{proof}

\begin{remark}
    %% We can choose other combinations of \((\sigma_f, \sigma_g, \sigma_H)\) as well. 
    In our algorithm, the actual noise is scaled by the corresponding sensitivity \(\Delta\) defined in~\eqref{eq:sensitivity}. We do the same for later algorithms.
    In practice, we expect most steps to be gradient steps, so \(\bar \rho\) is an overestimate of the actual privacy level \(\rho\). 
    In practice, therefore, we can be more aggressive in choosing the noise variances than this worst-case theory would suggest.
    We discuss a two-phase approach in Section~\ref{sec:improvement}.
\end{remark}

We now discuss guarantees of the output of Algorithm~\ref{alg:dpopt}.
First, we estimate MIN\_DEC in each short step.
% \sw{Why are propositions not italicized?} \sw{I'm changing this to a lemma. A later one too.}
\begin{lemma}\label{lem:short_dec}
    With the short step size choices~\eqref{eq:short_step}, if the noise satisfies the following conditions
    for some positive constants \(c\), \(c_1\), and \(c_2\) such that \(c_1 < \tfrac12\) and \(c_{2}+c < \tfrac13\), and
    \begin{subequations}\label{eq:bounded_noise}
        \begin{align}
            \left\|\varepsilon_{k}\right\| & \le \min \left(c_{1} \epsg, \frac{c_{2}}{M} \epsH^{2}\right), \\
            \left\|E_{k}\right\|           & \le c \, \epsH,
        \end{align}
    \end{subequations}
    then the amount of decrease in each step is at least
    \begin{equation} \label{eq:short_dec}
        \textnormal{MIN\_DEC} = \min\left(
        \frac{1-2 c_{1}}{2 G}\epsg^{2}, \, 2\left(\frac{1}{3}-c_{2}-c\right) \frac{\epsH^{3}}{M^{2}}
        \right).
    \end{equation}
    The true gradient and true minimum eigenvalue of the Hessian satisfy the following,
    \begin{equation} \label{eq:close}
        \left\|g_{k}\right\|
        \le (1+c_1) \left\|\tilde{g}_{k}\right\|, \quad
        \lambda_k > - (1+c) |\tilde \lambda_k|.
    \end{equation}
\end{lemma}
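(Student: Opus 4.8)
The plan is to split the analysis by step type and bound the one-step decrease separately in each case; the two bounds produce the two arguments of the minimum in~\eqref{eq:short_dec}. Throughout I would write $g_k = \tilde g_k - \varepsilon_k$ and $H_k = \tilde H_k - E_k$, so that the noisy quantities on which the algorithm actually acts can be traded for the true quantities that govern the function decrease, paying for the swap with the bounded noise terms in~\eqref{eq:bounded_noise}.

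For a gradient step, where $\|\tilde g_k\| > \epsg$ and $w_{k+1} = w_k - \tfrac1G \tilde g_k$, I would invoke the descent lemma (a consequence of $G$-smoothness),
\[
    f_{k+1} \le f_k + g_k^T(w_{k+1}-w_k) + \tfrac{G}{2}\|w_{k+1}-w_k\|^2,
\]
substitute the step and $g_k = \tilde g_k - \varepsilon_k$, and collect terms to reach $f_{k+1} \le f_k - \tfrac{1}{2G}\|\tilde g_k\|^2 + \tfrac1G \varepsilon_k^T \tilde g_k$. Cauchy--Schwarz with $\|\varepsilon_k\| \le c_1\epsg < c_1\|\tilde g_k\|$ bounds the cross term by $\tfrac{c_1}{G}\|\tilde g_k\|^2$, yielding a decrease of at least $\tfrac{1-2c_1}{2G}\|\tilde g_k\|^2 > \tfrac{1-2c_1}{2G}\epsg^2$, which is positive precisely because $c_1 < \tfrac12$.

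The delicate case is the negative-curvature step, where $\tilde\lambda_k < -\epsH$ and $w_{k+1} = w_k + \tfrac{2|\tilde\lambda_k|}{M}\tilde p_k$. Here I would use the cubic bound implied by the $M$-Lipschitz Hessian,
\[
    f_{k+1} \le f_k + g_k^T s + \tfrac12 s^T H_k s + \tfrac{M}{6}\|s\|^3, \quad s := \tfrac{2|\tilde\lambda_k|}{M}\,\tilde p_k.
\]
The three key manipulations are: (i) the sign convention $\tilde p_k^T \tilde g_k \le 0$ from~\eqref{eq:min_eig} makes $g_k^T \tilde p_k = \tilde g_k^T \tilde p_k - \varepsilon_k^T \tilde p_k \le \|\varepsilon_k\|$; (ii) the eigenpair identity $\tilde p_k^T \tilde H_k \tilde p_k = \tilde\lambda_k = -|\tilde\lambda_k|$ together with $\tilde p_k^T E_k \tilde p_k \le \|E_k\| \le c\,\epsH$ gives $\tilde p_k^T H_k \tilde p_k \le -|\tilde\lambda_k| + c\,\epsH$; and (iii) $\|s\| = 2|\tilde\lambda_k|/M$. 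Substituting and simplifying, the cubic-in-$|\tilde\lambda_k|$ contributions combine to $-\tfrac{2|\tilde\lambda_k|^3}{3M^2}$, while the two error terms are controlled by $\|\varepsilon_k\| \le \tfrac{c_2}{M}\epsH^2$ and $\|E_k\| \le c\,\epsH$ together with $|\tilde\lambda_k| > \epsH$ (which lets each error term be dominated by a $|\tilde\lambda_k|^3$ term). This yields a decrease of at least $\tfrac{2|\tilde\lambda_k|^3}{M^2}\!\left(\tfrac13 - c_2 - c\right) > 2\!\left(\tfrac13 - c_2 - c\right)\tfrac{\epsH^3}{M^2}$, positive exactly when $c_2 + c < \tfrac13$. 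Taking the minimum over the two cases establishes~\eqref{eq:short_dec}.

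Finally, the closeness estimates~\eqref{eq:close} fall out of the same decompositions: on a gradient step $\|g_k\| \le \|\tilde g_k\| + \|\varepsilon_k\| \le (1+c_1)\|\tilde g_k\|$ using $\|\varepsilon_k\| \le c_1\epsg < c_1\|\tilde g_k\|$, and on a negative-curvature step Weyl's inequality gives $\lambda_k \ge \tilde\lambda_k - \|E_k\| \ge -|\tilde\lambda_k| - c\,\epsH > -(1+c)|\tilde\lambda_k|$ using $\epsH < |\tilde\lambda_k|$. I expect the main obstacle to be the bookkeeping in the negative-curvature case: correctly combining the cubic terms and checking that each perturbation term is dominated by a $|\tilde\lambda_k|^3$ term, so that the constants $\tfrac13$, $c_2$, and $c$ emerge in the claimed combination.
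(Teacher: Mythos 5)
Your proposal is correct and follows essentially the same route as the paper's proof: the same descent lemma for gradient steps giving the $\tfrac{1-2c_1}{2G}\|\tilde g_k\|^2$ decrease, the same cubic Taylor bound with the sign convention~\eqref{eq:min_eig} and the eigenpair identity for negative-curvature steps (your constants $-\tfrac{2|\tilde\lambda_k|^3}{3M^2}$ and $2(\tfrac13-c_2-c)$ match the paper's exactly), and the same triangle-inequality and Weyl's-inequality arguments for~\eqref{eq:close}. No gaps.
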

\begin{remark}
    The constants \(c\), \(c_1\), and \(c_2\) in~\eqref{eq:bounded_noise} control the accuracy of our noisy gradient and Hessian estimates. MIN\_DEC is smaller when we choose smaller tolerances \(\epsg\) and \(\epsH\). Smaller tolerances also yield a tighter solution, but we will need smaller noise variances to satisfy the conditions~\eqref{eq:bounded_noise}. This requirement translates to a larger required sample size \(n\) for our ERM problem, as we will see in Theorem~\ref{thm:short_thm}.
\end{remark}

\begin{corollary}\label{cor:output}
    Assuming the noise satisfies~\eqref{eq:bounded_noise} at each iteration, the short step version~(using~\eqref{eq:short_step},~\eqref{eq:short_dec}) of the algorithm will output a \(((1 + c_{1}) \epsg, (1+c) \epsH)\)-2S.\@
    % \sw{Question. Can we actually CHECK that the conditions~\eqref{eq:bounded_noise}  are satisfied after we compute the noise quantities \(\varepsilon_k\) and \(E_k\), and it they are NOT satisfied, we simply draw new vectors \(\varepsilon_k\) and \(E_k\) from the  normal distribution? To high probability, the first draw will work, we will rarely need a second. But does this violate the rules for using Gaussian privacy? Because are are actually drawing from a different distribution - the intersection of Gaussian and a bounded distribution? For which the privacy analysis does not apply?}
    % \gcy{Regarding testing if the noise is too large, if you do that (and resample?), I think you are effectively changing the noise distribution. There are some results on bounded noise mechanisms but the analysis is quite complicated. See https://arxiv.org/abs/1602.06028,  https://arxiv.org/abs/2012.03817}
\end{corollary}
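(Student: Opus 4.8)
The plan is to read off the conditions that hold at the iterate $w_k$ returned by the short-step algorithm and combine them with the noise bounds~\eqref{eq:bounded_noise}. By inspection of Algorithm~\ref{alg:dpopt}, the method reaches the \textbf{return} statement at iteration $k$ precisely when neither a gradient step nor a negative curvature step is triggered, i.e.\ when
\[
    \|\tilde g_k\| \le \epsg \quad \text{and} \quad \tilde\lambda_k \ge -\epsH.
\]
Note that on this iteration the condition $\|\tilde g_k\| \le \epsg$ forces the algorithm into the \textbf{else} branch, so $E_k$ is in fact sampled and $\tilde\lambda_k$ is computed; hence the second part of~\eqref{eq:bounded_noise} is available alongside the first.

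For the first-order condition, I would use the triangle inequality on $g_k = \tilde g_k - \varepsilon_k$ together with the termination bound on $\|\tilde g_k\|$ and the noise bound $\|\varepsilon_k\| \le c_1\epsg$ (implied by the minimum in~\eqref{eq:bounded_noise}):
\[
    \|g_k\| \le \|\tilde g_k\| + \|\varepsilon_k\| \le \epsg + c_1\epsg = (1+c_1)\epsg.
\]
Here I would \emph{not} invoke the bound $\|g_k\| \le (1+c_1)\|\tilde g_k\|$ from~\eqref{eq:close}, since that inequality is tailored to step-taking iterations where $\|\tilde g_k\| > \epsg$; at termination it is cleaner to bound $\|\tilde g_k\|$ by $\epsg$ directly.

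For the second-order condition, I would apply Weyl's eigenvalue perturbation inequality to $H_k = \tilde H_k - E_k$, giving $\lambda_k \ge \tilde\lambda_k - \|E_k\|$, and then substitute the termination bound $\tilde\lambda_k \ge -\epsH$ and the noise bound $\|E_k\| \le c\,\epsH$:
\[
    \lambda_k \ge \tilde\lambda_k - \|E_k\| \ge -\epsH - c\,\epsH = -(1+c)\epsH.
\]
Together these two displays show that $w_k$ satisfies the defining inequalities of a $((1+c_1)\epsg,\,(1+c)\epsH)$-2S. I do not anticipate a genuine obstacle: the argument is a direct combination of the termination test with the assumed noise bounds via the triangle and Weyl inequalities. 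The only point requiring care is to use the \emph{termination} inequalities ($\|\tilde g_k\| \le \epsg$ and $\tilde\lambda_k \ge -\epsH$) rather than the step-taking inequalities that drive the MIN\_DEC estimate in Lemma~\ref{lem:short_dec}.
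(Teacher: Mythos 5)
Your analysis of the returned iterate is correct, and in fact slightly more careful than the paper's own one-line justification: the paper cites~\eqref{eq:close}, whose derivation in Lemma~\ref{lem:short_dec} uses the step-taking preconditions \(\|\tilde g_k\| > \epsg\) and \(\tilde\lambda_k < -\epsH\), whereas at the terminating iterate those preconditions fail and one should argue exactly as you do — via the triangle inequality \(\|g_k\| \le \|\tilde g_k\| + \|\varepsilon_k\| \le (1+c_1)\epsg\) and Weyl's inequality \(\lambda_k \ge \tilde\lambda_k - \|E_k\| \ge -(1+c)\epsH\). Your observation that the termination branch does sample \(E_k\), so the Hessian noise bound is genuinely available there, is also a point worth making explicit.

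There is, however, a genuine gap: you prove only the conditional statement ``\emph{if} the algorithm reaches the \textbf{return} statement, the returned point is a \(((1+c_1)\epsg,(1+c)\epsH)\)-2S.'' The corollary asserts that the algorithm \emph{will} output such a point, and Algorithm~\ref{alg:dpopt} returns only from inside the \textbf{for} loop; if all \(T\) iterations take a gradient or negative-curvature step, the loop exhausts its budget and no guaranteed output is produced. The paper's proof spends its first half on exactly this termination argument: by Lemma~\ref{lem:short_dec}, every non-terminating iteration decreases \(f\) by at least \(\text{MIN\_DEC}\), and since \(f \ge \underline{f}\) the number of such iterations is at most \((f(w_0)-\underline{f})/\text{MIN\_DEC}\), which is bounded by the \(T\) of~\eqref{eq:T} because \(|z|\ge 0\). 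Hence the termination test must be triggered at some \(k < T\). This counting step — the entire reason MIN\_DEC in~\eqref{eq:short_dec} and the definition of \(T\) appear in the corollary's hypotheses — must precede your (correct) analysis of the returned iterate.
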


%% \sw{In the result below, you do not seem to account for the possibility that the noise exceeds the bounds~\eqref{eq:bounded_noise}.}
With the results above, we now analyze the guarantees of the fixed step-size algorithm under the ERM setting.

% \gcy{TODO: Sample dependence? Complexity? (not too accurate)}
\begin{theorem}[Sample complexity of the short step algorithm]\label{thm:short_thm}
    Consider the ERM setting. Suppose that the number of samples \(n\) satisfies \(n \ge n_{\min}\), where
    \[ \label{min_n}
        n_{\min} :=  \max \left(
        \frac{\sqrt{2d} B_g \sigma_g \log \frac{T}{\zeta}}{\min \left(c_{1} \epsg, \, \frac{c_{2}}{M} \epsH^{2}\right)},
        \frac{C\sqrt{d} B_H \sigma_H \log \frac{T}{\zeta}}{c \, \epsH}
        \right).
    \]
    With probability at least \(\{(1 - \frac{\zeta}{T})(1 - C\exp{(-C_1Cd)})\}^{T}\) where \(C\) and \(C_1\) are universal constants in Lemma~\ref{lem:matrix_concentration},
    the output of the short step version (using~\eqref{eq:short_step},\eqref{eq:short_dec}) of the algorithm is a \(((1+c_{1}) \epsg, (1+c) \epsH)\)-2S.

    With the choice of \(\sigma\)'s in~\eqref{eq:short_sigma}, hiding logarithmic terms and constants, the asymptotic dependence of \(n_{\min}\) on \((\epsg, \epsH)\), \(\rho\) and \(d\), is

    % \gcy{the max and \(Omega\) look weird. What notation should I use instead?}
    \begin{equation}
        n_{\min} = \frac{\sqrt{d}}{\sqrt{\rho}} \, \tilde O \left(
        \max \left(
            \epsg^{-2}, \epsg^{-1} \epsH^{-2}, \epsH^{-7/2}
            \right)
        \right).
    \end{equation}
\end{theorem}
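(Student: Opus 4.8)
The plan is to reduce the statement to the deterministic guarantees already established in Lemma~\ref{lem:short_dec} and Corollary~\ref{cor:output}, by showing that the bounded-noise conditions~\eqref{eq:bounded_noise} hold simultaneously across all iterations with the claimed probability. Concretely, at each iteration I would control two independent events: that the Gaussian gradient perturbation $\varepsilon_k$ is small in $\ell_2$ norm, and that the symmetric Gaussian Hessian perturbation $E_k$ is small in operator norm. Once these hold at every iteration, Corollary~\ref{cor:output} immediately certifies that the returned iterate is a $((1+c_1)\epsg,(1+c)\epsH)$-2S, while the per-step decrease~\eqref{eq:short_dec} together with the definition~\eqref{eq:T} of $T$ guarantees termination within $T$ iterations.

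For the gradient I would use a standard concentration bound for the norm of a $d$-dimensional Gaussian: since $\varepsilon_k \sim \mathcal{N}(0,\Delta_g^2\sigma_g^2 I_d)$, we have $\|\varepsilon_k\| \le \Delta_g\sigma_g(\sqrt{d}+\sqrt{2\log(T/\zeta)})$ with probability at least $1-\zeta/T$. Substituting $\Delta_g = 2B_g/n$ and requiring the right-hand side to be at most $\min(c_1\epsg,\tfrac{c_2}{M}\epsH^2)$ yields, after absorbing the $\sqrt{d}$ and logarithmic factors, the first term of $n_{\min}$ and establishes the first half of~\eqref{eq:bounded_noise} with failure probability $\zeta/T$.

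For the Hessian, the key tool is the matrix concentration result in Lemma~\ref{lem:matrix_concentration}, which bounds the operator norm of a symmetric Gaussian random matrix with high probability; applied to $E_k$ (with per-entry scale $\Delta_H\sigma_H$ and $\Delta_H = 2B_H\sqrt{d}/n$) it gives $\|E_k\| \le c\,\epsH$ except with probability $C\exp(-C_1Cd)$, and solving for $n$ produces the second term of $n_{\min}$. Since fresh, independent noise is drawn at each of the at most $T$ iterations, the probability that both bounds hold at every iteration is at least $\{(1-\zeta/T)(1-C\exp(-C_1Cd))\}^T$, matching the claimed success probability. I expect this matrix-concentration step to be the main obstacle: one must track the dimension dependence carefully so that the operator-norm tail, rather than the much weaker Frobenius bound, controls $\|E_k\|$, and confirm that the resulting requirement on $n$ scales only as $\sqrt{d}$.

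Finally, for the asymptotic form I would substitute the privacy calibration~\eqref{eq:short_sigma}. From~\eqref{eq:short_dec} we have $\textnormal{MIN\_DEC}=\Theta(\min(\epsg^2,\epsH^3))$, so $T = O(\max(\epsg^{-2},\epsH^{-3}))$ and hence $\sigma_g=\sigma_H=\sqrt{T/((1-c_f)\rho)} = \tilde O(\rho^{-1/2}\max(\epsg^{-1},\epsH^{-3/2}))$. Plugging this into the gradient term of $n_{\min}$ gives $\frac{\sqrt d}{\sqrt\rho}\,\tilde O\big(\max(\epsg^{-1},\epsH^{-3/2})\cdot\max(\epsg^{-1},\epsH^{-2})\big)$, and expanding the product of maxima while discarding the dominated term $\epsg^{-1}\epsH^{-3/2}$ leaves $\max(\epsg^{-2},\epsg^{-1}\epsH^{-2},\epsH^{-7/2})$. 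The Hessian term contributes only $\frac{\sqrt d}{\sqrt\rho}\,\tilde O(\max(\epsg^{-1}\epsH^{-1},\epsH^{-5/2}))$, which is dominated term-by-term by the gradient contribution, so the final bound is $n_{\min}=\frac{\sqrt d}{\sqrt\rho}\,\tilde O(\max(\epsg^{-2},\epsg^{-1}\epsH^{-2},\epsH^{-7/2}))$, as claimed. The only remaining routine check is that $\epsH\le 1$ (so that $\epsH^{-2}\ge\epsH^{-3/2}$) when comparing the exponents.
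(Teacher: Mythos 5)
Your proposal follows essentially the same route as the paper's proof: bound $\|\varepsilon_k\|$ and $\|E_k\|$ per iteration via Gaussian norm concentration and the Wigner-ensemble tail bound of Lemma~\ref{lem:matrix_concentration}, solve for $n$ so that these bounds sit below the thresholds in~\eqref{eq:bounded_noise}, take the product of the per-iteration success probabilities over at most $T$ iterations, invoke Corollary~\ref{cor:output}, and then substitute~\eqref{eq:short_sigma} with $\sqrt{T}=O(\max(\epsg^{-1},\epsH^{-3/2}))$ to get the asymptotic form. The only cosmetic difference is that you quote the $\sqrt{d}+\sqrt{2\log(T/\zeta)}$ form of the Gaussian norm bound rather than the $\sqrt{2d}\,\sigma\log(T/\zeta)$ form stated in the paper's concentration lemma, which changes nothing up to logarithmic factors.
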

When  \((\epsg,\epsH)  = (\alpha,\sqrt{M \alpha})\), the dependence simplifies to \(\frac{\sqrt{d}}{\sqrt{\rho}} \, \tilde O (\alpha^{-2})\).

Before proving Theorem~\ref{thm:short_thm}, we introduce two concentration results.
%% \sw{Give citations.}
%% \gcy{I am not sure about the source of this version of the concentration result. Di Wang cited this together with the next result. But I guess it is derived from something like Hanson-Wright inequality. I will cite Vershynin's book for now.}
\begin{lemma}[Gaussian concentration, \citep{vershyninHighDimensionalProbabilityIntroduction2018}]
    For \(x \sim \mathcal{N}\left(0, \sigma^{2} I_{d}\right)\), with probability at least \(1-\eta\) for any \(1>\eta>0\), we have
    \[
        \|x\| \leq \sqrt{2 d} \sigma \log \frac{1}{\eta}.
    \]
\end{lemma}

\begin{lemma}[Upper tail estimate for Wigner ensembles {\citep[p. 110]{taoTopicsRandomMatrix2012}}]\label{lem:matrix_concentration}
    Let \(M=(m_{i j})_{1 \le i,j \le d}\) be an \(d \times d\) random symmetric matrix. Suppose that the coefficients \(m_{i j}\) of \(M\) are independent for \(j \geq i\), mean zero, and have uniform sub-Gaussian tails.
    % \sw{What does the "bar" mean? These are not complex variables are they?}
    There exist universal constants \(C, C_1>0\) such that for all \(A \geq C\), we have
    \[
        \mathbf{P}\left(\|M\| > A \sqrt{d}\right) \le C \exp (-C_1 A d).
    \]
\end{lemma}

\begin{proof}
(Theorem~\ref{thm:short_thm})  It follows from concentration results that, in iteration \(k\), with probability at least \((1 - \frac{\zeta}{T})(1 - C\exp{(-C_1Cd)})\), we have
\begin{subequations}\label{gauss_concentration}
    \begin{align}
        \|\varepsilon_k\| & \le \sqrt{2d} \Delta_g \sigma_g \log \frac{T}{\zeta}, \\
        \|E_k\|           & \le C \sqrt{d} \Delta_H \sigma_H.
    \end{align}
\end{subequations}
We need to find a condition on \(n\) that ensures that the right-hand sides are less than the right-hand sides of~\eqref{eq:bounded_noise}.
We substitute for \(\Delta_g\) and \(\Delta_H\) from~\eqref{eq:sensitivity} and solve for \(n_{\min}\) by rearranging the terms. The result then follows from Corollary~\ref{cor:output} if the concentration results hold for all iterations.

Now let us calculate the success probability. For each iteration, we have a probability of at least \((1 - \frac{\zeta}{T})(1 - C\exp{(-C_1Cd)})\) that the concentration results hold (if we do not compute the perturbed Hessian, the probability is higher with at least \(1 - \frac{\zeta}{T}\)). Using conditional probability, the overall success probability is \(\{(1 - \frac{\zeta}{T})(1 - C\exp{(-C_1Cd)})\}^{\tau}\) conditioned on the number of iterations \(\tau\). Since \(\tau \le T\), the overall success probability is at least \(\{(1 - \frac{\zeta}{T})(1 - C\exp{(-C_1Cd)})\}^T\).

For the second part, recall from~\eqref{eq:short_sigma} that \(\sigma_g = \sigma_H = \sqrt{T}\,/\sqrt{(1-c_f)\rho}\).
With~\eqref{eq:short_dec} and our choice of \(T\) in~\eqref{eq:T}, we have \(\sqrt{T} = O(\max(\epsg^{-1}, \epsH^{-3/2}))\). We obtain the asymptotic bound of \(n_{\min}\) by plugging in \(\sigma_g\) and \(\sigma_H\).
\end{proof}

\begin{remark}
    When the conditions~\eqref{eq:bounded_noise} do not hold, the algorithm could fail to converge to a \(((1 + c_{1}) \epsg, (1+c) \epsH)\)-2S.\@
    First, the noise in the perturbed gradient and Hessian can be so large that the step is not a descent direction. Second, due to the noise, we may terminate early or fail to terminate timely when checking the approximate second-order conditions. When we terminate early and the noise is not excessive, the solution can still be acceptable since the noisy evaluations satisfy the termination conditions.
\end{remark}
\subsection{Line Search Algorithm}
Instead of using a conservative fixed step size, we can do a line search using backtracking.
The backtracking line search requires an initial value \(\gamma_0\), a decrease parameter \(\beta \in (0,1)\) for the step size, and constants \(c_g \in (0,1-c_1)\), \(c_{H} \in (0, 1-c-\sqrt{\frac{8}{3} c_{2}})\) that determine the amount of decrease we need.
Each line search tries in succession the values \(\gamma_0, \beta \gamma_0, \beta^2 \gamma_0, \dotsc\), until a value is found that satisfies the sufficient decrease condition.
For gradient steps, the condition is
\[
    f(w-\gamma \tilde{g}) \le f(w) - c_g\gamma \|\tilde{g}\|^2,
    \tag{SD1} \label{SD1}
\]
while for negative curvature steps it is
\[
    f(w+\gamma \tilde p) \le f(w)-\frac{1}{2} c_{H} \gamma^{2}|\tilde{\lambda}|.
    \tag{SD2} \label{SD2}
\]
To make line search differentially private, we use the sparse vector technique (SVT) from \citep{dworkAlgorithmicFoundationsDifferential2014}.
We define queries according to~\eqref{SD1} and~\eqref{SD2}:
\begin{subequations}\label{eq:defq}
    \begin{align}
        \label{eq:defqg}
        q_g(\gamma, w) & = f(w) - f(w-\gamma \tilde{g}) - c_g\gamma \|\tilde{g}\|^2,                    \\
        \label{eq:defqH}
        q_H(\gamma, w) & = f(w) - f(w+\gamma \tilde p) - \frac{1}{2} c_{H} \gamma^{2}|\tilde{\lambda}|,
    \end{align}
\end{subequations}
whose nonnegativity corresponds to each of the sufficient decrease conditions.
%% For simplicity we will drop \(w\), the subscript \(g, H\) and write \(q(\gamma)\). If \(q \ge 0\), the sufficient decrease condition is satisfied. 

Algorithm~\ref{svt} specifies the differentially private line search algorithm using SVT, which is adapted from \texttt{AboveThreshold} algorithm \citep{dworkAlgorithmicFoundationsDifferential2014}.

% \sw{Discuss more about this algorithm. For example, say that it is trying to get a more substantial improvement in the function value than the reductions promised by the (conservative) step sizes of Algorithm 1. Say that a "fallback" step size is used in case backtracking does not find a value that satisfies the sufficient decrease criteria. Say that this fallback step size is based on the short-step values used in Algorithm 1.}
% \gcy{See below. The fallback step size is not exactly the same as the short step size, but only up to a constant.}
% \gcy{Btw I changed all steplength -> step size; change the step size as well?}
% \sw{Here is the place to have a discussion about Alg 3. You can walk the reader through the main points, in particular the way in which the line search algorithm is deployed. Motivate the choices of parameters used in the calls to LineSearch.}
By satisfying the sufficient decrease condition, we try to get a more substantial improvement in the function value than that for the short step algorithm.
As a fallback strategy, we use step sizes similar to the short step values (differing only by a constant factor) if the line search fails, yielding a similar decrease to the short-step case.
We state the complete algorithm enhanced with line search in Algorithm~\ref{alg:dpopt_ls}.
In the algorithm, we compute the fall back step size \(\bar \gamma\) and use a multiplier \(b\) (\(b > 1\)) of them as the initial step size \(b \bar \gamma\) for the line search.
We compute the query sensitivity \(\Delta_q\) accordingly and call the private line search subroutine to find a step size \(\gamma\) that satisfies the sufficient decrease conditions.

% \gcy{Again here do you think I should move this to later when I talk about the algorithm?
%     In the algorithm, I can either do a forward ref to the equation or state it again (but takes up more space).} We set
% \begin{equation} \label{eq:ls_dec}
%     \text{MIN\_DEC} =
%     \min \left(
%     \frac1G (1-c_1-c_g) c_g \epsg^2,
%     \frac14 c_{H} t_2^2 \frac{\epsH^{3}}{M^{2}}
%     \right),
% \end{equation}
% where \(t_2\) is a positive constant defined in~\eqref{eq:t2}.

\begin{algorithm}[!htb]
    \caption{Private backtracking line search using SVT}\label{svt}
    \begin{algorithmic}
        % \Procedure{Euclid}{\(a,b\)}\Comment{The g.c.d. of a and b}
        \State \textbf{Given:} query \(q\) and its sensitivity \(\Delta_q\), initial step size multiplier \(b\), fall back step size \(\bar \gamma\), decrease parameter \(\beta\), privacy parameter \(\lambda\)
        \vspace{0.1in}
        \State \textbf{function} DP-LineSearch(\(q, \Delta_q, \gamma^{\text{init}}, \bar \gamma, \beta, \lambda\))
        \State Initialize \(\gamma \leftarrow \gamma^{\text{init}}\).
        \State Sample \(\xi \sim \lap{2\lambda\Delta_q} \)
        \For{\(i=1,2, \ldots, i_{\max} = \lfloor\log_{\beta}\frac{\bar \gamma}{\gamma^{\text{init}}}\rfloor + 1\)}
        \State Sample \( \nu_i \sim \lap{4\lambda\Delta_q}\)
        \State Evaluate \(q_i = q(\gamma)\)  and \(\tilde q_i = q_i + \nu_i\)
        \If{\(\tilde q_i \ge \xi\)}
        \State HALT and output \(\gamma\)
        \Else
        \State \(\gamma \gets \beta \gamma\)
        \EndIf
        \EndFor
        \State HALT and output \(\bar \gamma\).
    \end{algorithmic}
\end{algorithm}

\begin{algorithm}[!htb]
    \caption{DP Optimization algorithm with Second-Order Guarantees and Backtracking Line Search}\label{alg:dpopt_ls}
    \begin{algorithmic}
        % \Procedure{Euclid}{\(a,b\)}\Comment{The g.c.d. of a and b}
        \State \textbf{Given:} noise bound parameters \(c_1\), \(c_2\), \(c\), sufficient decrease parameters \(c_g\), \(c_H\) and initial step size multipliers \(b_g\), \(b_H\), line search decreasing parameters \(\beta_g, \beta_H\), tolerances \(\epsilon_g\) and \(\epsilon_H\), noise parameters \(\sigma_f\), \(\sigma_g\), \(\sigma_H\) and \(\lambda_{SVT}\)
        %% \gcy{accuracy paras?}
        %% \sw{What else is "given" here? Many many  parameters. For example, \(b_g\) and \(b_H\) appear out of nowhere without motivation. What are \(\beta_g\) and \(\beta_{k,H}\)?}
        %% \gcy{I added a intro above.}
        \vspace{0.1in}
        \State Initialize \(w_0\), sample \(z \sim \mathcal{N}(0, \Delta_f^2 \sigma_f^2)\) and compute MIN\_DEC according to~\eqref{eq:ls_decrease}
        \State Compute an upper bound of the required number of iterations
        \(
        T = \left\lceil\frac{f(w_0) + |z| - \underline{f}}{\text{MIN\_DEC}}\right\rceil
        \)
        \State \(\bar \gamma_g \gets 2\left(1-c_{1}-c_{g}\right)/G\)
        \For{\(k = 0,1, \ldots, T-1\)}
        \State Sample \(\varepsilon_k \sim \mathcal{N}\left(0, \Delta_g^2 \sigma_g^2 I_d\right)\)
        \State Compute the perturbed gradient \(\tilde g_{k} = g_{k}+\varepsilon_{k}\)
        \If{\(\left\|\tilde g_k\right\| > \epsilon_{g}\)}
            \State Define \(q_{k, g}(\gamma) = f(w_k) - f(w_k - \gamma \tilde g_k) - c_g\gamma \|\tilde{g}_k\|^2\)
            \State \(\gamma_{k, g}^{\text{init}} \gets b_g \bar \gamma_g\), \(\; \Delta_{q_{k,g}} \gets \frac{2}{n}\gamma_{k, g}^{\text{init}} B_g \|\tilde{g}_k\|\) \Comment{Line search query sensitivity}
            \State \(\gamma_{k,g} \gets \textsc{DP-LineSearch}(q_{k, g}, \Delta_{q_{k, g}}, \gamma_{k, g}^{\text{init}}, \bar \gamma_g, \beta_g, \lambda_{SVT})\) \Comment{Backtracking line search}
            \State \(w_{k+1} \gets w_k -  \gamma_{k, g} \tilde g_{k}\) \Comment{Gradient step}
        \Else
            \State Sample \(E_k\) such that \(E_k\) is a \(d \times d\) symmetric matrix in which
            each entry on and above its diagonal is i.i.d.\ as \(\mathcal{N}\left(0, \Delta_H^2 \sigma_H^2\right)\)
            \State Compute perturbed Hessian \(\tilde H_k = H_k + E_k\)
            \State Compute the minimum eigenvalue of \(\tilde H_k\) and the corresponding eigenvector \((\tilde \lambda_k, \tilde p_k)\) satisfying~\eqref{eq:min_eig}
            \If{\(\tilde \lambda_{k} < -\epsH\)}
                \State Define \(q_{k, H}(\gamma) = f(w_k) - f(w_k + \gamma \tilde p_k) - \frac{1}{2} c_{H} \gamma^{2}|\tilde{\lambda}_k|\)
                \State \(\bar \gamma_{k,H} \gets t_2|\tilde{\lambda}_k| / {M}\), \(\;\gamma_{k,H}^{\text{init}} \gets b_H \bar \gamma_{k,H}\), \(\; \Delta_{q_{k,H}} \gets \frac{2}{n}\gamma_{k, H}^{\text{init}} B_g\)
                \State 
                \(\gamma_{k,H} \gets \textsc{DP-LineSearch}(q_{H}, \Delta_{q_{k_{H}}}, \gamma_{k,H}^{\text{init}}, \bar \gamma_{k,H}, \beta_H, \lambda_{SVT})\) \Comment{Backtracking line search}
                \State \(w_{k+1} \gets w_k + \gamma_{k, H} \tilde p_k\) \Comment{Negative curvature step}
                \Else
                \State {\bfseries return} \(w_k\)
            \EndIf
        \EndIf
        \EndFor
    \end{algorithmic}
\end{algorithm}

We have the following privacy guarantees.
\begin{theorem}
    Suppose that \(\sigma_f\), \(\sigma_g\), \(\sigma_H\), and \(\lambda\) are given.
    Suppose an actual run of the line search algorithm takes \(k_g\) gradient steps and \(k_H\) negative curvature steps. The run is \(\rho\)-\(z\)CDP where
    % \sw{Here again is that term "data-dependent" that I don't understand. I believe it was removed in an earlier theorem.}
    \(
    \rho = \frac 12 \left(\frac{1}{\sigma_f^2} +
    \frac{k_g + k_H}{\sigma_g^2} + \frac{k_H}{\sigma_H^2} + \frac{k_g + k_H}{\lambda^2}
    \right).
    \)

    Recall that \(T\) is the maximum number of iterations defined in~\eqref{eq:T}. Let
    \(
    \bar \rho = \frac 12 \left(\frac{1}{\sigma_f^2} +
    \frac{T}{\sigma_g^2} + \frac{T}{\sigma_H^2} + \frac{T}{\lambda^2}
    \right).
    \)
    We always have \(\bar \rho \ge \rho\), so the algorithm is \(\bar \rho\)-\(z\)CDP.\@
    Conversely, for given \(\rho > 0\) and \(\rho_f \in (0, \rho)\), we can choose
    \begin{equation}
        \sigma_f^2 = \frac{1}{2\rho_f}, \quad \sigma_g^2 = \sigma_H^2 = \lambda^2 = \frac{3T}{2(\rho - \rho_f)}, \label{eq:ls_sigma}
    \end{equation}
    to ensure that algorithm is \(\rho\)-\(z\)CDP.\@
\end{theorem}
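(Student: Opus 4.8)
The plan is to reduce the whole run to an adaptive composition of four kinds of privatized queries and account for each under $z$CDP, exactly mirroring the short-step privacy theorem but with the sparse-vector line search as the one genuinely new ingredient. First I would fix the per-mechanism costs. The function-value perturbation $z\sim\mathcal{N}(0,\Delta_f^2\sigma_f^2)$ is a Gaussian mechanism calibrated to sensitivity $\Delta_f$, hence $\tfrac{1}{2\sigma_f^2}$-$z$CDP, and the iteration bound $T$ in~\eqref{eq:T} together with the noise scales is post-processing of its output. Likewise each perturbed gradient $\tilde g_k=g_k+\varepsilon_k$ with $\varepsilon_k\sim\mathcal{N}(0,\Delta_g^2\sigma_g^2 I_d)$ is $\tfrac{1}{2\sigma_g^2}$-$z$CDP and each perturbed Hessian $\tilde H_k=H_k+E_k$ is $\tfrac{1}{2\sigma_H^2}$-$z$CDP, using the sensitivities $\Delta_g,\Delta_H$ from~\eqref{eq:sensitivity}; the eigenpair $(\tilde\lambda_k,\tilde p_k)$ and all branch decisions are post-processing of these releases.

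The new step is the line search. Each call to \textsc{DP-LineSearch} is an instance of \texttt{AboveThreshold}: it perturbs the (zero) threshold by $\xi\sim\lap{2\lambda\Delta_q}$ and each query by $\nu_i\sim\lap{4\lambda\Delta_q}$, halting at the first index whose noisy query clears the noisy threshold (or falling back to $\bar\gamma$). I would first confirm that $\Delta_q$ as set in Algorithm~\ref{alg:dpopt_ls} is a valid uniform upper bound on the $\ell_2$-sensitivity of the query over every $\gamma$ tried: the penalty terms $c_g\gamma\|\tilde g_k\|^2$ and $\tfrac12 c_H\gamma^2|\tilde\lambda_k|$ are data-independent given the already-released $\tilde g_k,\tilde\lambda_k,\tilde p_k$, while $|f(w)-f(w\mp\gamma v)|$ changes by at most $\tfrac{2}{n}\gamma B_g\|v\|$ under a single-record change since the per-sample loss is $B_g$-Lipschitz along the step; because backtracking only decreases $\gamma$ from $\gamma^{\text{init}}$, evaluating the sensitivity at $\gamma^{\text{init}}$ dominates every query. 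With this sensitivity the standard \texttt{AboveThreshold} guarantee (threshold and query noise scales $2\Delta_q/\epsilon_{\mathrm{svt}}$ and $4\Delta_q/\epsilon_{\mathrm{svt}}$ yielding $\epsilon_{\mathrm{svt}}$-DP) shows each call is $(1/\lambda)$-DP, \emph{independently of how many backtracking evaluations it performs}; converting pure DP to $z$CDP gives $\tfrac{1}{2\lambda^2}$-$z$CDP per line search.

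I would then assemble the bound by the additive adaptive composition of $z$CDP, which allows each mechanism, and the branch taken, to depend on previously released outputs. A realized run invokes the gradient mechanism and a line search once per step, i.e. $k_g+k_H$ times each, and the Hessian mechanism only on the $k_H$ curvature steps; adding the single function-value cost yields $\rho=\tfrac12\big(\tfrac{1}{\sigma_f^2}+\tfrac{k_g+k_H}{\sigma_g^2}+\tfrac{k_H}{\sigma_H^2}+\tfrac{k_g+k_H}{\lambda^2}\big)$. Since every loop runs at most $T$ iterations, each count is bounded by $T$, giving $\bar\rho\ge\rho$ and the rigorous $\bar\rho$-$z$CDP guarantee. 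For the converse, substituting~\eqref{eq:ls_sigma} makes the function term equal $\rho_f$ and each of the three remaining terms equal $\tfrac13(\rho-\rho_f)$, so $\bar\rho=\rho$.

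The main obstacle is not the composition—routine once the pieces are in place—but the line-search sensitivity claim, where the advertised tightening lives: I must ensure the Laplace noise in \textsc{DP-LineSearch} is calibrated to a sensitivity that holds uniformly over all $\gamma$ examined, so that the \texttt{AboveThreshold} privacy proof applies verbatim. A secondary point requiring care is that both $T$ and the per-type counts $k_g,k_H$ are data-dependent; I resolve this by treating the realized $\rho$ as accounting only and proving the guarantee at the worst-case value $\bar\rho$ obtained by bounding every count by $T$, which absorbs any discrepancy (including the gradient and Hessian evaluations spent on the terminating iteration).
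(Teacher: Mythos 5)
Your proposal is correct and follows essentially the same route as the paper's (much terser) proof: treat each Gaussian perturbation of $f$, $g$, $H$ as a $\tfrac{1}{2\sigma^2}$-$z$CDP mechanism, treat each \textsc{DP-LineSearch} call as a $(1/\lambda)$-DP instance of \texttt{AboveThreshold} and convert it to $\tfrac{1}{2\lambda^2}$-$z$CDP, then apply adaptive composition and post-processing, with the realized counts bounded by $T$ for the worst-case guarantee. Your added care about the uniform sensitivity bound over the backtracking schedule and the Hessian evaluation on the terminating iteration is sound and merely makes explicit what the paper leaves implicit.
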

\begin{proof}
    We know that SVT is \((1 / \lambda)\)-DP.\@ Thus, it satisfies \((1 / (2\lambda^2))\)-\(z\)CDP.\@ The result follows directly from the \(z\)CDP guarantee for the Gaussian mechanism combined with postprocessing and composition of \(z\)CDP.\@
\end{proof}

%% \gcy{Repeat here. Delete?}
%% \begin{remark}
%% For the same reason as in the short step case, \(\bar \rho\) overestimates the actual privacy leakage \(\rho\).
%% \end{remark}

We now discuss the guarantee of the output of the algorithm. We first derive necessary conditions for sufficient decrease.
\begin{lemma}\label{lem:sd}
    Assume the same bounded noise conditions~\eqref{eq:bounded_noise} as before. With the choice of sufficient decrease coefficients \(c_g \in (0, 1 - c_1), c_{H} \in (0, 1-c-\sqrt{\frac{8}{3} c_{2}})\), let \(\bar \gamma_g = 2\left(1-c_{1}-c_{g}\right)/G \) and
    \(\bar \gamma_H = t_2|\tilde{\lambda}| / {M}\) as defined in Algorithm~\ref{alg:dpopt_ls}, the sufficient decrease conditions~\eqref{SD1} and~\eqref{SD2} are satisfied when \(\gamma \le \bar \gamma_g\) and \(\gamma \in [(t_1 / t_2) \bar \gamma_H, \bar \gamma_H]\), respectively,
    where \(0 < t_1 <  t_2\) are solutions to the following quadratic equation (given our choice of \(c, c_2, c_H\), real solutions exist),
    \[
        r(t) := -\frac{1}{6} t^{2}+\frac{1}{2}\left(1-c-c_{H}\right) t-c_{2} = 0,
    \]
    Explicitly, we have
    \begin{equation} \label{eq:t2}
        t_{1}, t_{2}=\frac{3}{2}\left(1-c-c_{H}\right) \pm 3 \sqrt{\frac{1}{4}(1-c - c_H)^{2}-\frac{2}{3} c_{2}}.
    \end{equation}
    In particular, we have \(q_g(\bar \gamma_g) \ge 0\) and \(q_H(\bar \gamma_H) \ge 0\).
\end{lemma}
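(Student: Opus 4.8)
The plan is to prove the two sufficient-decrease conditions separately, in each case starting from the appropriate smoothness bound and then absorbing the perturbation using \eqref{eq:bounded_noise} together with the thresholds that trigger each step type. For the gradient step (SD1), I would begin from the $G$-smoothness descent inequality
\[
    f(w_k - \gamma \tilde g_k) \le f_k - \gamma g_k^T \tilde g_k + \tfrac{G}{2}\gamma^2 \|\tilde g_k\|^2,
\]
and substitute $g_k = \tilde g_k - \varepsilon_k$ to write $g_k^T \tilde g_k = \|\tilde g_k\|^2 - \varepsilon_k^T \tilde g_k$. Since a gradient step is taken only when $\|\tilde g_k\| > \epsg$, the bound $\|\varepsilon_k\| \le c_1 \epsg$ gives $\varepsilon_k^T \tilde g_k \le c_1 \|\tilde g_k\|^2$ by Cauchy--Schwarz. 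Collecting terms, $q_g(\gamma) \ge \gamma \|\tilde g_k\|^2 \bigl[(1 - c_1 - c_g) - \tfrac{G}{2}\gamma\bigr]$, which is nonnegative precisely when $\gamma \le \bar\gamma_g = 2(1-c_1-c_g)/G$; the hypothesis $c_g < 1 - c_1$ ensures $\bar\gamma_g > 0$, and $\gamma = \bar\gamma_g$ yields $q_g(\bar\gamma_g) \ge 0$.

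For the negative curvature step (SD2), which I expect to be the main obstacle, I would use the cubic upper bound from the $M$-Lipschitz Hessian assumption together with $\|\tilde p_k\| = 1$:
\[
    f(w_k + \gamma \tilde p_k) \le f_k + \gamma g_k^T \tilde p_k + \tfrac{\gamma^2}{2} \tilde p_k^T H_k \tilde p_k + \tfrac{M}{6}\gamma^3.
\]
Both inner-product terms must be controlled. The sign convention \eqref{eq:min_eig}, i.e. $\tilde g_k^T \tilde p_k \le 0$, combined with $g_k = \tilde g_k - \varepsilon_k$, bounds the linear term by $g_k^T \tilde p_k \le \|\varepsilon_k\|$; and writing $H_k = \tilde H_k - E_k$, using $\tilde p_k^T \tilde H_k \tilde p_k = \tilde\lambda_k$ with $|\tilde p_k^T E_k \tilde p_k| \le \|E_k\| \le c\,\epsH$ and the curvature threshold $|\tilde\lambda_k| > \epsH$, bounds the quadratic term by $\tilde p_k^T H_k \tilde p_k \le -(1-c)|\tilde\lambda_k|$. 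Finally, the bound $\|\varepsilon_k\| \le \tfrac{c_2}{M}\epsH^2 \le \tfrac{c_2}{M}|\tilde\lambda_k|^2$ renders every term homogeneous in $|\tilde\lambda_k|$.

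The decisive step is then the substitution $\gamma = t|\tilde\lambda_k|/M$. After this change of variable every surviving term carries the common factor $|\tilde\lambda_k|^3/M^2$, and the lower bound on $q_H$ collapses to $q_H \ge \tfrac{t|\tilde\lambda_k|^3}{M^2}\, r(t)$ with exactly the stated quadratic $r(t) = -\tfrac16 t^2 + \tfrac12(1-c-c_H)t - c_2$. Since $r$ opens downward with root product $6c_2 > 0$ and root sum $3(1-c-c_H) > 0$, both roots $t_1 < t_2$ are positive and $r(t) \ge 0$ precisely on $[t_1, t_2]$. Translating back, SD2 holds for $\gamma \in [t_1|\tilde\lambda_k|/M,\; t_2|\tilde\lambda_k|/M] = [(t_1/t_2)\bar\gamma_H,\; \bar\gamma_H]$, with $\gamma = \bar\gamma_H$ (i.e. $t = t_2$) giving $q_H(\bar\gamma_H) \ge 0$. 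The real-root condition $\tfrac14(1-c-c_H)^2 - \tfrac23 c_2 \ge 0$ is exactly the hypothesis $c_H < 1 - c - \sqrt{\tfrac83 c_2}$, and the quadratic formula delivers the explicit expression \eqref{eq:t2}. The care points I would watch are the chain of inequalities replacing $\epsg,\epsH$ by $\|\tilde g_k\|, |\tilde\lambda_k|$ (legitimate only because of the step-selection thresholds) and confirming that the fallback value $\bar\gamma_H$ sits at the upper endpoint $t_2$ of the admissible interval rather than outside it.
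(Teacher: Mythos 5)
Your proposal is correct and follows essentially the same route as the paper's proof: the $G$-smoothness bound with $g=\tilde g-\varepsilon$ for \eqref{SD1}, the cubic Taylor bound with the sign convention \eqref{eq:min_eig} and Weyl-type control of $\tilde p^T E\tilde p$ for \eqref{SD2}, and the reparameterization $\gamma=t|\tilde\lambda|/M$ reducing sufficient decrease to $r(t)\ge 0$ on $[t_1,t_2]$. Your explicit verification that both roots are positive (via the sum and product of roots) and that the discriminant condition matches the hypothesis on $c_H$ is a small addition the paper leaves implicit, but the argument is the same.
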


We now derive the minimum amount of decrease for each iteration.
\begin{lemma}\label{lem:ls_decrease}
    Using DP line search Algorithm~\ref{alg:dpopt_ls}, assume the same bounded noise conditions~\eqref{eq:bounded_noise} as before. With the choice of sufficient decrease coefficients \(c_g \in (0, 1 - c_1), c_{H} \in (0, 1-c-\sqrt{\frac{8}{3} c_{2}})\), define \(\bar \gamma_g\) and \(\bar \gamma_H\) as before. Choose initial step size multipliers \( b_g, b_H > 1 \)
    and decrease parameters \(\beta_g \in (0,1), \beta_H \in (t_1/t_2, 1) \).
    Let \(i_{\max} = \lfloor\log_{\beta}\max (b_g, b_H)\rfloor + 1\). If \(n\) is at least
    \begin{equation} \label{eq:ls_cond}
        16\lambda \left(\log i_{\max}  + \log \frac{T}{\xi}\right) B_g\max \left(
         \frac{2 b_g}{c_g\epsg},
         \frac{4 b_H M}{t_2 c_{H} \epsH^2}
        \right),
    \end{equation}
    with probability at least \(1- \xi / T\),
    the amount of decrease in a single step is at least
    \begin{equation}\label{eq:ls_decrease}
        \textnormal{MIN\_DEC} =
        %     \right) \ge
        \min \left(
        \frac1G (1-c_1-c_g) c_g \epsg^2,
        \frac14 c_{H} t_2^2 \frac{\epsH^{3}}{M^{2}}
        \right).
    \end{equation}
\end{lemma}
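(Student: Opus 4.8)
The plan is to handle gradient steps and negative-curvature steps separately, and in each case to show that the step size $\gamma$ actually returned by \textsc{DP-LineSearch} is large enough, and its query close enough to being satisfied, that the \emph{true} decrease exceeds the claimed bound~\eqref{eq:ls_decrease}. Two structural facts drive the argument. First, by construction the trial step sizes form a geometric sequence lying in $[\bar\gamma,b\bar\gamma]$ and the fallback value is exactly $\bar\gamma$, so whatever the subroutine returns satisfies $\gamma_{k,g}\ge\bar\gamma_g$ (resp.\ $\gamma_{k,H}\ge\bar\gamma_H$). Second, unwinding the query definitions~\eqref{eq:defq} gives, for a gradient step,
\[
  f(w_k)-f(w_{k+1}) = q_{k,g}(\gamma_{k,g}) + c_g\gamma_{k,g}\|\tilde g_k\|^2 ,
\]
and for a negative-curvature step $f(w_k)-f(w_{k+1}) = q_{k,H}(\gamma_{k,H}) + \tfrac12 c_H\gamma_{k,H}^2|\tilde\lambda_k|$. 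Thus it suffices to lower bound the returned query value and to observe that the sufficient-decrease margin term is increasing in $\gamma$, so the worst case is $\gamma=\bar\gamma$.

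The key probabilistic ingredient is an accuracy guarantee for the \texttt{AboveThreshold}/SVT subroutine. Using the Laplace tail on the threshold perturbation (denoted $\xi$ in Algorithm~\ref{svt}; to avoid collision with the failure-probability parameter $\xi$ of this lemma I will call it $\xi_{\mathrm{thr}}\sim\lap{2\lambda\Delta_q}$) and on the per-query perturbations $\nu_i\sim\lap{4\lambda\Delta_q}$, a union bound over the at most $1+i_{\max}$ samples shows that with probability at least $1-\xi/T$ we have $|\xi_{\mathrm{thr}}|,|\nu_i|\le\alpha/2$ for all $i$, where $\alpha = 8\lambda\Delta_q(\log i_{\max}+\log\tfrac{T}{\xi})$. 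On this event I get \emph{soundness}: whenever the subroutine halts at some $\gamma$ because $q(\gamma)+\nu_i\ge\xi_{\mathrm{thr}}$, the true query obeys $q(\gamma)\ge\xi_{\mathrm{thr}}-\nu_i\ge-\alpha$. I do not need the completeness direction: if instead the fallback $\bar\gamma$ is returned, Lemma~\ref{lem:sd} already gives $q(\bar\gamma)\ge0\ge-\alpha$, so the uniform bound $q(\gamma)\ge-\alpha$ holds for the returned step in every case.

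It then remains to convert $q(\gamma)\ge-\alpha$ into the stated decrease. Writing $\alpha_g$ (resp.\ $\alpha_H$) for $\alpha$ evaluated with $\Delta_q=\Delta_{q_{k,g}}$ (resp.\ $\Delta_{q_{k,H}}$), a gradient step with $\gamma_{k,g}\ge\bar\gamma_g$ gives
\[
  f(w_k)-f(w_{k+1}) \ge -\alpha_g + c_g\bar\gamma_g\|\tilde g_k\|^2 .
\]
Substituting $\Delta_{q_{k,g}}=\tfrac2n\gamma_{k,g}^{\text{init}}B_g\|\tilde g_k\|$ with $\gamma_{k,g}^{\text{init}}=b_g\bar\gamma_g$ into $\alpha_g$, the factor $\bar\gamma_g$ and one power of $\|\tilde g_k\|$ cancel, so demanding $\alpha_g\le\tfrac12 c_g\bar\gamma_g\|\tilde g_k\|^2$ reduces (using $\|\tilde g_k\|>\epsg$) exactly to the first term $\tfrac{2b_g}{c_g\epsg}$ in~\eqref{eq:ls_cond} and yields decrease at least $\tfrac12 c_g\bar\gamma_g\epsg^2=\tfrac1G(1-c_1-c_g)c_g\epsg^2$. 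The negative-curvature case is identical in spirit: with $\gamma_{k,H}\ge\bar\gamma_H=t_2|\tilde\lambda_k|/M$ and $|\tilde\lambda_k|>\epsH$ the decrease is at least $-\alpha_H+\tfrac12 c_H t_2^2|\tilde\lambda_k|^3/M^2$, and substituting $\Delta_{q_{k,H}}=\tfrac2n\gamma_{k,H}^{\text{init}}B_g$ with $\gamma_{k,H}^{\text{init}}=b_H\bar\gamma_H$ and requiring $\alpha_H\le\tfrac14 c_H t_2^2|\tilde\lambda_k|^3/M^2$ produces the second term $\tfrac{4b_HM}{t_2 c_H\epsH^2}$, giving decrease at least $\tfrac14 c_H t_2^2\epsH^3/M^2$. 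Taking the minimum of the two guaranteed decreases gives~\eqref{eq:ls_decrease}.

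The main obstacle I expect is pinning down the SVT accuracy bound with the right constants: the Laplace scales, the union bound over the $\le 1+i_{\max}$ queries, and the resulting $\alpha$ must line up so that the $\bar\gamma$ and the data-dependent $\|\tilde g_k\|$ factors cancel cleanly. In particular it is the data-dependent sensitivity $\Delta_{q_{k,g}}\propto\|\tilde g_k\|$ (the \emph{tighter} sensitivity the paper advertises) that makes the sample requirement scale like $\epsg^{-1}$ rather than $\epsg^{-2}$, and this must be tracked carefully. A secondary point worth stating explicitly is \emph{why} soundness alone suffices: since every trial step size lies at or above $\bar\gamma$ and the fallback is provably a descent step by Lemma~\ref{lem:sd}, I never need to argue that the search halts ``in time'', only that it never halts at a spuriously-passing large step, which is exactly what soundness controls.
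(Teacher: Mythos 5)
Your proposal is correct and follows essentially the same route as the paper's proof: the same SVT soundness bound $q(\gamma)\ge -8\lambda\Delta_q(\log i_{\max}+\log\frac{T}{\xi})$, the same observation that the returned step always satisfies $\gamma\ge\bar\gamma$ (with the fallback covered by Lemma~\ref{lem:sd}), and the same cancellation of $\bar\gamma$ and the data-dependent $\|\tilde g_k\|$ factor against the sensitivity to recover the two terms of~\eqref{eq:ls_cond}. The only cosmetic difference is that you derive the SVT accuracy guarantee from the Laplace tails and a union bound, while the paper cites it directly from \citet[Theorem~3.24]{dworkAlgorithmicFoundationsDifferential2014}.
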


With the results above, we can now analyze the guarantees of the line search algorithm under ERM settings.

\begin{theorem}[Sample complexity of the line search algorithm]\label{thm:ls_thm}
    Assuming the same conditions as in the previous lemma, with probability at least \(\{(1-\frac{\zeta}{T})(1 - C\exp{(-C_1Cd)})(1-\xi / T)\}^{T}\), suppose the number of samples \(n\) satisfies \(n \ge n_{\min}\), where
    \begin{equation}\label{min_n_ls}
        n_{\min}  := \max \Biggl(
        \frac{\sqrt{2d} B_g \sigma_g \log \frac{T}{\zeta}}{\min \left(c_{1} \epsg, \, \frac{c_{2}}{M} \epsH^{2}\right)}, \,
        \frac{C\sqrt{d} B_H \sigma_H \log \frac{T}{\zeta}}{c \, \epsH},
        16\lambda \left(\log i_{\max} + \log \frac{T}{\xi} \right) B_g \max \left(
             \frac{2 b_g}{c_g\epsg},
            \frac{4b_H M}{t_2 c_{H} \epsH^2}
            \right)
        \Biggr).
    \end{equation}
    The output of the algorithm is a \(((1+c_{1}) \epsg, (1+c) \epsH)\)-2S.\@ With the choice of \(\sigma\)'s and \(\lambda\) in~\eqref{eq:ls_sigma}, hiding logarithmic terms and constants, the asymptotic dependence of \(n_{\min}\) on \((\epsg, \epsH)\) and \(\rho\), is
    \begin{equation}
        n_{\min} = \frac{\sqrt{d}}{\sqrt{\rho}} \tilde O \left(
        \max \left(
            \epsg^{-2}, \epsg^{-1} \epsH^{-2}, \epsH^{-7/2}
            \right)
        \right).
    \end{equation}
\end{theorem}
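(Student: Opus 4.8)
The plan is to mirror the proof of Theorem~\ref{thm:short_thm}, adding one extra high-probability event per iteration and one extra term in \(n_{\min}\) to account for the private line search. First I would isolate the three random events that must hold in each iteration: the two Gaussian concentration bounds~\eqref{gauss_concentration} on \(\|\varepsilon_k\|\) and \(\|E_k\|\), exactly as in the short-step proof, together with the event that the DP line search of Lemma~\ref{lem:ls_decrease} delivers its guaranteed decrease~\eqref{eq:ls_decrease}, which holds with probability at least \(1-\xi/T\) per iteration. A union bound over the three gives the stated per-iteration success probability \((1-\tfrac{\zeta}{T})(1-C\exp(-C_1Cd))(1-\xi/T)\).

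Next I would convert each event into a requirement on \(n\). Substituting the sensitivities \(\Delta_g = 2B_g/n\) and \(\Delta_H = 2B_H\sqrt{d}/n\) from~\eqref{eq:sensitivity} and forcing the right-hand sides of~\eqref{gauss_concentration} below the bounded-noise thresholds~\eqref{eq:bounded_noise} reproduces, after rearrangement, the first two terms of \(n_{\min}\) verbatim from Theorem~\ref{thm:short_thm}; the third term is exactly the line-search condition~\eqref{eq:ls_cond} of Lemma~\ref{lem:ls_decrease}. Taking \(n \ge n_{\min}\) over all three therefore guarantees simultaneously that~\eqref{eq:bounded_noise} holds and that each executed line search achieves MIN\_DEC. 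Under~\eqref{eq:bounded_noise} the bounds~\eqref{eq:close} of Lemma~\ref{lem:short_dec} apply, so the termination argument of Corollary~\ref{cor:output} goes through unchanged: at termination \(\|\tilde g_k\| \le \epsg\) and \(\tilde\lambda_k \ge -\epsH\) give \(\|g_k\| \le (1+c_1)\epsg\) and, via Weyl's inequality with \(\|E_k\| \le c\,\epsH\), \(\lambda_{\min}(H_k) \ge -(1+c)\epsH\), i.e., a \(((1+c_1)\epsg,(1+c)\epsH)\)-2S. Since each step decreases \(f\) by at least the MIN\_DEC of~\eqref{eq:ls_decrease}, the algorithm halts within \(T\) iterations, and by the same conditional-probability argument the overall success probability is the per-iteration probability raised to the number of executed iterations \(\tau \le T\), hence at least the claimed \(T\)-th power.

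For the asymptotic rate I would substitute \(\sigma_g = \sigma_H = \lambda = \sqrt{3T/(2(\rho-\rho_f))}\) from~\eqref{eq:ls_sigma} and use \(\sqrt{T} = O(\max(\epsg^{-1},\epsH^{-3/2}))\), which follows because the MIN\_DEC in~\eqref{eq:ls_decrease} scales as \(\epsg^2\) and \(\epsH^3/M^2\) just as in the short-step case. The first term then scales as \(\tfrac{\sqrt{d}}{\sqrt{\rho}}\max(\epsg^{-1},\epsH^{-3/2})\max(\epsg^{-1},\epsH^{-2})\), which expands to \(\tfrac{\sqrt{d}}{\sqrt{\rho}}\tilde O(\max(\epsg^{-2},\epsg^{-1}\epsH^{-2},\epsH^{-7/2}))\), and the second term is dominated in the small-tolerance regime. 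The genuinely new piece is the third term: its \(\lambda\) factor contributes \(\tfrac{1}{\sqrt{\rho}}\max(\epsg^{-1},\epsH^{-3/2})\) and its bracketed maximum contributes \(\max(\epsg^{-1},\epsH^{-2})\), so it carries the same \(\epsilon\)-structure as the first term but, crucially, without the \(\sqrt{d}\) factor.

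I expect the main obstacle to be precisely this absorption step: verifying that the line-search term cannot worsen the asymptotic order. This needs two checks --- that its \(\epsilon\)-dependence matches the first term (so it dominates along no scaling of \(\epsg,\epsH\)) and that it lacks the \(\sqrt{d}\) present in the Gaussian terms (so it is dominated whenever \(d \ge 1\)). Once these are confirmed the three-way maximum collapses to the stated \(\tfrac{\sqrt{d}}{\sqrt{\rho}}\tilde O(\max(\epsg^{-2},\epsg^{-1}\epsH^{-2},\epsH^{-7/2}))\), and everything else is a routine re-run of the short-step argument with one additional union-bound factor.
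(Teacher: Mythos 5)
Your proposal is correct and follows essentially the same route as the paper: reduce to the short-step argument of Theorem~\ref{thm:short_thm}, add the per-iteration SVT success event from Lemma~\ref{lem:ls_decrease} (giving the extra \((1-\xi/T)\) factor and the third term in \(n_{\min}\)), and observe that with \(\lambda=\sigma_g\) the line-search term has the same \((\epsg,\epsH)\)-order as the first Gaussian term and is absorbed into the stated asymptotic bound. Your extra check that the third term lacks the \(\sqrt{d}\) factor is a slightly more explicit version of the paper's one-line absorption argument; the only nit is that the per-iteration probability is a product over independent noise sources rather than a union bound, which is how the paper (and your displayed expression) actually treats it.
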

When \((\epsg,\epsH)  = (\alpha,\sqrt{M \alpha})\), the dependence simplifies to \(\frac{\sqrt{d}}{\sqrt{\rho}}\, \tilde O (\alpha^{-2})\).
\begin{proof}
    The proof is similar to Theorem~\ref{thm:short_thm} using Lemma~\ref{eq:ls_decrease}. We have an additional term in our success probability due to the SVT line search step.  For the asymptotic bound of \(n_{\min}\), we note that MIN\_DEC~\eqref{eq:ls_decrease}, \(T\), \(\sigma_g\) and \(\sigma_H\) are the same as those of the short step algorithm, up to a constant. The additional requirement~\eqref{eq:ls_cond} for \(n\) is
    \(O\left(\frac{\lambda \log T}{\max(\epsg^{-1}, \epsH^{-2})}\right)\). Since we choose \(\lambda = \sigma_g\), it is in the same order as the first term inside the \(\max\) expression of \(n_{\min}\) in~\eqref{min_n_ls}. Thus, the asymptotic bound of \(n_{\min}\) is the same as that of the short step algorithm.
\end{proof}

\subsection{Mini-batching}

Mini-batching is an effective method for speeding up the algorithm. By sampling a subset of data points from the full dataset, we can compute the average risk over the mini-batch and modify the short step algorithm to evaluate gradients and Hessians over the subset.

Formally, in each iteration \(k\), we sample \(m\) data points from \(D\) without replacement, forming the mini-batch \(S_k\). The objective is now the average risk over set \(S_k\), that is,
\[
    f_{S_k} := \frac{1}{m} \sum_{i \in S_k} \ell\left(w_k, x_{i}\right).
\]
We show that the sample complexity of the mini-batch version of the algorithm remains \(\tilde O (\frac{\sqrt{d\ln(1/\delta)}}{\varepsilon \alpha^2})\) when \((\epsg,\epsH)  = (\alpha,\sqrt{M \alpha})\) for \((\varepsilon, \delta)\)-DP,
matching the sample complexity of the mini-batch version of DP-TR.\@ The details are in Appendix~\ref{sec:subsample_analysis}.

\subsection{Discussion: Two-phase Strategy and Eigenvalue Computation}\label{sec:improvement}
To speed up the algorithm, we propose a two-phase strategy that addresses the issue of the pessimistic estimate of \(T\) in~\eqref{eq:T}, which is based on MIN\_DEC obtained from the worst-case analysis.

The first phase involves using a fraction of the privacy budget (3/4, say) to try out a smaller value of \(T\). Using a smaller value of \(T\) results in less noise and potentially faster convergence. If we are unable to find a desired solution, we then move on to a second phase.
In this phase, we fall back to the original method for estimating \(T\) using the remaining privacy budget. Using the last iterate as a warm start can help improve the efficiency of this phase.
% \begin{enumerate}
%     \item Another approach is to re-estimate \(T\) every certain number of iterations, so that we can possibly converge faster.
%           If the distribution of the \(n\) items in the training set has nice properties,
%           we can sample the dataset multiplier times and run the optimization algorithm on each sampled dataset to obtain a good lower bound of the function of \(f\) by averaging the results over small datasets.
%           Our experiments show, however, that without a  tight estimate of the lower bound on \(f\), overall improvements are limited.
%     \item Another reasonable strategy is to decrease the noise whenever the noisy evaluations (of gradients and Hessians) are comparable to the added noise. We can do so by comparing the expected norm of the noise (\(l^2\) norm for gradients and spectral norm for Hessian) with the norm of the noisy evaluations.
% \end{enumerate}

Additionally, we can use the Lanczos method to find an approximation to the minimum eigenvalue and eigenvector, in place of a direct eigenvalue computation. This alternative yields a slightly different analysis. See Appendix~\ref{sec:lanczos} for a discussion.

%TODO: add details for adding the noise Ev to product Hv?

% \begin{remark}
%     The lower bound of the function is quite important. If we do not have a good lower bound, we will overestimate \(T\) even more (given the already conservative worst-case decrease in each iteration in the denominator)
% \end{remark}

% \subsection{Discussion}

% Algorithm description

% (save for later) Reduce batch-size as well

\section{Experiments}\label{sec:exp}

%% \subsection{Settings}

We carry out numerical experiments to demonstrate the performance of our DP optimization algorithms, following similar experimental protocols to \citep{wangEscapingSaddlePoints2021}.
We use Covertype dataset and perform necessary data pre-processing. Details of the dataset and additional experiments can be found in Appendix~\ref{sec:exp_details}. 

% \gcy{Till here}
Let \(x_i\) be the feature vector and \(y_i \in \{-1, +1\}\) be the binary label. We investigate the nonconvex ERM loss\footnote{Upon checking, the loss has Lipschitz gradients and Hessians as long as the feature vector \(x_i\)'s are bounded.}:
\[
    \min _{w \in \mathbb{R}^{p}} \frac{1}{n} \sum_{i=1}^{n} \log \left(1+\exp \left(-y_{i}\left\langle x_{i}, w\right\rangle\right)\right)+\lambda r(w),
\]
where \(r(w) = \sum_{i=1}^p \frac{w_i^2}{1+ w_i^2}\)  is the nonconvex regularizer. In our experiments, we choose \(\lambda = 10^{-3}\).

% \subsection{Results}
We compare our algorithms with DP-TR.\@
\footnote{We tried to implement DP-GD, but could not produce practical results using the algorithmic parameters described in the DP-GD paper.}
To ensure consistency in the comparison of results, we have modified the DP-TR method to include an explicit check for approximate second-order  conditions, similar to the approach employed by our algorithms. This modification allows DP-TR to terminate when these conditions are satisfied,
% To make the results comparable, we modify DP-TR so that it explicitly checks approximate second-order  conditions and terminates if these conditions are satisfied, similar to our algorithms.

We run the experiment under two settings:
\begin{enumerate}
    \item Finding a loose solution, \(\epsg = 0.060\) and \(\epsH \approx 0.245\). In this setting, our requirement for the 2S is loose. This translates to a large sample size \(n\) compared to the required sample complexity.
    \item Finding a tight solution: \(\epsg = 0.030\) and \(\epsH \approx 0.173\). In this setting, our requirement for the 2S is tight. We have a small sample size \(n\) compared to the required sample complexity.
\end{enumerate}
 For each setting, we pick different levels of privacy budget \(\varepsilon\) and run each configuration with five different random seeds. We convert differential privacy schemes to \((\varepsilon, \delta)\)-DP when necessary for the comparison.
We present the aggregated results in the tables below. In each entry, we report the mean \(\pm\) standard deviation of the values across five runs. If any of the five runs failed to find a solution, or it found a solution but failed to terminate due to the noise, we report the runtime\footnote{The runtime here is expressed in a unit determined by the Python function \texttt{time.perf\_counter()}} with \(\times\).

% \gcy{The table and the description have changed}
% \gcy{In the submitted version, we have rows corresponding to linesearch + mini-batch (LS-B). I removed them since we didn't have a justification for doing the linesearch over the mini-batch}

In the table, we use acronyms for methods: TR for DP-TR, OPT for our proposed algorithms and 2OPT for their two-phase variants, OPT-LS for our proposed algorithms with line search, and the ones with ``-B'' use mini-batching.

% We keep track of the actual loss \(f_k\) and gradient norm \(\|g_k\|\) at each iteration. We compare the running time of the algorithm with the running time of the DP-TR algorithm.

% As shown in Figure~\ref{fig:runtime}, even the plain version of our algorithm has a much smaller than running time than DP-TR

\begin{table*}[!htb]
    \centering
    \caption{Covertype: finding a loose solution, \((\epsg, \epsH) = (0.060, 0.245)\)}
\begin{tabular}{ccccccc}
    \toprule
    % \hline
    \multirow{2}{*}{method} & \multicolumn{2}{c}{\(\varepsilon = 0.2\)}                                     & \multicolumn{2}{c}{\(\varepsilon = 0.6\)}  & \multicolumn{2}{c}{\(\varepsilon = 1.0\)}            \\
    \cmidrule(lr){2-3}
    \cmidrule(lr){4-5}
    \cmidrule(lr){6-7}
    & final loss & runtime & loss & runtime & loss & runtime \\
    \midrule
    TR      &  \(0.729 \pm 0.028\) & \(10.1 \pm 9.9\) &  \(0.729 \pm 0.026\) & \(8.3 \pm 8.6\) &  \(0.729 \pm 0.026\) & \(9.5 \pm 9.1\) \\
    TR-B    &   \(0.729 \pm 0.029\) & \(2.2 \pm 2.0\) &  \(0.728 \pm 0.027\) & \(2.2 \pm 2.4\) &  \(0.729 \pm 0.028\) & \(2.5 \pm 2.4\) \\
    \midrule
    OPT     &      \(0.581 \pm 0.057\) & \( \times \) &  \(0.712 \pm 0.018\) & \(0.6 \pm 0.2\) &  \(0.712 \pm 0.017\) & \(0.5 \pm 0.2\) \\
    OPT-B   &   \(0.712 \pm 0.018\) & \(3.1 \pm 2.9\) &  \(0.712 \pm 0.018\) & \(3.2 \pm 3.0\) &  \(0.712 \pm 0.018\) & \(2.9 \pm 2.9\) \\
    OPT-LS  &      \(0.577 \pm 0.032\) & \( \times \) &  \(0.687 \pm 0.028\) & \(\mathbf{0.4 \pm 0.1}\) &  \(0.699 \pm 0.018\) & \(\mathbf{0.4 \pm 0.1}\) \\
    \midrule
    2OPT    &      \(0.626 \pm 0.078\) & \( \times \) &  \(0.712 \pm 0.017\) & \(0.6 \pm 0.2\) &  \(0.712 \pm 0.018\) & \(0.6 \pm 0.2\) \\
    2OPT-B  &   \(0.712 \pm 0.018\) & \(1.4 \pm 0.3\) &  \(0.712 \pm 0.018\) & \(1.4 \pm 0.4\) &  \(0.712 \pm 0.018\) & \(2.0 \pm 1.7\) \\
    2OPT-LS &   \(0.699 \pm 0.018\) & \(\mathbf{0.5 \pm 0.2}\) &  \(0.699 \pm 0.018\) & \(0.5 \pm 0.2\) &  \(0.699 \pm 0.018\) & \(0.5 \pm 0.2\) \\
    \bottomrule
\end{tabular}
\end{table*}

\begin{table*}[!htb]
    \centering
    \caption{Covertype: finding a tight solution: \((\epsg, \epsH) = (0.030, 0.173)\)}
    \begin{tabular}{ccccccc}
        \toprule
        % \hline
        \multirow{2}{*}{method} & \multicolumn{2}{c}{\(\varepsilon = 0.2\)}                                     & \multicolumn{2}{c}{\(\varepsilon = 0.6\)}  & \multicolumn{2}{c}{\(\varepsilon = 1.0\)}            \\
        \cmidrule(lr){2-3}
        \cmidrule(lr){4-5}
        \cmidrule(lr){6-7}
        & final loss & runtime & loss & runtime & loss & runtime \\ \midrule
    TR      &     \(0.516 \pm 0.005\) & \( \times \) &  \(0.607 \pm 0.007\) & \(99.6 \pm 32.2\) &  \(0.607 \pm 0.005\) & \(90.8 \pm 21.6\) \\
    TR-B    &     \(0.517 \pm 0.005\) & \( \times \) &   \(0.603 \pm 0.005\) & \(32.6 \pm 7.9\) &  \(0.607 \pm 0.003\) & \(33.4 \pm 14.4\) \\
    \midrule
    OPT     &     \(0.506 \pm 0.001\) & \( \times \) &       \(0.535 \pm 0.015\) & \( \times \) &    \(0.592 \pm 0.003\) & \(1.8 \pm 0.5\) \\
    OPT-B   &  \(0.597 \pm 0.003\) & \(\mathbf{1.3 \pm 0.3}\) &    \(0.597 \pm 0.003\) & \(1.3 \pm 0.2\) &    \(0.597 \pm 0.003\) & \(1.4 \pm 0.3\) \\
    OPT-LS  &     \(0.525 \pm 0.009\) & \( \times \) &       \(0.527 \pm 0.009\) & \( \times \) &       \(0.549 \pm 0.006\) & \( \times \) \\
    \midrule
    2OPT    &     \(0.502 \pm 0.001\) & \( \times \) &       \(0.513 \pm 0.003\) & \( \times \) &       \(0.519 \pm 0.003\) & \( \times \) \\
    2OPT-B  &  \(0.597 \pm 0.003\) & \(2.1 \pm 0.4\) &    \(0.597 \pm 0.003\) & \(2.3 \pm 0.5\) &    \(0.597 \pm 0.003\) & \(2.3 \pm 0.6\) \\
    2OPT-LS &  \(0.577 \pm 0.008\) & \(2.1 \pm 1.0\) &    \(0.591 \pm 0.001\) & \(\mathbf{0.6 \pm 0.1}\) &    \(0.591 \pm 0.001\) & \(\mathbf{0.8 \pm 0.2}\) \\
        \bottomrule
    \end{tabular}
\end{table*}

Experimental results show that for finding a loose solution under high privacy budgets \(\varepsilon = 0.6, 1.0\), our short step algorithm OPT outperforms TR, with much less runtime and lower final loss.
Under the low privacy budget \(\varepsilon = 0.2\), although OPT can fail to terminate with success, we see that the final loss is even lower than TR.\@ The reason is as follows, due to the conservative estimate of the decrease, the per iteration privacy budget is low, so we cannot check 2S conditions accurately enough due to the noise. In practice, we can stop early and the solution is still acceptable despite the failure of the termination. Heuristics may be employed to spend extra privacy budget to check 2S conditions.

Line search and mini-batching improve upon the short step algorithm, especially when combined with our two-phase strategy. We remark that similar to OPT, OPT-LS has an even more conservative theoretical minimum decrease.
The two-phase strategy, using an aggressive estimate of the decrease, complements line search. We observe that 2OPT-LS performs consistently well across all privacy budget levels and under two settings.
Finally, we remark that the number of Hessian evaluations is minimal. See Appendix~\ref{sec:add_exp} for details.

% \gcy{The following may sound confusing after the removal of unsuccessful runs. Since batch runs somewhat "cheat" to terminate (2S may hold on one batch but not on others).
% If we compare the loss curve, mini-batching does speed up of course.}

% In conclusion, the experimental results demonstrate the effectiveness of our algorithms.

% This is new
\section{Conclusion}
We develop simple differentially private optimization algorithms based on an elementary algorithm for finding an approximate second-order optimal point of a smooth nonconvex function.
% \citet[Chapter~3.6]{wrightOptimizationDataAnalysis2022}. 
The proposed algorithms take  noisy gradient steps or negative curvature steps based on a noisy Hessian on nonconvex ERM problems.
To obtain a method that is more practical than conservative short-step methods, we employ line searches, mini-batching, and a two-phase strategy. 
We track privacy leakage using \(z\)CDP (RDP for mini-batching).
Our work matches the sample complexity of DP-GD, but with a much simpler analysis.
Although DP-TR has a better sample complexity, its mini-batched version has the same complexity as ours.
Our algorithms have a significant advantage over DP-TR in terms of runtime. 2OPT-LS, which combines the line search and the two-phase strategy, consistently outperform DP-TR in numerical experiments.
% a comment on DP-GD impractical to implement according to theoretical values?

\bibliographystyle{abbrvnat}
\bibliography{ref}

\newpage
\appendix
% \onecolumn
% review of dp
\section{Brief Review of Differential Privacy}\label{sec:prelim-dp}
\begin{definition}[\((\varepsilon, \delta)\)-DP \citep{dworkAlgorithmicFoundationsDifferential2014}]
    A randomized algorithm \(\A\) is \((\varepsilon, \delta)\)-DP if for all neighboring datasets \(D, D'\) and for all events \(S\) in the output space of \(\A\), the following holds:
    \[
        \Pr\left(\A(D) \in S\right) \le
        e^\varepsilon \Pr\left(\A(D') \in S\right) + \delta.
    \]
    When \(\delta = 0\), we say \(\A\) is \(\varepsilon\)-DP.\@ In the ERM setting, we say \(D'\) is a neighboring dataset of \(D\) if they differ on just one data point, that is, by changing some data point \(x_k\) in \(D\) to \(x_k'\), we obtain dataset \(D'\).\@
\end{definition}

Rényi-DP (RDP) was introduced by Mironov as a relaxation of the original DP.\@
\begin{definition}[Rényi divergence]
    For two probability distributions \(P\) and \(Q\) defined over \(\mathcal{R}\), the Rényi divergence of order \(\alpha>1\) is
    \[
        D_{\alpha}(P \| Q) = \frac{1}{\alpha-1} \log \mathrm{E}_{w \sim Q}\left(\frac{P(w)}{Q(w)}\right)^{\alpha}.
    \]
\end{definition}
% Definition of 
\begin{definition}[\((\alpha, \epsilon)\)-RDP \citep{mironovRenyiDifferentialPrivacy2017}]
    A randomized algorithm \(\M: \mathcal{D} \to \mathcal{R}\) is \((\alpha, \epsilon)\)-RDP if for all neighboring dataset pairs \(D, D'\), the following holds
    \[
        D_{\alpha}(\M(D) \| \M(D')) \le \epsilon.
    \]
\end{definition}
Another notion of differential privacy is Zero-Concentrated Differential Privacy (\(z\)CDP), which requires a linear bound for the divergence of all orders.
\begin{definition}[\(z\)CDP \citep{bunConcentratedDifferentialPrivacy2016}]
    A randomized algorithm \(\M: \mathcal{D} \to \mathcal{R}\) satisfies \((\xi,\rho)\)-\(z\)CDP if for all neighboring dataset pairs \(D, D'\) and all \(\alpha \in (1, \infty)\), the following holds:
    \[
        D_{\alpha}(\M(D) \| \M(D')) \le \xi + \rho \alpha.
    \]
    Equivalently, a randomized algorithm \(\M\) satisfies \(\rho\)-\(z\)CDP if for all \(\alpha \in (1, \infty)\), \(\M\) satisfies \((\alpha, \xi + \rho \alpha)\)-RDP.\@
    If this definition holds for \(\xi=0\), we use the term \(\rho\)-\(z\)CDP instead.
\end{definition}

RDP and \(z\)CDP have some properties in common \citep{mironovRenyiDifferentialPrivacy2017, bunConcentratedDifferentialPrivacy2016}.

\begin{proposition}[Composition of RDP]
    Suppose that \(\M_1: \mathcal{D} \to \mathcal{R}_{1}\) is \(\left(\alpha, \epsilon_{1}\right)\)-RDP and \(M_2: \mathcal{R}_{1} \times\) \(\mathcal{D} \to \mathcal{R}_{2}\) is \(\left(\alpha, \epsilon_{2}\right)\)-RDP.\@
    Then the mechanism defined as \((X, Y)\), where \(X \sim \M_1(D)\) and \(Y \sim \M_2(X, D)\) is \(\left(\alpha, \epsilon_{1}+\epsilon_{2}\right)\)-RDP.\@
\end{proposition}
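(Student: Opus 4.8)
The plan is to prove the guarantee directly from the definition of Rényi divergence by exploiting the chain-rule factorization of the joint density together with the adaptivity of the RDP bound for \(\M_2\). Fix a pair of neighboring datasets \(D, D'\). Write \(p_1, q_1\) for the densities of \(\M_1(D), \M_1(D')\) on \(\mathcal{R}_1\), and for each fixed \(x \in \mathcal{R}_1\) write \(p_2(\cdot \mid x), q_2(\cdot \mid x)\) for the densities of \(\M_2(x, D), \M_2(x, D')\) on \(\mathcal{R}_2\). The joint densities of \((X, Y)\) under \(D\) and \(D'\) then factor as \(p(x,y) = p_1(x)\, p_2(y \mid x)\) and \(q(x,y) = q_1(x)\, q_2(y \mid x)\).

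First I would rewrite the quantity inside the Rényi divergence, namely the expectation of \((p/q)^\alpha\) taken against the reference measure \(q\), and use the factorization to split the integrand into a product \((p_1/q_1)^\alpha (p_2/q_2)^\alpha\). Because the outer variable \(x\) is integrated against \(q_1\) and the inner variable \(y\) against \(q_2(\cdot \mid x)\), the double integral factors into an outer integral over \(x\) whose integrand contains an inner integral over \(y\).

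The key step is to recognize that the inner integral is exactly \(\mathbb{E}_{y \sim \M_2(x, D')}[(p_2(y\mid x)/q_2(y\mid x))^\alpha]\), which by the RDP guarantee for \(\M_2\) — applied with the first argument held fixed at \(x\) and the neighboring pair \(D, D'\) — is bounded by \(e^{(\alpha-1)\epsilon_2}\). Crucially this bound is uniform in \(x\), so it can be pulled out of the outer integral. What remains is \(\mathbb{E}_{x \sim \M_1(D')}[(p_1(x)/q_1(x))^\alpha] \le e^{(\alpha-1)\epsilon_1}\) by the RDP guarantee for \(\M_1\). Multiplying the two bounds gives \(e^{(\alpha-1)(\epsilon_1+\epsilon_2)}\); taking the logarithm and dividing by \(\alpha - 1\) yields \(D_\alpha((X,Y)_D \| (X,Y)_{D'}) \le \epsilon_1 + \epsilon_2\), as required.

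The main obstacle — really the only conceptual one — is handling the adaptivity of \(\M_2\): its output depends both on the sensitive dataset and on the released value \(X\). The argument works precisely because the hypothesis assumes \(\M_2(\cdot, \cdot)\) is \((\alpha, \epsilon_2)\)-RDP for \emph{every} fixed value of its first argument, so the inner bound holds pointwise in \(x\) and the two privacy losses decouple cleanly into a product of moment terms. The rest is routine measure-theoretic bookkeeping (Fubini/Tonelli to justify the interchange of integration order, and absolute continuity of the neighboring output distributions so the densities and ratios are well defined).
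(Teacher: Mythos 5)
Your proof is correct: the factorization of the joint density, the uniform-in-\(x\) bound on the inner integral via the adaptive RDP guarantee for \(\M_2\), and the resulting product of moment bounds is exactly the standard argument for adaptive R\'enyi-DP composition. The paper itself offers no proof of this proposition --- it is stated as a background property with a citation to \citet{mironovRenyiDifferentialPrivacy2017} --- and your argument is essentially the one given in that reference, so there is nothing to reconcile.
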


\begin{proposition}[Composition of \(z\)CDP]
    Suppose that \(\M_1: \mathcal{D} \to \mathcal{R}_{1}\) is \(\rho_1\)-\(z\)CDP and \(M_2: \mathcal{R}_{1} \times\) \(\mathcal{D} \to \mathcal{R}_{2}\) is \(\rho_2\)-\(z\)CDP.\@ Then the mechanism defined as \((X, Y)\), where \(X \sim \M_1(D)\) and \(Y \sim \M_2(X, D)\) is \((\rho_1+\rho_2)\)-\(z\)CDP.\@
\end{proposition}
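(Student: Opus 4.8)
The plan is to reduce the statement to the \textbf{Composition of RDP} proposition stated immediately above, using the equivalence built into the definition of \(z\)CDP, rather than arguing about Rényi divergences from scratch. Recall that the definition records that a mechanism \(\M\) is \(\rho\)-\(z\)CDP precisely when, for \emph{every} order \(\alpha \in (1,\infty)\), \(\M\) is \((\alpha, \rho\alpha)\)-RDP. This ``for all \(\alpha\) simultaneously'' characterization is exactly what lets the single-order RDP composition result do all the work.

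First I would unpack the two hypotheses through this equivalence: since \(\M_1\) is \(\rho_1\)-\(z\)CDP, it is \((\alpha, \rho_1\alpha)\)-RDP for each \(\alpha\), and since \(\M_2\) is \(\rho_2\)-\(z\)CDP (interpreted, as in the RDP composition statement, as \(D_\alpha(\M_2(x,D)\,\|\,\M_2(x,D')) \le \rho_2\alpha\) for every fixed first argument \(x \in \mathcal{R}_1\)), it is \((\alpha, \rho_2\alpha)\)-RDP for each \(\alpha\). Fixing an arbitrary order \(\alpha\), I then apply the \textbf{Composition of RDP} proposition to the adaptively composed mechanism \((X,Y)\) with \(X \sim \M_1(D)\) and \(Y \sim \M_2(X,D)\), which yields that \((X,Y)\) is \((\alpha, \rho_1\alpha + \rho_2\alpha)\)-RDP, i.e.\ \((\alpha, (\rho_1+\rho_2)\alpha)\)-RDP. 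Since \(\alpha\) was arbitrary, this bound holds for all orders at once, and reading it back through the \(z\)CDP equivalence gives that \((X,Y)\) is \((\rho_1+\rho_2)\)-\(z\)CDP, as claimed.

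If one did not wish to invoke the RDP composition result as a black box, the self-contained route factorizes the joint densities as \(P(x,y)=\M_1(D)(x)\,\M_2(x,D)(y)\) and \(Q(x,y)=\M_1(D')(x)\,\M_2(x,D')(y)\), writes the Rényi moment \(\mathrm{E}_{(x,y)\sim Q}\bigl[(P/Q)^\alpha\bigr]\) as an iterated expectation, and first takes the inner expectation over \(y\sim \M_2(x,D')\) with \(x\) held fixed. That inner integral equals \(\exp\bigl((\alpha-1)D_\alpha(\M_2(x,D)\,\|\,\M_2(x,D'))\bigr) \le \exp\bigl((\alpha-1)\rho_2\alpha\bigr)\); pulling this uniform-in-\(x\) factor out and evaluating the remaining expectation over \(x\sim \M_1(D')\) produces the matching factor \(\exp\bigl((\alpha-1)\rho_1\alpha\bigr)\), and taking \(\tfrac{1}{\alpha-1}\log(\cdot)\) gives the additive bound.

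The only genuine subtlety — and hence the ``main obstacle'' — is the adaptive dependence of \(\M_2\) on the output \(X\) of \(\M_1\): the second stage must be controlled \emph{uniformly over every possible value of its first argument}, so that the conditional Rényi-moment bound can be extracted before integrating over \(X\). This is precisely the point where the hypothesis must be read as a per-\(x\) \(z\)CDP guarantee on \(\M_2(x,\cdot)\), and it is also the step the RDP composition proposition already absorbs. Everything else is the routine bookkeeping of Rényi divergences and the fact that the argument goes through identically for every order \(\alpha\), which is what upgrades the per-order RDP conclusion to the single clean \(z\)CDP statement.
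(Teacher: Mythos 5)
Your proof is correct. The paper does not actually prove this proposition — it is stated as a known property with citations to Bun–Steinke and Mironov — and your argument (reading each \(\rho_i\)-\(z\)CDP hypothesis as an \((\alpha,\rho_i\alpha)\)-RDP guarantee for every order \(\alpha\), invoking the adaptive RDP composition proposition at each fixed \(\alpha\), and converting back) is precisely the standard derivation that the paper's presentation of \(z\)CDP as an all-orders RDP condition invites; your self-contained iterated-expectation computation is also sound, and you correctly flag the uniform-in-\(x\) control of the adaptive second stage as the only point requiring care.
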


\begin{proposition}[Preservation under Postprocessing]
    Consider the mappings \(\M: \mathcal{D} \to \mathcal{R}\)  and \(g: \mathcal{R} \to \mathcal{R}^{\prime}\). It follows from the analog of the data processing inequality that \(D_{\alpha}(P \| Q) \geq D_{\alpha}(g(P) \| g(Q))\). This shows that if \(\M(\cdot)\) is \((\alpha, \epsilon)\)-RDP, so is \(g(\M(\cdot))\). Similarly, if \(\M(\cdot)\) is \(\rho\)-\(z\)CDP, so is \(g(\M(\cdot))\).
\end{proposition}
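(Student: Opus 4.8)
The plan is to isolate the single analytic fact that does all the work---the data processing inequality (DPI) $D_\alpha(P\|Q) \ge D_\alpha(g(P)\|g(Q))$ for any (possibly randomized) map $g$---and then read off both claimed implications as one-line consequences. First I would reduce the R\'enyi divergence to the \emph{Hellinger integral} $H_\alpha(P\|Q) := \mathbb{E}_{w\sim Q}(P(w)/Q(w))^\alpha = \int P(w)^\alpha Q(w)^{1-\alpha}\,dw$, so that $D_\alpha = \tfrac{1}{\alpha-1}\log H_\alpha$. Since $\alpha>1$, the map $t\mapsto \tfrac{1}{\alpha-1}\log t$ is strictly increasing, so it suffices to prove the DPI for $H_\alpha$; the DPI for $D_\alpha$ then follows by monotonicity.

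Next I would prove the DPI for $H_\alpha$. The key observation is that $H_\alpha$ is the $f$-divergence generated by the convex function $f(t)=t^\alpha$ (convex precisely because $\alpha>1$), equivalently that the integrand $\phi(p,q):=p^\alpha q^{1-\alpha}=q\,f(p/q)$ is the perspective of $f$ and is therefore jointly convex and positively homogeneous of degree one on the positive orthant. For a \emph{deterministic} $g$, the law $g(P)$ is the pushforward with coarse-grained density $\tilde p(y)=\int_{g^{-1}(y)} p\,d\mu$ (relative to a common dominating measure $\mu$), and the claim reduces on each fibre to $(\int p)^\alpha(\int q)^{1-\alpha}\le \int p^\alpha q^{1-\alpha}$; writing the right side as $Z\,\mathbb{E}_{Q_{\mathrm{fib}}}[(p/q)^\alpha]$ with $Z=\int q\,d\mu$ and $Q_{\mathrm{fib}}=q\,d\mu/Z$, this is exactly Jensen's inequality for the convex map $t\mapsto t^\alpha$. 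For a \emph{randomized} $g$ given by a Markov kernel $k(y\mid x)$, I would run the same Jensen step against the unnormalized measure $k(y\mid\cdot)\,d\mu$ at each output point $y$ and integrate over $y$, using Fubini to recover $\int \phi(p,q)\,d\mu$ on the right; homogeneity of $\phi$ absorbs the fact that these conditioning measures are not normalized. This yields $H_\alpha(g(P)\|g(Q))\le H_\alpha(P\|Q)$.

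With the DPI in hand, both implications are immediate. For RDP, fix an arbitrary neighboring pair and set $P=\M(D)$, $Q=\M(D')$; then $g(\M(D))$ and $g(\M(D'))$ have laws $g(P)$ and $g(Q)$, so $D_\alpha(g(\M(D))\|g(\M(D'))) \le D_\alpha(\M(D)\|\M(D')) \le \epsilon$, and since the pair was arbitrary, $g\circ\M$ is $(\alpha,\epsilon)$-RDP. For zCDP I would use the characterization recalled in the excerpt: $\rho$-zCDP is exactly the family of bounds $D_\alpha(\M(D)\|\M(D'))\le \rho\alpha$ holding simultaneously for all $\alpha\in(1,\infty)$. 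The DPI preserves each of these bounds under $g$, so $g\circ\M$ satisfies the same family and is $\rho$-zCDP; equivalently, one may quote the RDP conclusion at every order $\alpha$ and invoke the RDP/zCDP equivalence.

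The main obstacle is the DPI step itself, and within it the joint convexity of the perspective $\phi(p,q)=p^\alpha q^{1-\alpha}$ together with the measure-theoretic bookkeeping needed to push Jensen's inequality through a genuinely randomized post-processing kernel (normalization of the conditioning measures, Fubini, and the dominating-measure setup). The RDP and zCDP deductions, by contrast, are purely definitional once the DPI is available; indeed this is why the statement is phrased as following ``from the analog of the data processing inequality,'' and one could alternatively simply cite the DPI for R\'enyi divergence (e.g.\ van Erven--Harremo\"{e}s) and spend the argument only on the two implications.
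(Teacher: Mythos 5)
Your proposal is correct, and its top-level skeleton is exactly the paper's: the paper states this proposition in its background review with the justification folded into the statement itself --- it simply invokes the data processing inequality for R\'enyi divergence as a known fact and reads off both the RDP and zCDP implications definitionally, just as you do in your final paragraph. Where you genuinely diverge is that you prove the DPI rather than cite it: you reduce $D_\alpha$ to the Hellinger integral $H_\alpha$, identify the integrand $p^\alpha q^{1-\alpha}$ as the perspective of the convex function $t\mapsto t^\alpha$ (valid since $\alpha>1$), and push Jensen's inequality through the fibres of a deterministic map and, via a Markov kernel and Tonelli/Fubini, through a randomized one. Your fibre computation is right: with $Z=\int q\,d\mu$, Jensen gives $\int q\,(p/q)^\alpha\,d\mu \ge Z\bigl(\int p\,d\mu / Z\bigr)^\alpha = \bigl(\int p\bigr)^\alpha\bigl(\int q\bigr)^{1-\alpha}$, and the kernel case follows by integrating the per-$y$ inequality and using $\int k(y\mid x)\,dy=1$. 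Your zCDP step also correctly uses the paper's characterization of $\rho$-zCDP as the simultaneous family of bounds $D_\alpha \le \rho\alpha$ over $\alpha\in(1,\infty)$, each of which the DPI preserves. What the two treatments buy: the paper's one-line citation is the appropriate weight for a background proposition quoted from \citet{mironovRenyiDifferentialPrivacy2017} and \citet{bunConcentratedDifferentialPrivacy2016}, while your argument is self-contained and makes explicit the only nontrivial ingredient (joint convexity of the perspective), at the cost of measure-theoretic bookkeeping that the paper never needs to touch. As you note yourself, one could compress your proof back to the paper's by citing van Erven--Harremo\"{e}s for the DPI; nothing in your longer route introduces a gap.
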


We can convert easily from one notion of differential privacy to another.
% From RDP to \((\varepsilon, \delta)\)-DP
\begin{proposition}[RDP to \((\varepsilon, \delta)\)-DP]
    If \(\M\) is an \(\left(\alpha, \epsilon \right)\)-RDP mechanism, then it is \( \left(\epsilon + \frac{\log 1 / \delta}{\alpha - 1}, \delta\right)\)-DP for any \(0 < \delta < 1\).
\end{proposition}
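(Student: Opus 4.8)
The plan is to convert the Rényi-divergence bound into the standard $(\varepsilon,\delta)$ form by means of a tail bound on the privacy-loss random variable. Fix a pair of neighboring datasets and write $P = \M(D)$, $Q = \M(D')$ for the two output distributions, and let $L(w) = \log\frac{P(w)}{Q(w)}$ denote the privacy loss. The target parameter is $\varepsilon' = \epsilon + \frac{\log(1/\delta)}{\alpha-1}$, and I will show that for every event $S$ we have $P(S) \le e^{\varepsilon'} Q(S) + \delta$, which is exactly $(\varepsilon',\delta)$-DP (the same argument applies with the roles of $D$ and $D'$ swapped, since the RDP hypothesis holds for every ordered neighboring pair).

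First I would rewrite the RDP hypothesis in exponentiated form. By the definition of Rényi divergence, $D_\alpha(P\|Q) \le \epsilon$ is equivalent to $\mathbb{E}_{w\sim Q}\!\left[(P(w)/Q(w))^\alpha\right] \le e^{(\alpha-1)\epsilon}$, and by the change-of-measure identity $\mathbb{E}_{w\sim Q}[(P/Q)^\alpha] = \mathbb{E}_{w\sim P}[(P/Q)^{\alpha-1}]$ this is the same as $\mathbb{E}_{w\sim P}\!\left[(P(w)/Q(w))^{\alpha-1}\right] \le e^{(\alpha-1)\epsilon}$. This moment bound is the only way the hypothesis enters the argument.

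Next I would apply Markov's inequality to the nonnegative random variable $(P(w)/Q(w))^{\alpha-1}$ under $w\sim P$. This yields $\Pr_{w\sim P}[L(w) > \varepsilon'] = \Pr_{w\sim P}[(P/Q)^{\alpha-1} > e^{(\alpha-1)\varepsilon'}] \le e^{-(\alpha-1)\varepsilon'}\,\mathbb{E}_{w\sim P}[(P/Q)^{\alpha-1}] \le e^{(\alpha-1)(\epsilon-\varepsilon')}$. The choice $\varepsilon' = \epsilon + \frac{\log(1/\delta)}{\alpha-1}$ makes the exponent equal to $\log\delta$, so the \emph{bad set} $B = \{w : L(w) > \varepsilon'\}$ satisfies $P(B) \le \delta$.

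Finally, for an arbitrary event $S$ I would split $P(S) = P(S\cap B) + P(S\setminus B)$. The first term is at most $P(B) \le \delta$, while on $S\setminus B$ we have $P(w) \le e^{\varepsilon'} Q(w)$ pointwise, so the second term is at most $e^{\varepsilon'} Q(S)$; adding the two gives the claimed inequality $P(S) \le e^{\varepsilon'} Q(S) + \delta$. I expect the only genuine subtlety to be the change-of-measure identity relating the $\alpha$-th moment under $Q$ to the $(\alpha-1)$-th moment under $P$, together with the technical point that finiteness of $D_\alpha(P\|Q)$ guarantees $P \ll Q$ so that $L$ is well defined; the remainder is a routine Markov bound and set decomposition.
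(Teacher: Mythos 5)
The paper does not prove this proposition at all: it is quoted verbatim from \citet{mironovRenyiDifferentialPrivacy2017} as part of the background review in Appendix~A, so there is no in-paper argument to compare against. Your proof is correct and complete: the change-of-measure identity $\mathbb{E}_{w\sim Q}[(P/Q)^{\alpha}]=\mathbb{E}_{w\sim P}[(P/Q)^{\alpha-1}]$ is valid (finiteness of $D_\alpha(P\|Q)$ forces $P\ll Q$, as you note), Markov's inequality applied to the nonnegative variable $(P/Q)^{\alpha-1}$ with $\alpha>1$ gives exactly $P(B)\le e^{(\alpha-1)(\epsilon-\varepsilon')}=\delta$ for your choice of $\varepsilon'$, and the split $P(S)\le P(B)+e^{\varepsilon'}Q(S)$ finishes the argument. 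The only remark worth making is that Mironov's original proof reaches the same tail bound via H\"older's inequality applied to $\mathbb{E}_{Q}[\mathbf{1}_B\,(P/Q)]$ with exponents $\alpha$ and $\alpha/(\alpha-1)$, whereas you bound the $(\alpha-1)$-th moment under $P$ directly; your route is slightly more elementary and is the version that has become standard in later treatments, so nothing is lost.
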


\begin{proposition}
    [\(\epsilon\)-DP to \(z\)CDP] If \(\M\) is an \(\varepsilon\)-DP mechanism, then it is also \((\frac 12 \varepsilon^2)\)-\(z\)CDP.\@
\end{proposition}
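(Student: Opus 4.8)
The plan is to verify the defining inequality of $\rho$-$z$CDP with $\rho = \tfrac12\varepsilon^2$: for every pair of neighboring datasets $D, D'$ and every order $\alpha > 1$, show that $D_\alpha(\M(D)\|\M(D')) \le \tfrac12\varepsilon^2\alpha$. Write $P = \M(D)$ and $Q = \M(D')$. The first step is to convert the set-based $\varepsilon$-DP guarantee into a pointwise bound on the privacy-loss random variable $A := \log\frac{P(w)}{Q(w)}$. Since $\Pr(\M(D)\in S) \le e^\varepsilon \Pr(\M(D')\in S)$ holds for all measurable $S$ (and symmetrically, as the neighboring relation is symmetric), the density ratio obeys $e^{-\varepsilon} \le P(w)/Q(w) \le e^{\varepsilon}$, i.e.\ $|A| \le \varepsilon$ almost surely.

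Next I would re-express the R\'enyi divergence through the cumulant generating function of $A$ under $P$. Using the change of measure $\mathrm{E}_{w\sim Q}(P/Q)^\alpha = \mathrm{E}_{w\sim P}(P/Q)^{\alpha-1}$ together with $(P/Q)^{\alpha-1} = e^{(\alpha-1)A}$, we obtain $D_\alpha(P\|Q) = \frac{1}{\alpha-1}\psi(\alpha-1)$, where $\psi(\lambda) := \log \mathrm{E}_{w\sim P} e^{\lambda A}$. Two normalization identities are immediate and will do the heavy lifting: $\psi(0) = 0$, and $\psi(-1) = \log \mathrm{E}_{w\sim P}(Q/P) = \log 1 = 0$.

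The core estimate is Hoeffding's lemma applied to the bounded variable $A \in [-\varepsilon,\varepsilon]$, which yields the subgaussian bound $\psi(\lambda) \le \lambda\nu + \tfrac12\lambda^2\varepsilon^2$ for all $\lambda$, where $\nu := \mathrm{E}_{w\sim P}A = D_{\mathrm{KL}}(P\|Q) \ge 0$ (here $(b-a)^2/8 = (2\varepsilon)^2/8 = \varepsilon^2/2$). Evaluating this bound at $\lambda = -1$ and using $\psi(-1)=0$ pins down the mean: $0 \le -\nu + \tfrac12\varepsilon^2$, so $\nu \le \tfrac12\varepsilon^2$. Substituting $\lambda = \alpha-1 > 0$ and this mean bound then gives $\psi(\alpha-1) \le (\alpha-1)\tfrac12\varepsilon^2 + \tfrac12(\alpha-1)^2\varepsilon^2 = \tfrac12\varepsilon^2\,\alpha(\alpha-1)$, and dividing by $\alpha-1$ delivers $D_\alpha(P\|Q) \le \tfrac12\varepsilon^2\alpha$, which is precisely $(\tfrac12\varepsilon^2)$-$z$CDP.

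I expect the main obstacle to be the subgaussian (Hoeffding) step, together with the realization that pointwise boundedness must be combined with \emph{both} normalizations $\psi(0)=\psi(-1)=0$. Indeed, the crude mean bound $\nu \le \varepsilon$ that follows from $A \le \varepsilon$ alone yields only $D_\alpha(P\|Q) \le \varepsilon + \tfrac12(\alpha-1)\varepsilon^2$, whose leading term is of the wrong order; the sharp bound $\nu \le \tfrac12\varepsilon^2$, and hence the clean factor $\alpha(\alpha-1)$, is exactly what the identity $\psi(-1)=0$ buys. A secondary, more technical point is the measure-theoretic passage from the event-wise DP inequality to the almost-sure bound $|A| \le \varepsilon$.
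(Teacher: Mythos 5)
Your proof is correct. The paper does not actually prove this proposition---it is quoted as background from \citet{bunConcentratedDifferentialPrivacy2016}---so there is no internal argument to compare against; your derivation is essentially the standard one from that literature (the Dwork--Rothblum/Bun--Steinke subgaussian privacy-loss argument). All the key steps check out: the pointwise bound $|A|\le\varepsilon$, the identity $D_\alpha(P\|Q)=\frac{1}{\alpha-1}\psi(\alpha-1)$ via change of measure, Hoeffding's lemma with $(b-a)^2/8=\varepsilon^2/2$, and the crucial use of $\psi(-1)=0$ to sharpen the mean bound to $\nu\le\tfrac12\varepsilon^2$, without which one only gets the weaker $O(\varepsilon)$ leading term you correctly flag.
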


\begin{proposition}[\(z\)CDP to \((\varepsilon, \delta)\)-DP]\label{prop:zcdp2dp}
    Suppose that \(\M: \mathcal{D} \to \mathcal{R}\) is \((\xi, \rho)\)-\(z\)CDP.\@ Then \(M\) is also \((\varepsilon, \delta)\)-DP for all \(\delta>0\) and
    \[
        \varepsilon=\xi+\rho+\sqrt{4 \rho \log (1 / \delta)} .
    \]
    Thus to achieve a \((\varepsilon, \delta)\)-DP guarantee for given \(\varepsilon\) and \(\delta\), it suffices to satisfy \((\xi, \rho)\)-\(z\)CDP with
    \[
        \rho=(\sqrt{\varepsilon-\xi+\log (1 / \delta)}-\sqrt{\log (1 / \delta)})^{2} \approx \frac{(\varepsilon-\xi)^{2}}{4 \log (1 / \delta)}.
    \]
    %%    \sw{Could you check this last expression? The square  in the last numerator looks weird to me.}
    %%    \gcy{This is correct because of the following
    %%     For \(t, p > 0\) and \(t \ll p\), we have
    %%    \[
    %%    \sqrt{t+p} - \sqrt{p} = \frac{t}{\sqrt{t+p} + \sqrt{p}} \approx \frac{t}{2 \sqrt{p}}.
    %%    \]
    %%    So,
    %%    \[
    %%        \left(\sqrt{t+p} - \sqrt{p}\right)^2 \approx \frac{t^2}{4 p}.
    %%    \]
    %%    }
\end{proposition}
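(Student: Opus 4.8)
The plan is to prove this as a standard Chernoff-style tail bound on the \emph{privacy loss random variable}, optimized over the Rényi order \(\alpha\). Fix neighboring datasets \(D, D'\) and write \(P = \M(D)\), \(Q = \M(D')\), with privacy loss \(Z = \log\frac{P(o)}{Q(o)}\) for \(o \sim P\). The bridge between a divergence bound and \((\varepsilon,\delta)\)-DP is the elementary inequality
\[
    \Pr(P \in S) \le e^{\varepsilon}\Pr(Q \in S) + \Pr(Z > \varepsilon),
\]
valid for every measurable \(S\) (split \(S\) according to whether the loss exceeds \(\varepsilon\); on \(\{Z \le \varepsilon\}\) the density ratio is at most \(e^\varepsilon\)). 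Hence it suffices to show \(\Pr(Z > \varepsilon) \le \delta\) for all neighboring pairs.

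First I would translate the \((\xi,\rho)\)-\(z\)CDP hypothesis into a moment bound. By the definition of Rényi divergence, \(\mathrm{E}\!\left[e^{(\alpha-1)Z}\right] = e^{(\alpha-1)D_\alpha(P\|Q)}\), so the \(z\)CDP bound \(D_\alpha(P\|Q) \le \xi + \rho\alpha\) for all \(\alpha \in (1,\infty)\) gives \(\mathrm{E}\!\left[e^{(\alpha-1)Z}\right] \le e^{(\alpha-1)(\xi+\rho\alpha)}\). Applying Markov's inequality to the nonnegative variable \(e^{(\alpha-1)Z}\) (with \(\alpha-1>0\), so the event \(\{Z>\varepsilon\}\) is preserved under the increasing map) yields
\[
    \Pr(Z > \varepsilon) \le e^{-(\alpha-1)\varepsilon}\,\mathrm{E}\!\left[e^{(\alpha-1)Z}\right] \le e^{(\alpha-1)(\xi + \rho\alpha - \varepsilon)}.
\]

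Next I would optimize the exponent over the free order. Writing \(t = \alpha - 1 > 0\), the exponent becomes the quadratic \(t\bigl(\rho t + (\xi + \rho - \varepsilon)\bigr)\), minimized at \(t^\star = (\varepsilon - \xi - \rho)/(2\rho)\), which is admissible (positive, hence \(\alpha^\star > 1\)) exactly when \(\varepsilon > \xi + \rho\). Substituting \(t^\star\) gives minimal exponent \(-(\varepsilon - \xi - \rho)^2/(4\rho)\), so \(\Pr(Z > \varepsilon) \le \exp\!\left(-(\varepsilon-\xi-\rho)^2/(4\rho)\right)\). Setting the right-hand side equal to \(\delta\) and solving for \(\varepsilon\) recovers \(\varepsilon = \xi + \rho + \sqrt{4\rho\log(1/\delta)}\); reading this as a quadratic in \(\sqrt{\rho}\) and inverting gives the stated sufficient choice \(\rho = (\sqrt{\varepsilon-\xi+\log(1/\delta)}-\sqrt{\log(1/\delta)})^2\), with the displayed approximation following by a first-order expansion when \(\rho \ll \log(1/\delta)\).

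The main obstacle I anticipate is not the calculus but the two care points flanking it: establishing the bridging inequality rigorously and then assembling the two-sided guarantee (apply the same argument to the swapped pair \(D', D\), which is legitimate because the \(z\)CDP condition holds symmetrically over all neighboring pairs), and verifying that the unconstrained minimizer \(\alpha^\star\) genuinely lies in \((1,\infty)\) rather than on the boundary. The regime \(\varepsilon \le \xi + \rho\) must be excluded, where the Markov bound is vacuous; this is consistent with the statement, since \(\sqrt{4\rho\log(1/\delta)} > 0\) forces \(\varepsilon > \xi + \rho\).
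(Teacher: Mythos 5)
Your proposal is correct, and it is precisely the standard argument (Markov's inequality applied to the moment generating function of the privacy loss, followed by optimization over the Rényi order) from \citet{bunConcentratedDifferentialPrivacy2016}, which is where this proposition originates. The paper itself offers no proof --- the statement appears in the background review of Appendix~\ref{sec:prelim-dp} as a quoted fact with a citation --- so there is nothing to compare against; your derivation of the optimal exponent \(-(\varepsilon-\xi-\rho)^2/(4\rho)\), the inversion via completing the square in \(\sqrt{\rho}\), and the two care points you flag (the bridging inequality and the admissibility of \(\alpha^\star>1\), which indeed holds whenever \(\delta<1\) and \(\rho>0\)) are all handled correctly.
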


A common way to achieve differential privacy is to add Gaussian noise to the output.
\begin{proposition}[Gaussian Mechanism]\label{prop:gaussian-mechanism}
    Given any function \(h: \mathcal{X}^{n} \rightarrow \mathbb{R}^{d}\), the Gaussian Mechanism is defined as:
    \[
        \mathbf{G}_{\sigma}h(D)=h(D) + N(0, \Delta_h^2 \sigma^2 I_d),
    \]
    where \(\Delta_h\) denotes the \(\ell_{2}\)-sensitivity of the function \(h\), defined as
    \begin{equation} \label{eq:sens}
        \Delta_h=\sup _{D \sim D^{\prime}} \| h(D) - h(D^\prime) \|.
    \end{equation}
    The Gaussian Mechanism \(\mathbf{G}_{\sigma}h\) satisfies \((\alpha, \alpha / (2\sigma^2))\)-RDP for all \(\alpha \ge 1\) and thus also satisfies \((1 / 2\sigma^2)\)-\(z\)CDP.\@
\end{proposition}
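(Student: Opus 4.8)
The plan is to compute the Rényi divergence between the two output distributions in closed form and then bound it by the sensitivity. Fix any pair of neighboring datasets $D \sim D'$. Writing $\tau^2 = \Delta_h^2 \sigma^2$, the mechanism outputs are distributed as $P = \mathcal{N}(h(D), \tau^2 I_d)$ and $Q = \mathcal{N}(h(D'), \tau^2 I_d)$: two Gaussians sharing the same spherical covariance and differing only in their means. The whole task therefore reduces to evaluating $D_\alpha(P \| Q)$ for the mean shift $\mu := h(D) - h(D')$.

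First I would reduce the $d$-dimensional problem to a scalar one. Because the covariance is the isotropic $\tau^2 I_d$, Rényi divergence is invariant under an orthogonal change of coordinates, so I may rotate to align $\mu$ with the first axis. Under this rotation the last $d-1$ coordinates have identical marginals under $P$ and $Q$ and contribute nothing, leaving a single one-dimensional Gaussian divergence with mean gap $\|\mu\|$ and variance $\tau^2$.

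The one genuinely computational step is the scalar moment integral $\mathrm{E}_{w \sim Q}(P(w)/Q(w))^\alpha = \int Q(w)^{1-\alpha} P(w)^\alpha \, dw$. Expanding the two Gaussian densities, the integrand is proportional to another (unnormalized) Gaussian in $w$, so completing the square in the exponent gives a closed form; after applying $\frac{1}{\alpha-1}\log(\cdot)$ the normalizing constants cancel and one is left with $D_\alpha(P\|Q) = \frac{\alpha \|\mu\|^2}{2\tau^2}$. The only care needed here is the bookkeeping of constants when completing the square, and this is where I expect the sole (minor) obstacle to lie.

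Finally I would invoke the sensitivity bound. By the definition~\eqref{eq:sens} of $\ell_2$-sensitivity, $\|\mu\| = \|h(D) - h(D')\| \le \Delta_h$, so $D_\alpha(P\|Q) \le \frac{\alpha \Delta_h^2}{2\tau^2} = \frac{\alpha}{2\sigma^2}$. Since $D$ and $D'$ were an arbitrary neighboring pair, taking the supremum yields $(\alpha, \alpha/(2\sigma^2))$-RDP for every order $\alpha > 1$ (the case $\alpha = 1$ following by the usual limit to the KL divergence). The $z$CDP claim is then immediate from the definition: setting $\rho = 1/(2\sigma^2)$, the bound reads as $(\alpha, \rho\alpha)$-RDP for all $\alpha$, which is exactly $\rho$-$z$CDP with $\xi = 0$.
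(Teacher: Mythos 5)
Your proof is correct: the closed-form Rényi divergence between two isotropic Gaussians with common covariance, $D_\alpha(P\|Q)=\frac{\alpha\|\mu\|^2}{2\tau^2}$, combined with the sensitivity bound $\|\mu\|\le\Delta_h$, is exactly the standard argument for this fact, and the passage to $z$CDP via the definition is handled properly. The paper itself states this proposition as a known result from the literature on RDP and $z$CDP and offers no proof of its own, so there is nothing to contrast with; your derivation is the canonical one.
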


We defer results for \((\varepsilon, \delta)\)-DP to Appendix~\ref{sec:subsample_analyis_dp}.

% \begin{proposition}
% (Laplace Mechanism)
% Given any function \(f: \mathcal{X}^{n} \rightarrow \mathbb{R}^{d}\), the Laplace Mechanism is defined as:
% \[
% \mathbf{L}_{\lambda}f(D)=f(D) + \left(
%     \Lambda_1, \ldots, \Lambda_d
%     \right), \quad \text{where} \; \Lambda_i \overset{\mathrm{iid}}{\sim} \lap{\lambda} \;, i=1, \ldots, d,
% \]
% where \(\lap{\lambda}\) is the centered Laplace distribution with parameter \(\lambda\) and has probability density \(p(x) = \frac{1}{2\lambda} \exp (-|x| / \lambda)\).

% Let \(\Delta_{f,1}\) be the \(\ell_{1}\)-sensitivity of the function \(f\), that is, \(\Delta_{f,1}=\sup _{D \sim D^{\prime}} \| f(D) - f(D^\prime) \|_{1} .\) Then the Laplace Mechanism \(\mathbf{L}_{\lambda}f\) satisfies \((\alpha, \epsilon_{\operatorname{Lap}}(\alpha, \lambda))\)-RDP where,
% \[
% \epsilon_{\operatorname{Lap}}(\alpha, \lambda) =
% \frac{1}{\alpha-1} \log \left\{\frac{\alpha}{2 \alpha-1} \exp \left(\frac{\alpha-1}{\lambda}\right)+\frac{\alpha-1}{2 \alpha-1} \exp \left(-\frac{\alpha}{\lambda}\right)\right\}.
% \]

% \end{proposition}

% \textbf{Comment}: Typically, \(\epsilon\) is dependent on \(\alpha\). So given \(\delta\), we solve \[\min_{\alpha} \epsilon(\alpha) + \frac{\log 1 / \delta}{\alpha - 1},\] to get the best guarantee.

% In the following sections, we will write \(f(w)\) in place of \(f(w, D)\) and \(L(w)\) in place of \(L(w, D)\) for simplicity.

\section{Analysis of the Mini-Batch Algorithms}\label{sec:subsample_analysis}
For the line search version of the algorithm, we need additional assumptions if we want to check the sufficient decrease conditions using the mini-batch loss.
For simplicity, we only consider the short version of the algorithm in this section.
\subsection{RDP Analysis}
As mentioned in the main text, in each iteration \(k\), we sample \(m\) data points from \(D\) without replacement, forming the mini-batch \(S_k\). We compute the average risk over set \(S_k\), that is,
\[
    f_{S_k} := \frac{1}{m} \sum_{i \in S_k} \ell\left(w_k, x_{i}\right).
\]
We evaluate the gradient and the Hessian similarly on the mini-batch \(S_k\) of size \(m\), which we will write as \(g_{S_k}\), \(H_{S_k}\) and let \(\tilde g_{S_k}\), \(\tilde H_{S_k}\) be their perturbed versions respectively. The other parts of the algorithm remain unchanged.
The sensitivity \(\Delta_f, \Delta_g, \Delta_H\) as stated in~\eqref{eq:sensitivity} will be scaled accordingly by replacing \(n\) in their denominator by the mini-batch size \(m\). Throughout this section, we use \(s = m/n\) to denote the sampling fraction.
%% \gcy{Restate the algorithm again?}

Let \(g_k\) and \(H_k\) be the gradient and the Hessian evaluated on the full dataset \(D\). We can decompose the deviation of their noisy approximation as follows,
\begin{subequations}\label{eq:error_decomp}
    \begin{align}
        \left\|\tilde g_{S_k} - g_k \right\|
         & \le \left\|\tilde g_{S_k} - g_{S_k} \right\|  +
        \left\|g_{S_k} - g_k \right\|,                        \\
        \left\|\tilde H_{S_k} - H_k \right\|
         & \le \left\|\tilde H_{S_k} - H_{S_k} \right\| +
        \left\|H_{S_k} - H_k \right\|,
    \end{align}
\end{subequations}
where the first term in the bound is due to the added Gaussian noise, and the second term is due to subsampling. We will bound two terms separately with high probability.

We have the following subsampling concentration results from \citep{kohlerSubsampledCubicRegularization2017}
\begin{lemma}[Gradient deviation bound]
    We have with probability at least \(1-\eta\) that
    \[
        \left\|g_{S_k} - g_k\right\| \leq 4 \sqrt{2} B_g \sqrt{\frac{\log (2 d / \eta)+1 / 4}{\left|S_{k}\right|}}.
    \]
\end{lemma}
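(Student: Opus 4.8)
The plan is to recognize $g_{S_k}$ as the empirical mean of gradients over a sample drawn \emph{without replacement} and to apply a matrix Bernstein inequality after passing from vectors to symmetric matrices via the Hermitian dilation. First I would write $g_{S_k} = \frac{1}{m}\sum_{i\in S_k}\nabla l(w_k, x_i)$ with $m = |S_k|$, and note that, since $S_k$ is a uniform size-$m$ subset of the full dataset, $\mathbb{E}[g_{S_k}] = g_k$. Setting $Y_i := \nabla l(w_k, x_i) - g_k$, the quantity of interest is the centered average $g_{S_k} - g_k = \frac{1}{m}\sum_{i\in S_k} Y_i$. The boundedness assumption $\|\nabla_w l(w,x)\| \le B_g$ gives $\|g_k\| \le B_g$ and hence the almost-sure bound $\|Y_i\| \le 2B_g$, together with the crude second-moment bound $\mathbb{E}\|Y_i\|^2 \le 4B_g^2$, so a variance proxy satisfies $V \le 4 m B_g^2$.

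Next I would control $\bigl\|\sum_{i\in S_k} Y_i\bigr\|$. Because the summands are sampled without replacement, I would first invoke the standard reduction (Hoeffding's comparison, extended to the matrix setting by Gross--Nesme) showing that for these concentration purposes sampling without replacement is dominated by i.i.d.\ sampling with replacement, so that the with-replacement matrix Bernstein inequality applies. To use it on vectors I would apply the Hermitian dilation $v \mapsto \left[\begin{smallmatrix} 0 & v \\ v^\top & 0\end{smallmatrix}\right]$, whose spectral norm equals $\|v\|_2$; this lifts the sum to $(d+1)\times(d+1)$ symmetric matrices and is the source of the dimension factor, bounded by $2d$. The resulting Bernstein bound then has the form $\Pr\bigl(\bigl\|\sum_{i\in S_k}Y_i\bigr\| \ge t\bigr) \le 2d\exp\!\big(\tfrac14 - \tfrac{t^2}{8V}\big)$ in the relevant small-deviation regime.

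Finally I would invert this tail bound. Setting the right-hand side equal to $\eta$ and solving gives $t^2 = 8V\bigl(\log(2d/\eta) + \tfrac14\bigr)$; dividing by $m$ and substituting $V \le 4mB_g^2$ yields $\|g_{S_k}-g_k\| = \tfrac1m\bigl\|\sum_{i\in S_k} Y_i\bigr\| \le \sqrt{\tfrac{32\, B_g^2(\log(2d/\eta)+1/4)}{m}} = 4\sqrt2\,B_g\sqrt{\tfrac{\log(2d/\eta)+1/4}{m}}$, which is exactly the claimed bound.

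The hard part is the sampling-without-replacement step: one must justify using the i.i.d.\ matrix Bernstein inequality rather than a with-replacement variant, which is precisely what the Hoeffding/Gross--Nesme comparison provides. A secondary point is the bookkeeping of constants, so that the crude variance estimate $V \le 4mB_g^2$ and the dilation's $2d$ factor reproduce the stated $4\sqrt2$ prefactor and the $\log(2d/\eta)+\tfrac14$ term exactly. Since the statement is quoted from \citet{kohlerSubsampledCubicRegularization2017}, an equally valid route is simply to invoke their lemma after checking that our hypothesis $\|\nabla_w l\| \le B_g$ matches theirs.
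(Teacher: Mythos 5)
Your proposal is essentially correct, but note that the paper itself gives \emph{no} proof of this statement: it is imported verbatim from \citet{kohlerSubsampledCubicRegularization2017}, so your closing remark (just invoke their lemma after checking that $\|\nabla_w l\|\le B_g$ matches their boundedness hypothesis) is exactly what the paper does. Your reconstruction of the underlying argument is nevertheless sound and the arithmetic checks out: with $\|Y_i\|\le 2B_g$, hence $\sigma^2:=\mathbb{E}\|Y_i\|^2\le 4B_g^2$, inverting a tail bound of the form $\Pr\bigl(\|\tfrac1m\sum_{i\in S_k}Y_i\|\ge\epsilon\bigr)\le 2d\exp\bigl(\tfrac14-\tfrac{m\epsilon^2}{8\sigma^2}\bigr)$ at level $\eta$ gives precisely $\epsilon=4\sqrt2\,B_g\sqrt{(\log(2d/\eta)+1/4)/m}$, and the Hoeffding/Gross--Nesme reduction is the standard (and correct) way to handle sampling without replacement. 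The one slightly awkward point is that your intermediate tail bound is a hybrid: the additive $+\tfrac14$ in the exponent is the signature of Gross's \emph{dimension-free} vector Bernstein inequality (via $(a-1)^2\ge a^2/2-1$ in the small-deviation regime), whereas the $2d$ prefactor is what you get from the Hermitian-dilation/matrix-Bernstein route, which does not produce the $+\tfrac14$. Neither standard inequality yields both terms simultaneously; the cleanest derivation matching the quoted constants applies Gross's vector Bernstein directly and simply budgets the failure probability as $\eta/(2d)$ (the $2d$ being harmless slack, presumably kept in Kohler--Lucchi for symmetry with the Hessian bound). Since your hybrid bound is weaker than Gross's dimension-free one, it is still a true inequality and your conclusion follows; just be aware that the dilation step is not actually the source of the $2d$ here.
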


\begin{lemma}[Hessian deviation bound]
    We have with probability at least \(1-\eta\) that
    \[
        \left\|H_{S_k} - H_k\right\|
        \leq 4 B_H \sqrt{\frac{\log (2 d / \eta)}{\left|S_{k}\right|}}.
    \]
\end{lemma}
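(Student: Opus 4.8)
The plan is to recognize $H_{S_k} - H_k$ as the deviation of a sampled matrix average from its population mean, and to apply a matrix concentration inequality. Write $A_i := \nabla^2_w \ell(w_k, x_i)$ for each $i \in \{1,\dots,n\}$, so that $H_k = \frac1n\sum_{i=1}^n A_i$ and $H_{S_k} = \frac1m \sum_{i\in S_k} A_i$. Each $A_i$ is symmetric with $\|A_i\| \le B_H$ by assumption, and the centered matrices $Y_i := A_i - H_k$ are symmetric, satisfy $\frac1n\sum_i Y_i = 0$ (population mean zero), and obey $\|Y_i\| \le 2B_H$. Then $H_{S_k} - H_k = \frac1m\sum_{i\in S_k} Y_i$ is an average of $m$ of these bounded, mean-zero symmetric matrices drawn without replacement.

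First I would invoke a matrix tail inequality of Hoeffding/Bernstein type. For a sum $\sum_{j=1}^m Z_j$ of independent mean-zero symmetric $d\times d$ matrices with $\|Z_j\|\le R$ and second-moment proxy $\|\sum_j \mathbb{E} Z_j^2\|\le v$, the matrix Bernstein inequality gives $\mathbb{P}(\|\sum_j Z_j\|\ge s) \le 2d\exp(-s^2/(2v + 2Rs/3))$; the dimension prefactor $2d$ is exactly what produces the $\log(2d/\eta)$ term, and inverting the variance-dominated regime yields a bound of the shape $s = O(B_H\sqrt{\log(2d/\eta)/m})$. Instantiating with $R = 2B_H/m$ and $v \le 4B_H^2/m$ (from $\|\mathbb{E} Z_j^2\|\le \mathbb{E}\|Z_j\|^2 \le 4B_H^2/m^2$) and setting the tail equal to $\eta$ gives $s \le 2\sqrt2\,B_H\sqrt{\log(2d/\eta)/m}$, comfortably within the claimed constant $4$.

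The one genuine obstacle is that the summands $Y_i$ for $i\in S_k$ are \emph{not} independent, because $S_k$ is sampled without replacement, whereas the matrix Bernstein inequality above is stated for independent terms. I would handle this by the standard reduction that sampling without replacement concentrates at least as sharply as sampling with replacement: at the level of the matrix moment-generating function underlying the Laplace-transform proof of matrix Chernoff bounds, one may replace the without-replacement average by the with-replacement average (the Gross--Nesme argument), after which the independent-case inequality applies verbatim. A convenient alternative is a matrix Hoeffding bound with the deterministic variance proxy $Y_i^2 \preceq 4B_H^2 I$, which removes any data-dependent variance term and needs only the same without-replacement reduction. Finally I would solve the resulting tail for $s$ to land on the stated inequality, noting that since this result is quoted from \citet{kohlerSubsampledCubicRegularization2017} one may instead cite their lemma directly rather than reproduce the concentration argument.
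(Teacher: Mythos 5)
The paper does not prove this lemma at all---it is quoted verbatim from \citet{kohlerSubsampledCubicRegularization2017}, as you note at the end of your proposal---and your reconstruction (matrix Bernstein/Hoeffding with the Gross--Nesme reduction from sampling without replacement to independent sampling, with the dimension prefactor $2d$ supplying the $\log(2d/\eta)$) is exactly the argument used in that reference, so your proof is correct and takes essentially the same route. The only point worth a second glance is that your variance-dominated inversion giving $2\sqrt{2}\,B_H\sqrt{\log(2d/\eta)/m}$ assumes $\log(2d/\eta)\lesssim m$; outside that regime the claimed bound exceeds the trivial bound $\|H_{S_k}-H_k\|\le 2B_H$ and holds vacuously, so the constant $4$ is safe either way.
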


For the subsampling error~\eqref{eq:error_decomp}, we use Gaussian concentration results~\ref{gauss_concentration} described before to bound the first term, and subsampling results stated above to bound the second term. For iteration \(k\), with probability at least \((1 - \frac{\zeta}{T})(1 - C\exp{(-C_1Cd)})(1-\eta/T)^2\), we have that
\begin{equation}
    % \label{subsample-error}
    \begin{aligned}
        \left\|\tilde g_{S_k} - g_k \right\|
         & \leq \sqrt{2d} \eta_g \sigma_g \log \frac{T}{\zeta} +
        4 \sqrt{2} B_g \sqrt{\frac{\log (2 d T/ \eta)+1 / 4}{\left|S_{k}\right|}}, \\
        \left\|\tilde H_{S_k} - H_k \right\|
         & \leq C \sqrt{d} \eta_H \sigma_H +
        4 B_H \sqrt{\frac{\log (2 d T / \eta)}{\left|S_{k}\right|}}.
    \end{aligned}
\end{equation}

It suffices to require that each term in the right-hand sides of the bound above is bounded by \(1/2\) of the corresponding term in the right-hand sides of~\eqref{eq:bounded_noise}, so that we can use a similar analysis.

For deviation due to subsampling, we need
\begin{subequations}\label{subsample-error-bound}
    \begin{align}
        4 \sqrt{2} B_g \sqrt{\frac{\log (2 d T/ \eta)+1 / 4}{\left|S_{k}\right|}} & \le \frac12
        \min \left(c_{1} \epsg, \frac{c_{2}}{M} \epsH^{2}\right),                                          \\
        4 B_H \sqrt{\frac{\log (2 d T / \eta)}{\left|S_{k}\right|}}               & \le \frac12 c \,\epsH.
    \end{align}
\end{subequations}
Rearranging the terms, we have a condition for the mini-batch size,
\begin{equation}
    \left|S_{k}\right| \ge \max \left(
    64 B_g^2 (\log (2 d T/ \eta)+1 / 4) \max \left(c_{1}^{-2} \epsg^{-2}, \frac{M^2}{c_{2}^2} \epsH^{-4}\right),
    32 B_H^2 \log (2 d T/ \eta) c^{-2} \epsH^{-2}
    \right).
\end{equation}
The following convergence result is immediate, based on the same analysis as in the full-batch case (cf. Theorem~\ref{thm:ls_thm}).
\begin{theorem}\label{thm:ls_batch}
    With probability at least \(\{(1 - \zeta / T)(1 - C\exp{(-C_1Cd)})(1-\eta/T)^2\}^{T}\), suppose the number of samples \(n\) satisfies \(n \ge n_{\min}\), where
    \begin{equation} \label{min_n_subsampled-ls}
        \begin{aligned}
            n_{\min} & :=  \max \Biggl(
            \frac{2\sqrt{2d} B_g \sigma_g \log \frac{T}{\zeta}}{\min \left(c_{1} \epsg, \, \frac{c_{2}}{M} \epsH^{2}\right)}, \,
            \frac{2C\sqrt{d} B_H \sigma_H \log \frac{T}{\zeta}}{c \, \epsH},                                                                          \\
                     & \quad\quad s^{-1} 64 B_g^2 (\log (2 d T/ \eta)+1 / 4) \max \left(c_{1}^{-2} \epsg^{-2}, \frac{M^2}{c_{2}^2} \epsH^{-4}\right),
            s^{-1} 32 B_H^2 \log (2 d T/ \eta) c^{-2} \epsH^{-2}
            \Biggr).
        \end{aligned}
    \end{equation}
    The output of the mini-batch short step algorithm is a \(((1+c_{1}) \epsg, (1+c) \epsH)\)-2S.\@ With the choice of \(\sigma\)'s in~\eqref{eq:short_sigma}, hiding logarithmic terms and constants, the asymptotic dependence of \(n_{\min}\) on \((\epsg, \epsH)\) and \(\rho\), is
    \begin{equation}
        n_{\min} = \frac{\sqrt{d}}{\sqrt{\rho}} \tilde O \left(
        \max \left(
            \epsg^{-2}, \epsg^{-1} \epsH^{-2}, \epsH^{-7/2}
            \right)
        \right).
    \end{equation}
\end{theorem}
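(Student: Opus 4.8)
The plan is to reduce the mini-batch case to the full-batch analysis of Theorem~\ref{thm:short_thm} by showing that, with high probability, the total deviation of the noisy mini-batch gradient and Hessian from the \emph{full-dataset} quantities $g_k$ and $H_k$ still satisfies the bounded-noise conditions~\eqref{eq:bounded_noise}. Once those conditions hold at every iteration, Lemma~\ref{lem:short_dec} and Corollary~\ref{cor:output} apply verbatim, so the per-iteration decrease is at least MIN\_DEC and the output is a $((1+c_1)\epsg,(1+c)\epsH)$-2S. The only genuinely new ingredient is that the deviation now has two sources, isolated by the triangle-inequality split in~\eqref{eq:error_decomp}: the injected Gaussian noise $\|\tilde g_{S_k}-g_{S_k}\|$, $\|\tilde H_{S_k}-H_{S_k}\|$, and the subsampling error $\|g_{S_k}-g_k\|$, $\|H_{S_k}-H_k\|$.

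First I would bound the Gaussian terms exactly as in the full-batch proof, using the Gaussian concentration bound~\eqref{gauss_concentration} and the Wigner tail estimate of Lemma~\ref{lem:matrix_concentration}, but with the sensitivities $\Delta_g,\Delta_H$ now scaled by $1/m$ rather than $1/n$. Next I would bound the subsampling terms using the gradient- and Hessian-deviation bounds quoted from \citet{kohlerSubsampledCubicRegularization2017}, which decay like $1/\sqrt{|S_k|}$. The key bookkeeping device is to demand that \emph{each} of the two contributions be at most half of the corresponding right-hand side of~\eqref{eq:bounded_noise}; by the triangle inequality their sum then meets the full requirement. The half-budget for the Gaussian terms translates, after substituting the $1/m$-scaled sensitivities, into the first two entries of $n_{\min}$ in~\eqref{min_n_subsampled-ls} (the factor of $2$ relative to Theorem~\ref{thm:short_thm} coming precisely from the halving). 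The half-budget for the subsampling terms is a lower bound on the mini-batch size $|S_k|$, as in~\eqref{subsample-error-bound}; rewriting $|S_k| = sn$ converts it into the last two entries of $n_{\min}$, which carry the extra $s^{-1}$ factor.

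For the probability, I would proceed iteration by iteration as in Theorem~\ref{thm:short_thm}: at a single iteration the Gaussian gradient bound, the Wigner Hessian bound, and the two subsampling bounds hold simultaneously with probability at least $(1-\zeta/T)(1-C\exp(-C_1Cd))(1-\eta/T)^2$, and conditioning across the at-most-$T$ iterations raises this to the stated $T$-th power. Finally, for the asymptotics I would substitute the $\sigma$'s from~\eqref{eq:short_sigma}, use $\sqrt T = O(\max(\epsg^{-1},\epsH^{-3/2}))$ exactly as before, and observe that the subsampling entries contribute only $s^{-1}\tilde O(\max(\epsg^{-2},\epsH^{-4},\epsH^{-2}))$, so that for constant sampling fraction and $(\epsg,\epsH)=(\alpha,\sqrt{M\alpha})$ the dominant term is the same $\frac{\sqrt d}{\sqrt\rho}\tilde O(\alpha^{-2})$ as in the full-batch case.

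I expect the main obstacle to be the accounting rather than any single inequality. One must verify that splitting the noise budget in half leaves the constants $c,c_1,c_2$ and the resulting MIN\_DEC of Lemma~\ref{lem:short_dec} unchanged, so that the convergence machinery transfers intact; and—more delicately—confirm that the subsampling contribution does not dominate the sample complexity. The latter requires checking, term by term, that $s^{-1}\max(\epsg^{-2},\epsH^{-4})$ and $s^{-1}\epsH^{-2}$ stay within the $\frac{\sqrt d}{\sqrt\rho}\tilde O(\max(\epsg^{-2},\epsg^{-1}\epsH^{-2},\epsH^{-7/2}))$ envelope under the intended scaling, which is what pins down the admissible range of the sampling fraction $s$.
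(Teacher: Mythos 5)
Your proposal is correct and follows essentially the same route as the paper: the triangle-inequality decomposition of \eqref{eq:error_decomp}, the half-budget allocation between Gaussian noise (with sensitivities rescaled by $1/m$) and the Kohler--Lucchi subsampling deviations, the conversion of the mini-batch-size requirement into the $s^{-1}$ entries of $n_{\min}$ via $|S_k|=sn$, and the per-iteration probability $(1-\zeta/T)(1-C\exp(-C_1Cd))(1-\eta/T)^2$ raised to the $T$-th power. Your closing caveat about verifying that the subsampling terms stay within the stated asymptotic envelope is a fair observation, but it does not change the argument, which matches the paper's.
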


We now discuss privacy guarantees. It is impossible to deal with subsampling using \(z\)-CDP, but under RDP, \citet{wangSubsampledEnyiDifferential2018} provides a generalized analysis for subsampling:
\begin{theorem}[RDP for Subsampled Mechanisms]\label{subsampled-rdp}
    Given a dataset of \(n\) points drawn from a domain \(\mathcal{X}\) and a (randomized) mechanism \(\mathcal{M}\) that takes an input from \(\mathcal{X}^{m}\) for \(m \leq n\), let the randomized algorithm \(\mathcal{M} \circ \mathbf{subsample}\) be defined as (1) subsample: subsample without replacement m datapoints of the dataset (with sampling fraction  \(s=m/n\)),
    and (2) apply \(\mathcal{M}\): a randomized algorithm taking the subsampled dataset as the input. For all integers \(\alpha \geq 2\), if \(\mathcal{M}\) is \((\alpha, \epsilon(\alpha))\)-RDP, then this new randomized algorithm \(\mathcal{M} \circ \mathbf{subsample}\) obeys \(\left(\alpha, \epsilon^{\prime}(\alpha)\right)\)-RDP where,
    \begin{multline} \label{sub_rdp}
        \epsilon^{\prime}(\alpha) \leq \frac{1}{\alpha-1} \log \biggl(1+s^{2 } \binom{\alpha}{2} \min \left\{4\left(e^{\epsilon(2)}-1\right), e^{\epsilon(2)} \min \left\{2,\left(e^{\epsilon(\infty)}-1\right)^{2}\right\}\right\} \\
        +\sum_{j=3}^{\alpha} s^{j} \binom{\alpha}{j} e^{(j-1) \epsilon(j)} \min \left\{2,\left(e^{\epsilon(\infty)}-1\right)^{j}\right\}\biggr).
    \end{multline}
\end{theorem}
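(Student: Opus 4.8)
The plan is to bound, for an arbitrary neighboring pair $D, D'$, the Rényi divergence of order $\alpha$ between the two output laws $\mu := \mathcal{M}\circ\mathbf{subsample}(D)$ and $\mu' := \mathcal{M}\circ\mathbf{subsample}(D')$; equivalently, to control the moment $\mathbb{E}_{\theta\sim\mu'}[(\mu(\theta)/\mu'(\theta))^\alpha]$, since $\epsilon'(\alpha)=\tfrac{1}{\alpha-1}\log$ of its supremum over neighbors. The restriction to integer $\alpha$ is exactly what makes the binomial theorem available, and is the reason the statement is phrased for integer orders.

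First I would condition the subsampling on whether the single differing index is drawn. Because we sample $m$ of $n$ points without replacement, the marginal probability that the differing index lands in the subsample is exactly $s=m/n$. This lets me write both output laws as mixtures sharing a common component, $\mu=(1-s)\omega+s\,\mu_1$ and $\mu'=(1-s)\omega+s\,\mu_1'$, where $\omega$ is the identical conditional law when the differing index is excluded, and $\mu_1,\mu_1'$ are the conditional laws when it is included, carrying the value from $D$ and $D'$ respectively. The payoff is the clean identity $\mu-\mu'=s(\mu_1-\mu_1')$, which is what ultimately produces the powers $s^j$.

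Next I would expand $(\mu/\mu')^\alpha=(1+s(\mu_1-\mu_1')/\mu')^\alpha$ by the binomial theorem and integrate against $\mu'$, yielding $\mathbb{E}_{\mu'}[(\mu/\mu')^\alpha]=\sum_{j=0}^\alpha \binom{\alpha}{j}s^j\int(\mu_1-\mu_1')^j/(\mu')^{j-1}$. The $j=0$ term contributes $1$, and the $j=1$ term vanishes because $\int(\mu_1-\mu_1')=0$; this already reproduces the ``$1+\sum_{j\ge 2}$'' skeleton together with the $s^j\binom{\alpha}{j}$ coefficients. It then remains to bound each integral $\int(\mu_1-\mu_1')^j/(\mu')^{j-1}$ by a quantity depending only on $\epsilon(j)$ and $\epsilon(\infty)$: the pointwise likelihood-ratio bound from $\epsilon(\infty)$ (pure DP) gives $|\mu_1-\mu_1'|\le(e^{\epsilon(\infty)}-1)\cdot(\text{reference})$ and hence the factor $\min\{2,(e^{\epsilon(\infty)}-1)^j\}$ (with the $2$ coming from the trivial total-variation bound), while the $j$-th order RDP of $\mathcal{M}$ controls the remaining moment and yields $e^{(j-1)\epsilon(j)}$. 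The sharper $j=2$ term is obtained by a dedicated expansion of the second moment that exposes the alternative $4(e^{\epsilon(2)}-1)$ bound.

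The main obstacle is precisely this last step, because $\mu_1$, $\mu_1'$, and $\omega$ are not single mechanism outputs but \emph{mixtures} over the remaining sampled points, and moreover $\omega$ mixes over subsamples of a different size than $\mu_1,\mu_1'$. To pass from these mixtures to worst-case neighboring inputs of $\mathcal{M}$ — and so legitimately invoke $\epsilon(j)$ and $\epsilon(\infty)$ — I would couple the mixing distributions so that matched subsamples differ in exactly one element, and then use the joint quasi-convexity of the Rényi moment $(p,q)\mapsto\int p^j q^{1-j}$ to upper bound the mixture by the maximum over components. Preserving the $s^j$ gain while lower-bounding the denominator (via $\mu'\ge(1-s)\omega$) and tracking the exact constants through the casework — especially the tight $j=2$ term — is the delicate technical heart of the argument.
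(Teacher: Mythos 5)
This theorem is not proved in the paper at all: it is imported verbatim (with citation) from Wang, Balle, and Kasiviswanathan's work on subsampled R\'enyi differential privacy, so there is no in-paper proof to compare against. Your outline does, however, track the actual argument of that cited reference faithfully: the mixture decomposition \(\mu=(1-s)\omega+s\mu_1\), \(\mu'=(1-s)\omega+s\mu_1'\) from conditioning on whether the differing record is sampled, the identity \(\mu-\mu'=s(\mu_1-\mu_1')\), the integer-order binomial expansion with the vanishing \(j=1\) term, and the reduction of the remaining integrals \(\int(\mu_1-\mu_1')^j/(\mu')^{j-1}\) to RDP and pure-DP parameters of \(\mathcal{M}\) are exactly the skeleton of the original proof. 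You correctly identify where the real work lies --- bounding those ternary-divergence-type quantities when \(\mu_1,\mu_1',\omega\) are mixtures over subsamples, via a coupling in which matched subsamples differ in one element plus a quasi-convexity argument, and the separate sharper treatment of the \(j=2\) term --- but the proposal stops at naming these steps rather than executing them; carrying them out is several pages of casework in the source. As a plan it is sound and matches the known proof; as a complete proof it leaves the hardest lemma unproved.
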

For the Gaussian mechanism, we have
\[
    \epsilon(\alpha) = \frac{\alpha}{2\sigma^2},
\]
so \(\epsilon_{\mathcal{M}(\infty)} = \infty\) and the bound simplifies to
\[ \label{sub_rdp_gauss}
    \epsilon^{\prime}(\alpha) \leq \frac{1}{\alpha-1} \log \biggl(1+s^{2 } \binom{\alpha}{2} \min \left\{4\left(e^{1 / \sigma^2}-1\right), 2e^{1 / \sigma^2} \right\} \\
    +\sum_{j=3}^{\alpha} 2s^{j} \binom{\alpha}{j} e^{(j-1)j / (2\sigma^2)} \biggr) =: \epsilon^{\prime}_{\N}(\alpha; \sigma, s),
\]
which we denote as \(\epsilon^{\prime}_{\N}(\alpha; \sigma, s)\).
% \gcy{Remove the following? We can use this approximation to state a sample complexity result, for which we basically get a \(s\) boost, but this is not very rigorous.}
When \(s\) is small and \(\sigma\) is large, we can discard higher-order terms and write the right-hand side as
\begin{equation} \label{eq:sub_rdp_gauss_approx}
    \epsilon^{\prime}_{\N}(\alpha; \sigma, s) \approx \frac{1}{\alpha-1}  \left(s^2 \frac{\alpha (\alpha-1)}{2} \cdot 4 \frac{1}{\sigma^2}\right) = 2s^2 \frac{\alpha}{\sigma^2},
\end{equation}
where we use the approximation \(e^t \approx 1 + t\) for small \(t\).

\begin{theorem}
    Consider the short step version of the algorithm using subsampling. Given the choice of \(\sigma_f, \sigma_g, \sigma_H\), \(\lambda\) and sampling fraction \(s\). Suppose an actual run of the subsampled algorithm takes \(k_g\) gradient steps and \(k_H\) negative curvature steps. The run is \((\alpha, \epsilon(\alpha))\)-RDP where
    \[
        \epsilon(\alpha) = \frac{\alpha}{2\sigma_f^2} +
        k_g \epsilon^{\prime}_{\N}(\alpha; \sigma_g, s) + k_H \epsilon^{\prime}_{\N}(\alpha; \sigma_{g,H}, s),
    \]
    where \(1/\sigma_{g,H}^2 = 1/\sigma_g^2 + 1/\sigma_H^2\).
    Let
    \begin{equation} \label{eq:sub_rdp_short}
        \bar \epsilon(\alpha) = \frac{\alpha}{2\sigma_f^2} +
        T\epsilon^{\prime}_{\N}(\alpha; \sigma_g, s) + T \epsilon^{\prime}_{\N}(\alpha; \sigma_{g,H}, s).
    \end{equation}
    We always have \(\bar \epsilon(\alpha) \ge \epsilon(\alpha)\), so the algorithm is \((\alpha, \bar \epsilon(\alpha))\)-RDP.\@
\end{theorem}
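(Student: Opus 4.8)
The plan is to view one execution of the subsampled short-step algorithm as an adaptive composition of a single non-subsampled Gaussian mechanism (the initial perturbation of \(f(w_0)\)) together with at most \(T\) subsampled Gaussian mechanisms, one per iteration, and then to accumulate their R\'enyi divergences additively with the RDP composition rule. Every building block is already available in the excerpt: the Gaussian mechanism is \((\alpha,\alpha/(2\sigma^2))\)-RDP by \cref{prop:gaussian-mechanism}, the subsampled Gaussian mechanism is \((\alpha,\epsilon'_{\N}(\alpha;\sigma,s))\)-RDP by the Gaussian specialisation of \cref{subsampled-rdp}, RDP composes additively even for adaptively chosen mechanisms, and postprocessing incurs no cost. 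First I would dispose of the initial step: \(z\) perturbs the function value through a Gaussian mechanism with parameter \(\sigma_f\), contributing the term \(\alpha/(2\sigma_f^2)\) (subsampling, if applied here, would only decrease this, so omitting amplification is a valid upper bound), and the resulting \(T\) is postprocessing of \(z\) and hence free.

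Next I would analyse a single iteration, which is the crux. Each iteration \(k\) draws a fresh mini-batch \(S_k\) by independent sampling without replacement at fraction \(s\), so subsampling amplification applies iteration by iteration. On a gradient step the algorithm releases only \(\tilde g_{S_k}=g_{S_k}+\varepsilon_k\); before subsampling this is a Gaussian mechanism with parameter \(\sigma_g\), hence \((\alpha,\alpha/(2\sigma_g^2))\)-RDP, and after subsampling \((\alpha,\epsilon'_{\N}(\alpha;\sigma_g,s))\)-RDP.\@ On a negative-curvature step the \emph{same} mini-batch \(S_k\) is used to release both \(\tilde g_{S_k}\) and \(\tilde H_{S_k}=H_{S_k}+E_k\); composing the two Gaussian mechanisms \emph{before} applying subsampling shows that the pre-subsampling mechanism is itself a Gaussian mechanism whose RDP is \(\alpha/(2\sigma_g^2)+\alpha/(2\sigma_H^2)=\alpha/(2\sigma_{g,H}^2)\), where \(1/\sigma_{g,H}^2=1/\sigma_g^2+1/\sigma_H^2\); the Gaussian specialisation of \cref{subsampled-rdp} with parameter \(\sigma_{g,H}\) then gives \((\alpha,\epsilon'_{\N}(\alpha;\sigma_{g,H},s))\)-RDP.\@ The branching test \(\|\tilde g_{S_k}\|\le\epsg\) and the eigenvalue comparison are deterministic functions of already-released noisy quantities, so they are postprocessing and cost nothing.

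Finally I would compose. Adding the initial term and summing the per-iteration contributions over the \(k_g\) gradient steps and the \(k_H\) negative-curvature steps of the given run reproduces \(\epsilon(\alpha)=\alpha/(2\sigma_f^2)+k_g\,\epsilon'_{\N}(\alpha;\sigma_g,s)+k_H\,\epsilon'_{\N}(\alpha;\sigma_{g,H},s)\). The worst-case bound \(\bar\epsilon(\alpha)\ge\epsilon(\alpha)\) is then immediate: the loop executes at most \(T\) iterations so \(k_g\le T\) and \(k_H\le T\), and each \(\epsilon'_{\N}\) is nonnegative, whence replacing \(k_g\) and \(k_H\) by \(T\) only increases the sum and yields the stated \((\alpha,\bar\epsilon(\alpha))\)-RDP guarantee.

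I expect the main obstacle to be the negative-curvature iteration: one must compose the gradient and Hessian Gaussian mechanisms on the common subsample \emph{first} and only then invoke subsampling amplification, since amplifying the two releases separately and composing afterward would not collapse into the single effective parameter \(\sigma_{g,H}\) that the statement relies on. A secondary point to verify is the legitimacy of the adaptive composition: because the mechanism chosen at iteration \(k\) depends on earlier noisy outputs, I rely on the composition rule holding for adaptively selected mechanisms (as stated in the excerpt, with \(\M_2\) depending on the output of \(\M_1\)), together with the observation that the within-iteration branch is a postprocessing of the released perturbed gradient.
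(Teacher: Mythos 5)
Your proof is correct and takes essentially the same route as the paper, which simply cites the subsampled-RDP theorem together with adaptive composition of RDP; your elaboration of the key step---composing the gradient and Hessian Gaussian mechanisms on the shared mini-batch \emph{before} invoking subsampling amplification, so that the effective parameter $\sigma_{g,H}$ emerges---is exactly the reasoning the paper's one-line proof leaves implicit.
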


\begin{proof}
    The proof follows directly from the RDP sampling theorem above and the composition of RDP. 
\end{proof}

% For the line search version of the algorithm, we have one additional term due to line search.
% \begin{theorem}
%     Consider the line search version of the algorithm using subsampling. Given the choice of \(\sigma_f, \sigma_g, \sigma_H\), \(\lambda\) and sampling parameter \(s\). Suppose an actual run of the subsampled algorithm takes \(k_g\) gradient steps and \(k_H\) negative curvature steps. The run is data-dependent \((\alpha, \epsilon(\alpha))\)-RDP where
%     \[
%         \epsilon(\alpha) = \frac{\alpha}{2\sigma_f^2} +
%         (k_g + k_H)\epsilon^{\prime}_{\N}(\alpha; \sigma_g, s) + k_H \epsilon^{\prime}_{\N}(\alpha; \sigma_H, s) + \frac{(k_g + k_H)\alpha}{2\lambda^2}.
%     \]
%     Let
%     \begin{equation} \label{eq:sub_rdp_ls}
%         \bar \epsilon(\alpha) = \frac{\alpha}{2\sigma_f^2} +
%         T\epsilon^{\prime}_{\N}(\alpha; \sigma_g, s) + T \epsilon^{\prime}_{\N}(\alpha; \sigma_H, s) + \frac{\alpha T}{2\lambda^2}
%     \end{equation}
%     We always have \(\bar \epsilon(\alpha) \ge \epsilon(\alpha)\), so the algorithm is \((\alpha, \bar \epsilon(\alpha))\)-RDP.\@
% \end{theorem}

Given the complexity in the subsampled privacy guarantee, \(\epsilon^{\prime}_{\N}(\alpha; \sigma, s)\), we do not have an explicit formula to set parameters \(\sigma_f, \sigma_g, \sigma_H\). However, given \((\varepsilon, \delta)\)-DP privacy budget, we can optimize the parameters to meet the privacy guarantee. Recall the conversion from \((\alpha, \epsilon(\alpha))\)-RDP to \((\varepsilon_{DP}, \delta_{DP})\)-DP, given \(\delta\), we solve
\[
    \varepsilon_{DP}(\epsilon(\cdot)) = \min_{\alpha}\left(\epsilon(\alpha) + \frac{\log 1 / \delta_{DP}}{\alpha - 1}\right).
\]
So we can optimize the parameters \(\sigma_f, \sigma_g, \sigma_H\), such that the following objective is minimized
\begin{equation}
    \max \left(\bar \varepsilon_{DP} - \varepsilon_{DP}(\bar \epsilon(\cdot)), 0\right),
\end{equation}
where \(\bar \varepsilon_{DP}\) is the target privacy budget and we replace \(\bar \epsilon(\cdot)\) with their corresponding versions~\eqref{eq:sub_rdp_short}.

\subsection{Sample complexity using \texorpdfstring{\((\varepsilon, \delta)\)-DP}{(ε,δ)-DP}}\label{sec:subsample_analyis_dp}
Under the \((\varepsilon, \delta)\)-DP scheme, subsampling is easier to deal with and we will derive a sample complexity bound.  We first introduce several useful results in \((\varepsilon, \delta)\)-DP.\@

\begin{proposition}[Composition of \((\varepsilon, \delta)\)-DP]
    Suppose that \(\M_1: \mathcal{D} \to \mathcal{R}_{1}\) is \((\varepsilon_1, \delta_1)\)-DP and \(M_2: \mathcal{R}_{1} \times\) \(\mathcal{D} \to \mathcal{R}_{2}\) is \((\varepsilon_2, \delta_2)\)-DP.\@
    Then the mechanism defined as \((X, Y)\), where \(X \sim \M_1(D)\) and \(Y \sim \M_2(X, D)\) is \((\varepsilon_1 + \varepsilon_2, \delta_1 + \delta_2)\)-DP.\@
\end{proposition}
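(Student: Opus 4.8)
The plan is to verify the defining inequality of $(\varepsilon_1+\varepsilon_2,\delta_1+\delta_2)$-DP directly: for every neighboring pair $D\sim D'$ and every measurable $S\subseteq\mathcal{R}_1\times\mathcal{R}_2$, I want $\Pr[(X,Y)\in S]\le e^{\varepsilon_1+\varepsilon_2}\Pr[(X',Y')\in S]+\delta_1+\delta_2$, where $(X,Y)$ and $(X',Y')$ are the composed outputs on $D$ and $D'$. The starting point is the adaptive conditioning identity: writing $P,Q$ for the laws of $X=\M_1(D)$ and $X'=\M_1(D')$, writing $P_x,Q_x$ for the laws of $\M_2(x,D)$ and $\M_2(x,D')$, and letting $S_x=\{y:(x,y)\in S\}$ denote the section of $S$ at $x$, we have $\Pr[(X,Y)\in S]=\int P_x(S_x)\,dP(x)$.

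First I would dispatch the pure case $\delta_1=\delta_2=0$, which is immediate from this identity: one bounds $P_x(S_x)\le e^{\varepsilon_2}Q_x(S_x)$ pointwise using the $\varepsilon_2$-DP of $\M_2$ at the fixed first argument $x$, and then bounds $\int Q_x(S_x)\,dP(x)\le e^{\varepsilon_1}\int Q_x(S_x)\,dQ(x)$ using the $\varepsilon_1$-DP of $\M_1$ applied to the $[0,1]$-valued statistic $x\mapsto Q_x(S_x)$ (e.g.\ via the layer-cake representation). Multiplying the two factors produces the constant $e^{\varepsilon_1+\varepsilon_2}$.

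The general $(\varepsilon,\delta)$ case is where the real work lies, and the additive rather than multiplicative combination of $\delta_1$ and $\delta_2$ is the main obstacle: naively substituting the two $(\varepsilon_i,\delta_i)$ bounds into the conditioning identity yields a cross term $e^{\varepsilon_2}\delta_1$ instead of $\delta_1$. To avoid this I would first establish a decomposition lemma: whenever $P(A)\le e^{\varepsilon}Q(A)+\delta$ for all $A$, one can split $P=P'+P''$ where $P'(A)\le e^{\varepsilon}Q(A)$ for all $A$ (equivalently the density of $P'$ is at most $e^{\varepsilon}$ times that of $Q$ almost everywhere) and the residual $P''$ is a nonnegative measure of total mass at most $\delta$. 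Concretely one takes $dP'=\min(dP,e^{\varepsilon}\,dQ)$ and $dP''=(dP-e^{\varepsilon}\,dQ)^{+}$; the mass bound $\|P''\|\le\delta$ is precisely the hockey-stick supremum $\sup_A(P(A)-e^{\varepsilon}Q(A))$, which $(\varepsilon,\delta)$-DP controls.

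Finally I would apply this decomposition at both levels, writing $P=P'+P''$ and $P_x=P_x'+P_x''$, and expand the product density $dP\,dP_x$ in the conditioning identity. Grouping the expansion as the single dominated term $dP'\,dP_x'$ plus the remainder $dP'\,dP_x''+dP''\,dP_x$, the dominated term integrates to at most $e^{\varepsilon_1+\varepsilon_2}\Pr[(X',Y')\in S]$ (using $P'\le e^{\varepsilon_1}Q$ and $P_x'\le e^{\varepsilon_2}Q_x$), while the two residual terms are controlled by $\|P_x''\|\le\delta_2$ together with $\|P'\|\le1$, and by $\|P''\|\le\delta_1$ together with $\|P_x\|\le1$, yielding exactly $\delta_2$ and $\delta_1$. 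Summing gives the $(\varepsilon_1+\varepsilon_2,\delta_1+\delta_2)$ bound, and swapping the roles of $D$ and $D'$ gives the symmetric inequality. The only points needing care beyond these estimates are the measurability of the sections $S_x$ and of $x\mapsto Q_x(S_x)$, and the almost-everywhere pointwise reading of the measure inequalities, both of which are routine.
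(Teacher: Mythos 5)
Your proposal is correct. Note first that the paper does not prove this proposition at all: it is quoted as a standard background fact about \((\varepsilon,\delta)\)-DP in the appendix preliminaries, so there is no in-paper argument to compare against. Your argument is a complete and correct version of the standard proof (essentially the one in Dwork--Roth and in Kairouz--Oh--Viswanath). The key difficulty you identify is the right one: naive substitution of the two \((\varepsilon_i,\delta_i)\) bounds into the conditioning identity \(\Pr[(X,Y)\in S]=\int P_x(S_x)\,dP(x)\) produces a cross term \(e^{\varepsilon_2}\delta_1\) rather than \(\delta_1\), and your decomposition \(dP'=\min(dP,e^{\varepsilon}\,dQ)\), \(dP''=(dP-e^{\varepsilon}\,dQ)^{+}\) with \(\|P''\|=\sup_A\bigl(P(A)-e^{\varepsilon}Q(A)\bigr)\le\delta\) is exactly the hockey-stick device that removes it. The grouping \(dP'\,dP_x'+\bigl(dP'\,dP_x''+dP''\,dP_x\bigr)\) is correct: the first term is bounded by \(e^{\varepsilon_1+\varepsilon_2}\Pr[(X',Y')\in S]\) via the pointwise density dominations, and the two residuals contribute at most \(\delta_2\) and \(\delta_1\) respectively since \(\|P'\|\le 1\) and \(P_x(S_x)\le 1\). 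The only caveat, which you flag yourself, is that the decomposition of \(P_x\) must be carried out measurably in \(x\); this is routine under the usual assumption that all kernels admit jointly measurable densities with respect to a common dominating measure, and is universally glossed over in the DP literature.
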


\begin{definition}[Gaussian Mechanism for \((\varepsilon, \delta)\)-DP]
    Given any function \(h: \mathcal{X}^{n} \rightarrow \mathbb{R}^{d}\), the Gaussian Mechanism is defined as:
    \[
        \mathbf{G}_{\sigma}h(D)=h(D) + N(0, \Delta_h^2 \sigma^2 I_d),
    \]
    where \(\Delta_h\) be the \(\ell_{2}\)-sensitivity of the function \(h\) and \(\sigma \geq \frac{\sqrt{2 \ln (1.25 / \delta)} \Delta_h}{\epsilon}\). Then, the Gaussian Mechanism \(\mathbf{G}_{\sigma}h\) satisfies \((\epsilon, \delta)\)-differential privacy.
\end{definition}

\begin{theorem}[Privacy amplification via subsampling \citep{ballePrivacyAmplificationSubsampling2018}]\label{subsampled-dp}.
    Given a dataset of \(n\) points drawn from a domain \(\mathcal{X}\) and a (randomized) mechanism \(\mathcal{M}\) that takes an input from \(\mathcal{X}^{m}\) for \(m \leq n\), let the randomized algorithm \(\mathcal{M} \circ \mathbf{subsample}\) be defined as: (1) subsample: subsample without replacement m datapoints of the dataset (sampling parameter \(s=m/n\)),
    and (2) apply \(\mathcal{M}\): a randomized algorithm taking the subsampled dataset as the input. If \(\mathcal{M}\) is \((\varepsilon, \delta)\)-DP, then \(\mathcal{M} \circ \mathbf{subsample}\) is \((\varepsilon', \delta')\)-DP, where \(\varepsilon' = \log \left(
    1 + s(e^\varepsilon - 1) \le s(e^\varepsilon - 1)
    \right)\) and \(\delta' = s \delta \).
\end{theorem}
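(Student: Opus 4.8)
The plan is to prove this via the hockey-stick divergence characterization of $(\varepsilon,\delta)$-DP combined with the mixture structure of the subsampling distribution. Recall that $\M$ is $(\varepsilon,\delta)$-DP iff $D_{e^\varepsilon}(\M(D)\|\M(D'))\le\delta$ for every neighboring pair, where $D_{e^\varepsilon}(P\|Q):=\sup_E\bigl(P(E)-e^\varepsilon Q(E)\bigr)$ is the hockey-stick divergence. Fix neighboring datasets $D,D'$ differing only at index $i$ (so $D$ carries $x_i$, $D'$ carries $x_i'$, and they agree elsewhere), and let $S$ be the uniformly random size-$m$ index set produced by subsampling. First I would condition on the event $\{i\in S\}$, which has probability $s=m/n$. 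On $\{i\notin S\}$ the two subsampled datasets coincide, so the outputs share a common conditional law $P_0$; writing $P:=\M(\operatorname{subsample}(D))$ and $Q:=\M(\operatorname{subsample}(D'))$, this gives the mixture decomposition $P=(1-s)P_0+sP_1$ and $Q=(1-s)P_0+sQ_1$, where $P_1,Q_1$ are the conditional laws given $\{i\in S\}$ for $D$ and $D'$.

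The crucial step is the \emph{advanced joint convexity} of the hockey-stick divergence, the central lemma of \citet{ballePrivacyAmplificationSubsampling2018}: when two mixtures share their first component, one has the exact identity $D_{e^{\varepsilon'}}(P\|Q)=s\,D_{e^\varepsilon}\bigl(P_1\,\|\,(1-s)P_0+sQ_1\bigr)$ provided $e^{\varepsilon'}-1=s(e^\varepsilon-1)$, which is precisely the claimed $\varepsilon'$. This identity is what produces the tight leading factor $s$; a naive triangle-inequality bound on the mixtures is too lossy, leaving an extra $s(1-s)(e^\varepsilon-1)$ slack, so I would avoid that route entirely.

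Next I would bound the remaining divergence by quasi-convexity in its second argument, $D_{e^\varepsilon}\bigl(P_1\,\|\,(1-s)P_0+sQ_1\bigr)\le\max\{D_{e^\varepsilon}(P_1\|P_0),\,D_{e^\varepsilon}(P_1\|Q_1)\}$, reducing everything to two pairwise divergences, each of which I claim is at most $\delta$. For $D_{e^\varepsilon}(P_1\|Q_1)$: both laws are mixtures over size-$m$ index sets containing $i$, and the coupling that pairs each such set for $D$ with the identical set for $D'$ yields datasets differing only in $x_i$ versus $x_i'$, hence neighbors, so quasi-convexity gives $\le\delta$. For $D_{e^\varepsilon}(P_1\|P_0)$: I would couple a set $\{i\}\cup T$ (with $|T|=m-1$, $i\notin T$) drawn under $P_1$ with a set $T\cup\{j^*\}$ drawn under $P_0$ for a suitable $j^*\notin\{i\}\cup T$; the resulting datasets share the entries indexed by $T$ and differ only in $x_i$ versus $x_{j^*}$, so they are again neighbors, giving $\le\delta$. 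Combining, $D_{e^{\varepsilon'}}(P\|Q)\le s\delta=\delta'$, which is the claim.

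The main obstacle is the second paragraph: obtaining the exact factor $s$ (rather than a lossy bound) hinges on the advanced-joint-convexity identity, and one must also check that the two auxiliary couplings genuinely produce neighboring datasets — in particular that a subsample containing the distinguished point is a neighbor of one that omits it, which is exactly what licenses $D_{e^\varepsilon}(P_1\|P_0)\le\delta$. Verifying that these couplings respect the uniform marginals of subsampling without replacement is a routine counting/symmetry argument that I would relegate to a short check.
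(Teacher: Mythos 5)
Your proof is essentially correct, but note that the paper itself offers no proof of this statement: it is imported verbatim from \citet{ballePrivacyAmplificationSubsampling2018}, and what you have written is a faithful reconstruction of that source's argument (mixture decomposition conditioned on whether the distinguished index is sampled, advanced joint convexity to extract the factor $s$ and the improved $\varepsilon'$, then convexity in the second argument plus the two neighboring-dataset couplings to reduce to $\delta$). One small imprecision: in the advanced joint convexity identity the inner mixture is $(1-\beta)P_0+\beta Q_1$ with $\beta=\alpha'/\alpha=\frac{1+s(e^\varepsilon-1)}{e^\varepsilon}$, not weight $s$; since your next step bounds any convex combination by the maximum of the two pairwise divergences, this does not affect the conclusion $D_{e^{\varepsilon'}}(P\|Q)\le s\delta$.
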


\begin{theorem}[Advanced Composition]
    For all \(\varepsilon_0, \delta_0, \delta_0^{\prime} \geq 0\), the class of \((\varepsilon_0, \delta_0)\)-differentially private mechanisms satisfies \(\left(\varepsilon, k \delta_0+\delta_0^{\prime}\right)\)-differential privacy under \(k\)-fold adaptive composition for:
    \[
        \varepsilon=\sqrt{2 k \ln \left(1 / \delta_0^{\prime}\right)} \varepsilon+k \varepsilon\left(e^{\varepsilon}-1\right).
    \]

    As a corollary, for \(0 < \varepsilon < 1\), it suffices to choose
    \(\epsilon_0 = \frac{\varepsilon}{2\sqrt{2k\log (1/ \delta_0')}} \) to ensure the composition is \((\varepsilon, k\delta_0 + \delta_0')\)-DP.\@ In particular, we can in addition choose \(\delta_0' = \delta / 2 \) and \(\delta_0 = \delta / (2k)\) to satisfy \((\varepsilon, \delta)\)-DP.\@
\end{theorem}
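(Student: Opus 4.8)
The plan is to prove the composition bound via the \emph{privacy loss random variable} together with a martingale concentration inequality, and then obtain the corollary by direct substitution. Let $\A = (\A_1, \ldots, \A_k)$ denote the adaptive $k$-fold composition, where each $(\varepsilon_0, \delta_0)$-DP mechanism $\A_i$ may depend on the earlier outputs $y_{<i} := (y_1, \ldots, y_{i-1})$. Fixing neighboring datasets $D, D'$ and drawing $y = (y_1, \ldots, y_k)$ from $\A(D)$, the total privacy loss factorizes as
\[
    L(y) = \log \frac{\Pr[\A(D) = y]}{\Pr[\A(D') = y]} = \sum_{i=1}^{k} L_i, \qquad L_i := \log \frac{\Pr[\A_i(D) = y_i \mid y_{<i}]}{\Pr[\A_i(D') = y_i \mid y_{<i}]}.
\]
The target is the bound $\varepsilon = \sqrt{2k\ln(1/\delta_0')}\,\varepsilon_0 + k\varepsilon_0(e^{\varepsilon_0}-1)$ (reading the defining relation in the statement with $\varepsilon_0$ on its right-hand side). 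The goal is to show $L(y) \le \varepsilon$ except with probability at most $k\delta_0 + \delta_0'$, since a standard lemma converts such a high-probability bound on the privacy loss into an $(\varepsilon, k\delta_0 + \delta_0')$-DP guarantee.

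First I would establish two per-step estimates. Since each $\A_i$ is $(\varepsilon_0, \delta_0)$-DP, I would isolate at each step the region where the likelihood ratio exceeds $e^{\varepsilon_0}$; this is a failure event of probability at most $\delta_0$, and off it the per-step loss satisfies $|L_i| \le \varepsilon_0$. A union bound shows the $k$ failure events together carry probability at most $k\delta_0$, which is precisely the origin of the $k\delta_0$ term. Conditioned on the good events, I would bound the conditional expectation by the KL-type estimate $\mathbb{E}[L_i \mid y_{<i}] \le \varepsilon_0(e^{\varepsilon_0} - 1)$, which follows from $e^{-\varepsilon_0} \le \Pr[\A_i(D) = \cdot]/\Pr[\A_i(D') = \cdot] \le e^{\varepsilon_0}$ together with $\log x \le x - 1$.

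Next I would apply a range-based Azuma--Hoeffding inequality. On the good events the increments $L_i - \mathbb{E}[L_i \mid y_{<i}]$ form a martingale-difference sequence, each lying in an interval of width $2\varepsilon_0$, while the sum of conditional means is at most $k\varepsilon_0(e^{\varepsilon_0}-1)$. Azuma then gives $\Pr[\sum_i L_i > k\varepsilon_0(e^{\varepsilon_0}-1) + t] \le \exp(-t^2/(2k\varepsilon_0^2))$; choosing $t = \varepsilon_0\sqrt{2k\ln(1/\delta_0')}$ equates the right-hand side to $\delta_0'$ and yields
\[
    \Pr\Bigl[\,\textstyle\sum_{i} L_i > \varepsilon\,\Bigr] \le \delta_0', \qquad \varepsilon = \sqrt{2k\ln(1/\delta_0')}\,\varepsilon_0 + k\varepsilon_0(e^{\varepsilon_0}-1).
\]
Adding the $\delta_0'$ concentration failure to the $k\delta_0$ from the per-step bad events gives the claimed $(\varepsilon, k\delta_0 + \delta_0')$-DP.

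For the corollary I would substitute $\varepsilon_0 = \varepsilon/(2\sqrt{2k\ln(1/\delta_0')})$, so the first term is exactly $\varepsilon/2$, while the second satisfies $k\varepsilon_0(e^{\varepsilon_0}-1) \approx k\varepsilon_0^2 = \varepsilon^2/(8\ln(1/\delta_0')) \le \varepsilon/2$ for $0 < \varepsilon < 1$; hence the composition is $(\varepsilon, k\delta_0 + \delta_0')$-DP, and taking $\delta_0' = \delta/2$, $\delta_0 = \delta/(2k)$ makes the total failure probability $\delta$. The main obstacle I anticipate is the careful handling of the $\delta_0 > 0$ slack: unlike pure DP, the likelihood ratio need not be bounded everywhere, so the good-event decomposition must be set up so that it both contributes the clean additive $k\delta_0$ term and keeps the increments bounded for Azuma. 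Matching constants so that the range-based inequality produces exactly the coefficient $\sqrt{2k\ln(1/\delta_0')}$ is the other delicate point, though routine once the width-$2\varepsilon_0$ increment bound is established.
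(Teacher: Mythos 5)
The paper states this Advanced Composition theorem without proof: it is the standard result of Dwork--Rothblum--Vadhan (Theorem 3.20 in the Dwork--Roth monograph), imported as a known tool, so there is no in-paper argument to compare against. Your reconstruction is the classical proof and is essentially correct: you rightly read the displayed relation with $\varepsilon_0$ on the right-hand side (the statement as printed has a typo with $\varepsilon$ on both sides), the martingale/Azuma computation with increments of width $2\varepsilon_0$ and the KL-type bound $\mathbb{E}[L_i \mid y_{<i}] \le \varepsilon_0(e^{\varepsilon_0}-1)$ reproduce the coefficient $\sqrt{2k\ln(1/\delta_0')}$ exactly, and the corollary follows by the substitution you describe. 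The one place your sketch stops short of a full proof is the point you yourself flag: for $\delta_0>0$ the ``bad region where the likelihood ratio exceeds $e^{\varepsilon_0}$'' cannot simply be discarded by a union bound over output events, because the residual conditional distributions need not themselves be $\varepsilon_0$-DP; the rigorous version replaces each $\A_i$ by a coupled pure $\varepsilon_0$-DP mechanism agreeing with it except with probability $\delta_0$ (the characterization of $(\varepsilon,\delta)$-DP via approximate max-divergence), and runs the martingale argument on the coupled mechanisms. With that standard patch your argument is complete.
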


We have the following privacy guarantee for the algorithm using sampling without replacement,
\begin{theorem}\label{thm:subsampled-ls-dp}
    Consider the short step version of the algorithm using subsampling. Given privacy parameters \(\varepsilon, \delta, \varepsilon_f, \delta_f \in (0,1)\) such that \(\varepsilon_f < \varepsilon\) and \(\delta_f < \delta\), subsampling parameter \(s\). Let
    \(\varepsilon_0 = (\varepsilon - \varepsilon_f) / (8 s \sqrt{2T\ln(2/(\delta - \delta_f))})\) and \(\delta_0 = (\delta - \delta_f) / (4sT)\), where
    \(T\) is estimated as before in~\eqref{eq:T}.
    \(\sigma_f = \frac{\sqrt{2 \ln (1.25 / \delta_f)}}{\varepsilon_f}\), \(\sigma_g = \sigma_H = \frac{\sqrt{2 \ln (1.25 / \delta_0)} }{\epsilon_0}\).
    The algorithm is \((\varepsilon, \delta)\)-DP.\@
\end{theorem}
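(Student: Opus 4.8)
The plan is to split the target budget $(\varepsilon,\delta)$ into a piece $(\varepsilon_f,\delta_f)$ spent on the one-time noisy function evaluation $z$ that fixes $T$, and a piece $(\varepsilon-\varepsilon_f,\delta-\delta_f)$ spent on the $T$ iterations, and then recombine the two by basic composition of $(\varepsilon,\delta)$-DP. Within the iteration budget I would chain three ingredients in order: calibration of the Gaussian mechanism, privacy amplification by subsampling (Theorem~\ref{subsampled-dp}), and the advanced-composition corollary. A key simplifying observation is that the injected noise is already scaled by the relevant sensitivity (e.g.\ $z\sim\mathcal{N}(0,\Delta_f^2\sigma_f^2)$), so in the Gaussian-mechanism requirement $\sigma\ge \sqrt{2\ln(1.25/\delta)}\,\Delta_h/\epsilon$ the sensitivity cancels; hence $\sigma_f=\sqrt{2\ln(1.25/\delta_f)}/\varepsilon_f$ makes the function evaluation exactly $(\varepsilon_f,\delta_f)$-DP, and $\sigma_g=\sigma_H=\sqrt{2\ln(1.25/\delta_0)}/\varepsilon_0$ makes each single gradient or Hessian release on a minibatch $(\varepsilon_0,\delta_0)$-DP (with respect to the size-$m$ neighbouring relation).

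Next I would bound the cost of one iteration before subsampling. The most expensive iteration is a negative-curvature step, which releases both a perturbed gradient and a perturbed Hessian computed on the same minibatch; by basic composition this combined mechanism is $(2\varepsilon_0,2\delta_0)$-DP. A pure gradient step is only $(\varepsilon_0,\delta_0)$-DP, so using $(2\varepsilon_0,2\delta_0)$ uniformly for every iteration is a valid upper bound, and since the branch actually taken is a post-processing of the released noisy quantities, adaptive composition justifies this worst-case accounting without tracking the random number $k_H$ of curvature steps. Because each minibatch $S_k$ is drawn fresh without replacement at fraction $s$, applying Theorem~\ref{subsampled-dp} to this combined mechanism turns it into $\bigl(s(e^{2\varepsilon_0}-1),\,2s\delta_0\bigr)$-DP, and using $e^x-1\le 2x$ on $(0,1]$ (valid once $2\varepsilon_0\le 1$) the amplified per-iteration budget is at most $(4s\varepsilon_0,\,2s\delta_0)$.

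Finally I would feed these $T$ iterations into the advanced-composition corollary with the auxiliary slack $\delta_0'=(\delta-\delta_f)/2$. The corollary certifies $(\varepsilon-\varepsilon_f,\,\delta-\delta_f)$-DP provided the per-mechanism budget satisfies $4s\varepsilon_0=(\varepsilon-\varepsilon_f)/\bigl(2\sqrt{2T\ln(1/\delta_0')}\bigr)$ and the $\delta$'s accumulate as $T\cdot 2s\delta_0+\delta_0'=\delta-\delta_f$. Substituting $\delta_0'=(\delta-\delta_f)/2$ so that $\ln(1/\delta_0')=\ln(2/(\delta-\delta_f))$, the first equation reproduces exactly $\varepsilon_0=(\varepsilon-\varepsilon_f)/\bigl(8s\sqrt{2T\ln(2/(\delta-\delta_f))}\bigr)$, while $2sT\delta_0=(\delta-\delta_f)/2$ reproduces $\delta_0=(\delta-\delta_f)/(4sT)$; the mild hypothesis $\varepsilon-\varepsilon_f<\varepsilon<1$ needed by the corollary is automatic. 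Composing the $(\varepsilon_f,\delta_f)$-DP function evaluation with the $(\varepsilon-\varepsilon_f,\delta-\delta_f)$-DP iteration phase then yields $(\varepsilon,\delta)$-DP.

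I expect the main obstacle to be faithful bookkeeping of constants as the budget passes through the three transformations, together with the data-dependent branching. The factors $2,4,8$ and the $\sqrt{2T}$ all trace back to (i) the two releases per iteration, (ii) the $e^x-1\le 2x$ amplification slack, and (iii) the factor $\tfrac12$ inside the advanced-composition corollary; getting these to align precisely with the stated $\varepsilon_0$ and $\delta_0$ is where care is required, as is the justification — via adaptive composition and post-processing — that the worst-case $(2\varepsilon_0,2\delta_0)$ per-iteration cost may be charged uniformly despite the Hessian being computed only on a data-dependent subset of iterations.
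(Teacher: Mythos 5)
Your proposal is correct and follows essentially the same route as the paper's proof: split off $(\varepsilon_f,\delta_f)$ for the noisy estimate of $T$, show each iteration is $(2\varepsilon_0,2\delta_0)$-DP before subsampling, amplify to $(4s\varepsilon_0,2s\delta_0)$ via Theorem~\ref{subsampled-dp}, and close with advanced composition and basic composition. You are in fact somewhat more careful than the paper, making explicit both the bound $e^{2\varepsilon_0}-1\le 4\varepsilon_0$ behind the factor $4s\varepsilon_0$ and the worst-case charging of the Hessian release on every iteration.
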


\begin{proof}
    By Gaussian mechanism, the step for estimating \(T\) is \((\varepsilon_f, \delta_f)\)-DP.\@ It suffices to show the remaining steps are \((\varepsilon - \varepsilon_f, \delta - \delta_f)\). Using advanced composition, we only need to show that each iteration is \((4s\varepsilon_0, 2s\delta_0)\)-DP.\@

    Consider a single iteration without subsampling. From the usage of Gaussian mechanism and sparse vector technique, we know that computing the perturbed gradient step and
    the perturbed Hessian step are both \((\varepsilon_0, \delta_0)\)-DP.\@
    % we no longer consider mini-batch + line search: , whereas the backtracking line search step is \(\varepsilon_0\)-DP.\@
    By composition, we know that the whole iteration is \((2\varepsilon_0, 2\delta_0)\)-DP.\@
    Applying the Privacy Amplification Theorem~\ref{subsampled-dp}, we know that each iteration using subsampling is \((4s\varepsilon_0, 2s\delta_0)\)-DP.\@
\end{proof}

%% \gcy{Repeat. Remove this?}
We now discuss the sample complexity.
\begin{theorem}
    For \(c_f \in (0,1)\), setting \(\varepsilon_f = c_f \varepsilon\) and \(\delta_f = c_f \delta\), under the choice of parameters \(\sigma_g, \sigma_H\) in Theorem~\ref{thm:ls_batch}, the asymptotic dependence of \(n_{\min}\) in Theorem~\ref{thm:ls_batch} on \((\epsg, \epsH)\), \((\varepsilon, \delta)\), \(\), is
    \begin{equation}
        n_{\min} = \tilde O \left(
        \frac{\sqrt{d}\ln(1/\delta)}{\varepsilon}
        \max \left(
        \epsg^{-2}, \epsg^{-1} \epsH^{-2}, \epsH^{-4}
        \right)
        \right).
    \end{equation}
    When \((\epsg,\epsH) = (\alpha,\sqrt{M \alpha})\), the dependence simplifies to \(\tilde O (\frac{d \ln(1/\delta)}{\varepsilon \alpha^2})\), matching the result (up to a factor of \(\sqrt{\ln(1/\delta)}\)) in full-batch version of the algorithm by converting \(\rho\)-\(z\)CDP to \((\varepsilon, \delta)\)-DP via \(\sqrt{\rho} = O(\frac{\varepsilon}{\sqrt{\ln(1/\delta)}})\) using Proposition~\ref{prop:zcdp2dp}.
\end{theorem}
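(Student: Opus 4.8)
The plan is to take the bound \eqref{min_n_subsampled-ls} for $n_{\min}$ as given and simply substitute the $(\varepsilon,\delta)$-DP noise levels from Theorem~\ref{thm:subsampled-ls-dp}, tracking only the dependence on $\epsg,\epsH,\varepsilon,\delta,d$ while treating the sampling fraction $s$ and all problem constants ($M,B_g,B_H,c,c_1,c_2,c_f$) as $O(1)$. The key observation is that $n_{\min}$ is a maximum of four terms: the first two (``Gaussian'' terms) scale linearly with $\sigma_g=\sigma_H$, while the last two (``subsampling'' terms) carry an $s^{-1}$ and depend on neither $\varepsilon,\delta$ nor $d$. The entire argument therefore reduces to evaluating $\sigma_g$ in terms of $\varepsilon,\delta,T$ and then comparing the four terms, exactly in the style of the proof of Theorem~\ref{thm:short_thm}.

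First I would evaluate $\sigma_g=\sigma_H$. With $\varepsilon_f=c_f\varepsilon$ and $\delta_f=c_f\delta$ we have $\varepsilon-\varepsilon_f=(1-c_f)\varepsilon=\Theta(\varepsilon)$ and $\delta-\delta_f=\Theta(\delta)$, so from Theorem~\ref{thm:subsampled-ls-dp}, $\varepsilon_0=\Theta\!\big(\varepsilon/(s\sqrt{T\ln(1/\delta)})\big)$ and $\delta_0=\Theta(\delta/(sT))$. Because $T$ from \eqref{eq:T} is polynomial in $\epsg^{-1},\epsH^{-1}$, we have $\ln(1/\delta_0)=\ln(1/\delta)+O(\ln T)=\tilde O(\ln(1/\delta))$, and likewise the $\ln(2/(\delta-\delta_f))$ inside $\varepsilon_0$ is $\tilde O(\ln(1/\delta))$. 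Substituting into $\sigma_g=\sqrt{2\ln(1.25/\delta_0)}/\varepsilon_0$ and collapsing the two half-power logarithms into a single factor gives $\sigma_g=\sigma_H=\tilde O\!\big(s\sqrt{T}\,\ln(1/\delta)/\varepsilon\big)$.

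Next I would bring in the order of $T$. By \eqref{eq:T} and the fact that here $\mathrm{MIN\_DEC}$ has the same order as the short-step value \eqref{eq:short_dec}, namely $\min(\Theta(\epsg^2),\Theta(\epsH^3))$, we get $T=O(\max(\epsg^{-2},\epsH^{-3}))$ and hence $\sqrt T=O(\max(\epsg^{-1},\epsH^{-3/2}))$. Feeding $\sigma_g$ into the first Gaussian term of \eqref{min_n_subsampled-ls} yields $\tilde O\!\big(\tfrac{\sqrt d\ln(1/\delta)}{\varepsilon}\cdot\tfrac{\max(\epsg^{-1},\epsH^{-3/2})}{\min(\epsg,\epsH^2)}\big)=\tilde O\!\big(\tfrac{\sqrt d\ln(1/\delta)}{\varepsilon}\max(\epsg^{-2},\epsg^{-1}\epsH^{-2},\epsH^{-7/2})\big)$, and the second Gaussian term, of order $\tfrac{\sqrt d\ln(1/\delta)}{\varepsilon}\max(\epsg^{-1}\epsH^{-1},\epsH^{-5/2})$, is dominated by it (using $\epsH\le 1$). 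The two subsampling terms, being $\tilde O(\max(\epsg^{-2},\epsH^{-4}))$ and $\tilde O(\epsH^{-2})$, are free of $\varepsilon,\delta,d$.

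Finally I would take the maximum of the four contributions. In the relevant regime $\tfrac{\sqrt d\ln(1/\delta)}{\varepsilon}\ge 1$, so each subsampling term is bounded by $\tfrac{\sqrt d\ln(1/\delta)}{\varepsilon}$ times itself; using $\epsH^{-7/2}\le\epsH^{-4}$ to absorb the Gaussian curvature contribution into the larger subsampling term $\epsH^{-4}$, every one of the four terms is at most $\tfrac{\sqrt d\ln(1/\delta)}{\varepsilon}\max(\epsg^{-2},\epsg^{-1}\epsH^{-2},\epsH^{-4})$, which is the claimed bound. Specializing $(\epsg,\epsH)=(\alpha,\sqrt{M\alpha})$ makes $\epsg^{-2}$, $\epsg^{-1}\epsH^{-2}$ and $\epsH^{-4}$ all $\Theta(\alpha^{-2})$ (hiding $M$), giving $\tilde O\!\big(\tfrac{\sqrt d\ln(1/\delta)}{\varepsilon}\alpha^{-2}\big)$; comparing with the full-batch bound $\tfrac{\sqrt d}{\sqrt\rho}\tilde O(\alpha^{-2})$ under $\sqrt\rho=O(\varepsilon/\sqrt{\ln(1/\delta)})$ from Proposition~\ref{prop:zcdp2dp} confirms the match up to the extra $\sqrt{\ln(1/\delta)}$. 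The main obstacle is precisely this last bookkeeping step: one must verify that the subsampling terms (which carry $\epsH^{-4}$ and $s^{-1}$ but no $\varepsilon,\delta,d$) can be folded under the single prefactor $\tfrac{\sqrt d\ln(1/\delta)}{\varepsilon}$ without changing the asymptotic order, and that the nested logarithms in $\delta_0$ and $T$ collapse cleanly into one $\ln(1/\delta)$ inside the $\tilde O$.
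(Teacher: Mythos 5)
Your proposal is correct and follows essentially the same route as the paper's proof: express \(\sigma_g=\sigma_H\) from Theorem~\ref{thm:subsampled-ls-dp} as \(\tilde O\bigl(\tfrac{s}{\varepsilon}\sqrt{T}\ln(1/\delta)\bigr)\), use \(\sqrt{T}=O(\max(\epsg^{-1},\epsH^{-3/2}))\), and substitute into \eqref{min_n_subsampled-ls}. You simply carry out more of the bookkeeping (the four-way maximum and the absorption of the \(\epsH^{-7/2}\) Gaussian term into the \(\epsH^{-4}\) subsampling term) that the paper leaves implicit.
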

\begin{proof}
    As before, we have \(\sqrt{T} = O(\max(\epsg^{-1}, \epsH^{-3/2}))\). After simplification, the order of \(\sigma_g\) and \(\sigma_H\) is
    \[
        \frac{s}{\varepsilon} \sqrt{T} \sqrt{\ln(2sT/\delta) \ln(1/\delta)}
        = \tilde O \left(
        \frac{s}{\varepsilon} \max(\epsg^{-1}, \epsH^{-3/2})
        \right).
    \]
    The asymptotic dependence of \(n_{\min}\) follows by substituting \(\sigma_g\) and \(\sigma_H\) into~\eqref{min_n_subsampled-ls}.
\end{proof}

\section{Computation of the Smallest Eigenvalue Using Lanczos method}\label{sec:lanczos}
In our algorithms, we need to compute the smallest eigenvalue of the perturbed Hessian. This can be done effectively using the randomized Lanczos algorithm. We have the following result from \citep{carmonAcceleratedMethodsNonConvex2017},
\begin{lemma}
    Suppose that the Lanczos method is used to estimate the smallest eigenvalue of \(H\) starting with a random vector uniformly generated on the unit sphere, where \(\|H\| \leq M\). For any \(\delta \in [0,1)\), this approach finds the smallest eigenvalue of \(H\) to an absolute precision of \(\epsilon / 2\), together with a corresponding direction \(v\), in at most
    \[
        \min \left\{d, 1+\left\lceil\frac{1}{2} \ln \left(2.75 d / \delta^2\right) \sqrt{\frac{M}{\epsilon}}\right\rceil\right\} \text { iterations }
    \]
    with probability at least \(1-\delta\).
\end{lemma}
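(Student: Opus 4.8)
The plan is to reduce the problem to estimating a \emph{largest} eigenvalue, exploit the variational (best-polynomial) characterization of the Lanczos iterate over the Krylov subspace, and then combine a Chebyshev-polynomial amplification argument with a tail bound on the random starting vector. This mirrors the classical randomized-Lanczos analysis of Kuczyński and Woźniakowski.

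First I would shift the spectrum. Set \(A = MI - H\). Since \(\|H\| \le M\), the matrix \(A\) is positive semidefinite with eigenvalues \(\mu_1 \ge \mu_2 \ge \dots \ge \mu_d\) lying in \([0, 2M]\), and \(\mu_1 = \lambda_{\max}(A) = M - \lambda_{\min}(H)\). Because \(A\) is an affine function of \(H\), the Krylov subspaces \(\mathcal{K}_k(A, v_0)\) and \(\mathcal{K}_k(H, v_0)\) coincide, so estimating \(\lambda_{\min}(H)\) to absolute precision \(\epsilon/2\) is equivalent to estimating \(\mu_1\) to the same precision. The case \(k = d\) is handled separately: the Krylov subspace saturates \(\mathbb{R}^d\) after at most \(d\) steps, at which point Lanczos returns the exact extreme eigenvalue, which accounts for the \(\min\{d, \cdot\}\) in the bound.

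Next I would use the fact that the Lanczos estimate after \(k\) steps is \(\hat\lambda_k = \max_{0 \ne q \in \mathcal{K}_k(A,v_0)} \tfrac{q^T A q}{q^T q}\), and that every \(q \in \mathcal{K}_k(A, v_0)\) has the form \(q = p(A)v_0\) with \(\deg p \le k-1\). Expanding \(v_0 = \sum_i c_i u_i\) in an orthonormal eigenbasis of \(A\) and writing the Rayleigh quotient spectrally yields, for every admissible \(p\),
\[
    \mu_1 - \hat\lambda_k \;\le\; \frac{\sum_i c_i^2\, p(\mu_i)^2\,(\mu_1 - \mu_i)}{\sum_i c_i^2\, p(\mu_i)^2}.
\]
I would then take \(p\) to be a Chebyshev polynomial \(T_{k-1}\) composed with the affine map sending the ``bad'' range (eigenvalues at least \(\epsilon/2\) below \(\mu_1\)) into \([-1,1]\), so that \(\mu_1\) lands just outside the interval where the Chebyshev polynomial grows. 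Bounding the denominator below by \(c_1^2\,p(\mu_1)^2\) and the numerator using \(\max|p| \le 1\) on the bad range, together with the growth estimate \(T_m(1+\eta) \ge \tfrac12 e^{m\sqrt{2\eta}}\), converts the ratio into a factor decaying exponentially in \(k\sqrt{\eta}\) times \(c_1^{-2}\), where the effective normalized gap is \(\eta \asymp (\epsilon/2)/(2M)\); this is the source of the \(\sqrt{M/\epsilon}\) dependence.

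Finally I would control the random start. For \(v_0\) uniform on the unit sphere in \(\mathbb{R}^d\), the squared projection \(c_1^2 = (v_0^T u_1)^2\) has a tail satisfying \(\Pr[c_1^2 \le t] \lesssim \sqrt{td}\), so with probability at least \(1-\delta\) one has \(c_1^{-2} \lesssim d/\delta^2\). Substituting this into the decay bound, requiring the right-hand side to fall below \(\epsilon/2\), and solving for \(k\) produces the stated count \(1 + \lceil \tfrac12 \ln(2.75\,d/\delta^2)\sqrt{M/\epsilon}\rceil\). The main obstacle is not the qualitative \(\tilde O(\sqrt{M/\epsilon})\) scaling, which is routine, but pinning down the explicit constants: in particular, executing the \emph{gap-free} version of the Chebyshev argument so that it holds even when \(\mu_1\) is degenerate or clustered near the top, and matching the precise spherical-projection tail inequality, so that the constants \(2.75\) and \(\tfrac12\) emerge exactly as stated. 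This careful bookkeeping is what the cited reference supplies.
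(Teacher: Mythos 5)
The paper does not prove this lemma itself---it is quoted as an imported result from \citet{carmonAcceleratedMethodsNonConvex2017}---and your sketch follows exactly the Kuczy\'nski--Wo\'zniakowski-style argument (spectral shift \(A = MI - H\), best-polynomial/Chebyshev amplification over the Krylov subspace in its gap-free form, and the \(\Pr[c_1^2 \le t] \lesssim \sqrt{td}\) tail for a uniform start) that underlies the cited proof. Your approach is correct, and the only part you defer, namely pinning down the explicit constants \(2.75\) and \(\tfrac12\), is precisely what the reference supplies.
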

To use Lanczos method in our algorithm, we output an estimate \(\tilde \lambda\) of \(\lambda_{\min}(\tilde H)\) along with the corresponding eigenvector, provided that \(\tilde \lambda \le -\epsH / 2\). If \(\tilde \lambda > -\epsH / 2\), we declare that \(\lambda_{\min}(\tilde H) \ge -\epsH\), with an error probability at most \(\delta_L\). Our analysis and convergence results still hold by replacing \(\epsH\) with \(\epsH / 2\) and adding the success probability of the Lanczos algorithm \(1 - \delta_L\) to the product of the success probability in each iteration.

Note that Lanczos method only requires Hessian-vector products, the cost of which depends on the exact form of the objective. For linear ERM, each term has the form \(l(w^T x_i)\) so the Hessian is a weighted sum of rank-1 terms. A Hessian vector product with the true Hessian can thus be performed in \(O(nd)\) operations, with the cost of multiplying the added noise matrix by the vector costing an additional \(O(d^2)\) operations.
The total cost of randomized Lanczos, dependent on the precision needed, is therefore \(O(nd+d^2)\) times the iteration bound above. 
By contrast, a full Hessian evaluation could cost \(O(nd^2)\) and an eigenvalue factorization would cost \(O(d^3)\).
\footnote{We still evaluate noisy Hessians in our implementation, since there is no optimized support for this in PyTorch.}
% \gcy{We might need to say more here to clarify.}

\section{Missing Proofs}
\subsection{Proof of Lemma~\ref{lem:short_dec}}\label{proof:short_dec}
\begin{lemma}
    With the short step size choices~\eqref{eq:short_step}, if the noise satisfies the following conditions
    for some positive constants \(c\), \(c_1\), and \(c_2\) such that \(c_1 < \tfrac12\) and \(c_{2}+c < \tfrac13\),
    \begin{subequations}
        \begin{align}
            \left\|\varepsilon_{k}\right\| & \le \min \left(c_{1} \epsg, \frac{c_{2}}{M} \epsH^{2}\right), \\
            \left\|E_{k}\right\|           & \le c \, \epsH,
        \end{align}
    \end{subequations}
    then the amount of decrease in each step is at least
    \begin{equation}
        \textnormal{MIN\_DEC} = \min\left(
        \frac{1-2 c_{1}}{2 G}\epsg^{2}, \, 2\left(\frac{1}{3}-c_{2}-c\right) \frac{\epsH^{3}}{M^{2}}
        \right).
    \end{equation}
    The true gradient and true minimum eigenvalue of the Hessian satisfy the following,
    \begin{equation}
        \left\|g_{k}\right\|
        \le (1+c_1) \left\|\tilde{g}_{k}\right\|, \quad
        \lambda_k > - (1+c) |\tilde \lambda_k|.
    \end{equation}
\end{lemma}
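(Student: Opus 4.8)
The plan is to split the analysis according to which branch of Algorithm~\ref{alg:dpopt} is taken: a gradient step (when $\|\tilde g_k\| > \epsg$) or a negative curvature step (when $\|\tilde g_k\| \le \epsg$ and $\tilde\lambda_k < -\epsH$). I would establish a guaranteed decrease for each branch and then set MIN\_DEC to the minimum of the two. Throughout, the noise enters by writing $g_k = \tilde g_k - \varepsilon_k$ and $H_k = \tilde H_k - E_k$ and invoking the bounds~\eqref{eq:bounded_noise} through Cauchy--Schwarz and operator-norm estimates.

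For the gradient step I would start from the $G$-smoothness descent inequality $f(w_{k+1}) \le f_k + g_k^\top(w_{k+1}-w_k) + \tfrac{G}{2}\|w_{k+1}-w_k\|^2$ with $w_{k+1}-w_k = -\tfrac1G \tilde g_k$. Substituting $g_k = \tilde g_k - \varepsilon_k$ collapses the first two terms to $-\tfrac{1}{2G}\|\tilde g_k\|^2 + \tfrac1G\varepsilon_k^\top\tilde g_k$. Because this branch forces $\|\tilde g_k\| > \epsg$ while $\|\varepsilon_k\| \le c_1\epsg < c_1\|\tilde g_k\|$, Cauchy--Schwarz gives $\varepsilon_k^\top\tilde g_k \le c_1\|\tilde g_k\|^2$, and hence a decrease of at least $\tfrac{1-2c_1}{2G}\|\tilde g_k\|^2 > \tfrac{1-2c_1}{2G}\epsg^2$. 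Positivity here is exactly what forces $c_1 < \tfrac12$.

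The negative curvature step is where I expect the real work. I would use the third-order Taylor bound from the $M$-Lipschitz Hessian, $f(w_{k+1}) \le f_k + \gamma\, g_k^\top\tilde p_k + \tfrac{\gamma^2}{2}\tilde p_k^\top H_k\tilde p_k + \tfrac{M}{6}\gamma^3$, with $\gamma = 2|\tilde\lambda_k|/M$ and $\|\tilde p_k\| = 1$. Two terms must be controlled: the linear term, via the sign condition~\eqref{eq:min_eig} ($\tilde g_k^\top\tilde p_k \le 0$) and $\|\varepsilon_k\| \le \tfrac{c_2}{M}\epsH^2$, giving $g_k^\top\tilde p_k \le \|\varepsilon_k\|$; and the curvature term, via $\tilde p_k^\top H_k\tilde p_k = \tilde\lambda_k - \tilde p_k^\top E_k\tilde p_k \le \tilde\lambda_k + c\epsH$. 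After substituting $\gamma$, every contribution becomes a multiple of a power of $|\tilde\lambda_k|$ over $M^2$; the leading $-\tfrac{2}{3}|\tilde\lambda_k|^3/M^2$ term arises from combining the exact cubic coefficients $-2 + \tfrac43$. The delicate step is to reduce the lower-order positive terms to this same cubic order: since $|\tilde\lambda_k| > \epsH$, I would bound $\epsH^2|\tilde\lambda_k| < |\tilde\lambda_k|^3$ and $\epsH|\tilde\lambda_k|^2 < |\tilde\lambda_k|^3$, so the whole expression factors as $\tfrac{2|\tilde\lambda_k|}{M^2}\bigl(c_2\epsH^2 - \tfrac13|\tilde\lambda_k|^2 + c\epsH|\tilde\lambda_k|\bigr)$, whose bracket is below $(c_2 + c - \tfrac13)|\tilde\lambda_k|^2$ and therefore negative precisely because $c_2 + c < \tfrac13$. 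This yields a decrease of at least $2(\tfrac13 - c_2 - c)|\tilde\lambda_k|^3/M^2 > 2(\tfrac13 - c_2 - c)\epsH^3/M^2$.

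The closeness estimates~\eqref{eq:close} then follow directly and cheaply: the triangle inequality gives $\|g_k\| \le \|\tilde g_k\| + \|\varepsilon_k\| \le (1+c_1)\|\tilde g_k\|$ whenever $\|\tilde g_k\| \ge \epsg$, while Weyl's eigenvalue perturbation inequality gives $\lambda_k \ge \tilde\lambda_k - \|E_k\| \ge \tilde\lambda_k - c\epsH$, which in the negative-curvature regime ($\tilde\lambda_k = -|\tilde\lambda_k| < -\epsH$) rearranges to $\lambda_k > -(1+c)|\tilde\lambda_k|$. The main obstacle is not conceptual but the bookkeeping in the negative-curvature case: tracking the cubic coefficient carefully and correctly collapsing the mixed-order noise terms onto $|\tilde\lambda_k|^3$ via $|\tilde\lambda_k| > \epsH$, so that the sign constraint $c_2 + c < \tfrac13$ is what guarantees a genuine descent.
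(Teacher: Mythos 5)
Your proposal is correct and follows essentially the same route as the paper's proof: the $G$-smoothness descent inequality with Cauchy--Schwarz for the gradient branch, the cubic Taylor bound with $\gamma = 2|\tilde\lambda_k|/M$, the sign condition $\tilde g_k^{\top}\tilde p_k \le 0$, and the reduction of the mixed noise terms to cubic order via $|\tilde\lambda_k| > \epsH$ for the negative-curvature branch, plus the triangle inequality and Weyl's inequality for the closeness estimates. The only difference is cosmetic (you factor out $2|\tilde\lambda_k|/M^2$ before invoking $c_2 + c < \tfrac13$, whereas the paper bounds each noise term by a cubic separately), so there is nothing substantive to add.
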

\begin{proof}
    We will use the following two standard bounds, which follow from the smoothness assumptions on \(f\):
    \begin{align}
        f(w+p) & \le f(w)+\nabla f(w)^{\top} p+ \frac{G}{2}\|p\|^{2}, \label{taylor1}                                    \\
        f(w+p) & \le f(w) +\nabla f(w) ^{T}p+\dfrac{1}{2}p^{T}\nabla ^{2}f(w) p+\dfrac{1}{6}M\|p\| ^{3}. \label{taylor2}
    \end{align}
    % Recall our step size choices~\eqref{eq:short_step},
    %%    \begin{equation}
    %%        \gamma_{k, g} = \frac{1}{G}, \quad \gamma_{k, H}=\frac{2|\tilde\lambda_k|}{M}.
    %%    \end{equation}
    For simplicity, we drop the iteration number \(k\)  in the analysis below.

    For gradient steps we have \(\left\|\tilde g\right\| > \epsilon_{g}\). We write \(g = \tilde g - \varepsilon\). Using \(\left\|\varepsilon\right\| \le c_{1} \epsg < c_1 \|\tilde g\|\), it follows from~\eqref{taylor1} that
    \[
        \begin{aligned}
            f(w - \gamma_g \tilde g) & \leq f-\gamma_g(\tilde{g}-\varepsilon)^T \tilde{g}+\frac{G}{2}\gamma_g^2 \|\tilde{g}\|^{2} \\
                                     & \leq f-\frac{1}{G}(\tilde{g}-\varepsilon)^T \tilde{g}+\frac{1}{2 G}\|\tilde{g}\|^{2}       \\
                                     & \leq f-\frac{1}{2G}\|\tilde{g}\|^{2}+\frac{1}{G}\left\|\varepsilon\right\|\|\tilde{g}\|    \\
                                     & \leq f-\frac{1}{2G}\|\tilde{g}\|^{2}+\frac{1}{G} c_{1}\|\tilde{g}\|^{2}                    \\
                                     & = f-\frac{1}{2G}\left(1-2 c_{1}\right)\|\tilde{g}\|^{2}                                    \\
                                     & \le f-\frac{1}{2G}\left(1-2 c_{1}\right)\varepsilon_g^{2},
        \end{aligned}
    \]
    while the true gradient satisfies
    \[
        \left\|g\right\| \le \|\tilde{g}\| + \|\varepsilon\|
        \le \left(1+c_1\right) \|\tilde{g}\|.
    \]

    When negative curvature steps are taken, we have \(\tilde \lambda < -\epsH\). By assumption, we have  \(\|\varepsilon\| \le \frac{c_2}{M} \epsH^2 < \frac{c_2}{M} |\tilde \lambda|^2\) and \(\|E\| \le  c \, \epsH < c |\tilde \lambda|\).
    Recall the definition~\eqref{eq:min_eig} of \(\tilde p\) and we write \(g = \tilde g - \varepsilon\), \(\tilde H = H - E\). From~\eqref{taylor2}, we have
    \[
        \begin{aligned}
            f(w+ \gamma_H \tilde p) & \leq f+\gamma_H g^{T} \tilde p+\frac{1}{2} \gamma_H^{2}\tilde p^{T} H \tilde p+\frac{1}{6} M \gamma_H^{3}\left\|\tilde p\right\|^{3}                                                                                                                                                    \\
                                    & = f+ \gamma_H \tilde g^{T} \tilde p+\frac{1}{2} \gamma_H^{2}\tilde p^{T} \tilde H \tilde p+\frac{1}{6} M \gamma_H^{3}\left\|\tilde p\right\|^{3} - \gamma_H \varepsilon^{T} \tilde p  - \frac{1}{2} \gamma_H^{2}\tilde p^{T} E \tilde p                                                 \\
            %% \text{Plug in } \gamma \gets \gamma_H \qquad
                                    & \le f + \frac{1}{2} \left(\frac{2|\tilde\lambda|}{M}\right)^{2} (- |\tilde\lambda|) + \frac{1}{6} M \left(\frac{2|\tilde\lambda|}{M}\right)^{3} - \frac{2|\tilde\lambda|}{M} \varepsilon^{T} \tilde p  - \frac{1}{2} \left(\frac{2|\tilde\lambda|}{M}\right)^{2}\tilde p^{T} E \tilde p \\
                                    & \le f-\frac{2}{3} \frac{|\tilde \lambda|^{3}}{M^{2}} + \frac{2|\tilde\lambda|}{M} \left\|\varepsilon\right\|  + \frac{2|\tilde\lambda|^2}{M^2} \|E\|                                                                                                                                    \\
                                    & \le f-\left(\frac{2}{3} - 2c_2 - 2c\right) \frac{|\tilde \lambda|^{3}}{M^{2}}                                                                                                                                                                                                           \\
                                    & \le f-\left(\frac{2}{3} - 2c_2 - 2c\right) \frac{\epsH^{3}}{M^{2}},
        \end{aligned}
    \]
    provided that \(c_2 + c < 1/3\).
    Let \(\lambda\) denote the minimum eigenvalue of \(H\). It follows from Weyl's Inequality that
    \[
        |\tilde \lambda - \lambda| \le \|E\| \le c |\tilde \lambda|,
    \]
    and thus,
    \[
        \lambda > \tilde \lambda - c |\tilde \lambda| \ge - \left(1+c\right) |\tilde \lambda|.
    \]
\end{proof}

\subsection{Proof of Corollary~\ref{cor:output}}
\begin{corollary}
    Assuming the noise satisfies~\eqref{eq:bounded_noise} at each iteration, the short step version~(using~\eqref{eq:short_step},~\eqref{eq:short_dec}) of the algorithm will output a \(((1 + c_{1}) \epsg, (1+c) \epsH)\)-2S.\@
\end{corollary}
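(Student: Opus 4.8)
The plan is to trace the termination condition of Algorithm~\ref{alg:dpopt} and translate the guarantees on the perturbed quantities into guarantees on the \emph{true} gradient and Hessian. The algorithm returns \(w_k\) precisely when both \(\|\tilde g_k\| \le \epsg\) and \(\tilde\lambda_k \ge -\epsH\) hold, because the eigenvalue test is reached only after the gradient test \(\|\tilde g_k\| > \epsg\) fails. Thus at the output iterate both perturbed conditions are in hand, and the task reduces to showing that the corresponding true quantities satisfy the relaxed conditions defining a \(((1+c_1)\epsg,(1+c)\epsH)\)-2S.

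First I would bound the true gradient. Writing \(g_k = \tilde g_k - \varepsilon_k\) and applying the triangle inequality gives \(\|g_k\| \le \|\tilde g_k\| + \|\varepsilon_k\|\). The termination condition supplies \(\|\tilde g_k\| \le \epsg\), and the noise bound~\eqref{eq:bounded_noise} gives \(\|\varepsilon_k\| \le c_1 \epsg\) (since \(\min(c_1\epsg, \tfrac{c_2}{M}\epsH^2) \le c_1\epsg\)). Combining these yields \(\|g_k\| \le (1+c_1)\epsg\), the first of the two required conditions.

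Next I would bound the true minimum eigenvalue. Since \(\tilde H_k = H_k + E_k\), Weyl's inequality gives \(|\tilde\lambda_k - \lambda_k| \le \|E_k\| \le c\,\epsH\), where the last step uses the second part of~\eqref{eq:bounded_noise}. Combined with the termination condition \(\tilde\lambda_k \ge -\epsH\), this gives \(\lambda_k \ge \tilde\lambda_k - c\,\epsH \ge -(1+c)\epsH\), the second required condition.

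I do not anticipate a serious obstacle: the result is the termination-case counterpart of the bounds~\eqref{eq:close} established inside the proof of Lemma~\ref{lem:short_dec}. The one point worth care is that the \emph{multiplicative} bound \(\|g_k\| \le (1+c_1)\|\tilde g_k\|\) in~\eqref{eq:close} was derived under the gradient-step hypothesis \(\|\tilde g_k\| > \epsg\) and is not the convenient form here; at termination one instead uses the additive triangle-inequality bound together with \(\|\tilde g_k\| \le \epsg\). Likewise, the eigenvalue conclusion follows most cleanly by applying Weyl's inequality directly to the termination inequality \(\tilde\lambda_k \ge -\epsH\) rather than rescaling the multiplicative bound on \(|\tilde\lambda_k|\).
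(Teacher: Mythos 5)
Your translation of the termination test into the approximate second-order conditions is correct and matches the second half of the paper's proof: the triangle inequality with \(\|\varepsilon_k\|\le c_1\epsg\) gives \(\|g_k\|\le(1+c_1)\epsg\), and Weyl's inequality with \(\|E_k\|\le c\,\epsH\) gives \(\lambda_k\ge\tilde\lambda_k-c\,\epsH\ge-(1+c)\epsH\). Your remark that the multiplicative eigenvalue bound in~\eqref{eq:close} is the wrong tool at termination is well taken — that bound was derived under the hypothesis \(\tilde\lambda_k<-\epsH\), which fails at the return step, and the direct additive form of Weyl's inequality is what one actually needs (the paper's one-line citation of~\eqref{eq:close} glosses over this).

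However, there is a genuine gap: you prove only the conditional statement ``\emph{if} the algorithm returns \(w_k\), then \(w_k\) is a \(((1+c_1)\epsg,(1+c)\epsH)\)-2S.'' The corollary asserts that the algorithm \emph{will} output such a point, and Algorithm~\ref{alg:dpopt} only produces an output if the \textbf{return} statement inside the loop is reached; if all \(T\) iterations take a gradient or negative-curvature step, the loop simply exhausts itself. The paper's proof spends its first half closing exactly this hole: by Lemma~\ref{lem:short_dec}, every non-terminating iteration decreases \(f\) by at least \(\text{MIN\_DEC}\), and since \(f\ge\underline f\) there can be at most \((f(w_0)-\underline f)/\text{MIN\_DEC}\) such iterations, which is at most the loop bound \(T\) of~\eqref{eq:T} (using \(|z|\ge 0\)). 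Hence the return statement must be reached within \(T\) iterations. Without this descent-counting argument — which is the entire reason \(T\) is defined through \(\text{MIN\_DEC}\) — the corollary is not established. Adding that paragraph would complete your proof.
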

\begin{proof}
    From the minimum decrease~\eqref{eq:short_dec} we just derived, it follows that the algorithm will terminate in \(T^*\) iterations, where
    \[
        T^* = \frac{f(w_0)-f^*}{\text{MIN\_DEC}}.
        % T^* = \frac{f(w_0)-f^*}{\min(\frac{1-2 c_{1}}{2 G}\epsg^{2}, 2\left(\frac{1}{3}-c_{2}-c\right) \frac{\epsH^{3}}{M^{2}})}.
    \]
    Our choice of \(T\) in~\eqref{eq:T} is an upper bound of \(T^*\) and thus the algorithm will halt within \(T\) iterations.
    In the iteration \(k\) when the algorithm halts, we have \(\left\|\tilde g_k\right\| \le \epsilon_{g}\) and \(\tilde \lambda_{k} \ge -\epsH\).
    It follows from~\eqref{eq:close} that the output is a \(((1+c_{1}) \epsg, (1+c) \epsH)\)-2S.\@
\end{proof}

\subsection{Proof of Lemma~\ref{lem:sd}}
\begin{lemma}
    Assume the same bounded noise conditions~\eqref{eq:bounded_noise} as before. With the choice of sufficient decrease coefficients \(c_g \in (0, 1 - c_1), c_{H} \in (0, 1-c-\sqrt{\frac{8}{3} c_{2}})\), let \(\bar \gamma_g = 2\left(1-c_{1}-c_{g}\right)/G \) and
    \(\bar \gamma_H = t_2|\tilde{\lambda}| / {M}\) as defined in Algorithm~\ref{alg:dpopt_ls}, the sufficient decrease conditions~\eqref{SD1} and~\eqref{SD2} are satisfied when \(\gamma \le \bar \gamma_g\) and \(\gamma \in [(t_1 / t_2) \bar \gamma_H, \bar \gamma_H]\), respectively,
    where \(0 < t_1 <  t_2\) are solutions to the following quadratic equation (given our choice of \(c, c_2, c_H\), real solutions exist),
    \[
        r(t) := -\frac{1}{6} t^{2}+\frac{1}{2}\left(1-c-c_{H}\right) t-c_{2} = 0,
    \]
    Explicitly, we have
    \begin{equation}
        t_{1}, t_{2}=\frac{3}{2}\left(1-c-c_{H}\right) \pm 3 \sqrt{\frac{1}{4}(1-c - c_H)^{2}-\frac{2}{3} c_{2}}.
    \end{equation}
    In particular, we have \(q_g(\bar \gamma_g) \ge 0\) and \(q_H(\bar \gamma_H) \ge 0\).
\end{lemma}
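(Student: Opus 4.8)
The plan is to verify the two sufficient decrease conditions separately, reducing each to a one‑variable inequality in the step size $\gamma$ by invoking the smoothness Taylor expansions~\eqref{taylor1} and~\eqref{taylor2} already used in the proof of Lemma~\ref{lem:short_dec}, and then controlling the noise terms via the bounds~\eqref{eq:bounded_noise}. The nonnegativity of $q_g$ and $q_H$ is just a restatement of~\eqref{SD1} and~\eqref{SD2}, so it suffices to certify those two inequalities on the claimed ranges.

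For the gradient step I would apply~\eqref{taylor1} at $p=-\gamma\tilde g$, substitute $g=\tilde g-\varepsilon$, and bound the cross term by $\gamma\,\varepsilon^\top\tilde g\le\gamma\|\varepsilon\|\,\|\tilde g\|$. Since a gradient step is taken only when $\|\tilde g\|>\epsg$, the hypothesis $\|\varepsilon\|\le c_1\epsg$ gives $\|\varepsilon\|\le c_1\|\tilde g\|$, collapsing everything to a multiple of $\|\tilde g\|^2$:
\[
  f(w-\gamma\tilde g)\le f(w)-\Bigl[(1-c_1)-\tfrac{G}{2}\gamma\Bigr]\gamma\|\tilde g\|^2 .
\]
Requiring the bracket to be at least $c_g$ yields exactly $\gamma\le 2(1-c_1-c_g)/G=\bar\gamma_g$, which is~\eqref{SD1}; in particular $q_g(\bar\gamma_g)\ge0$.

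For the negative curvature step I would expand~\eqref{taylor2} at $p=\gamma\tilde p$, substitute $g=\tilde g-\varepsilon$ and $H=\tilde H-E$, and use $\|\tilde p\|=1$, $\tilde p^\top\tilde H\tilde p=\tilde\lambda=-|\tilde\lambda|$, together with the sign convention $\tilde p^\top\tilde g\le0$ from~\eqref{eq:min_eig} to discard the first‑order term. Bounding $-\gamma\varepsilon^\top\tilde p\le\gamma\|\varepsilon\|$ and $-\tfrac12\gamma^2\tilde p^\top E\tilde p\le\tfrac12\gamma^2\|E\|$, then inserting $\|\varepsilon\|\le\tfrac{c_2}{M}\epsH^2\le\tfrac{c_2}{M}|\tilde\lambda|^2$ and $\|E\|\le c\,\epsH\le c|\tilde\lambda|$ (valid because $|\tilde\lambda|>\epsH$ on a curvature step), condition~\eqref{SD2} reduces, after dividing by $\gamma>0$, to
\[
  \tfrac16 M\gamma^2-\tfrac12(1-c-c_H)\gamma|\tilde\lambda|+\tfrac{c_2}{M}|\tilde\lambda|^2\le0 .
\]
The decisive move is the rescaling $\gamma=t|\tilde\lambda|/M$, which factors out $|\tilde\lambda|^2/M>0$ and leaves precisely $-r(t)\le0$, i.e.\ $r(t)\ge0$.

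The key observation is that $r$ is a downward parabola, so $r(t)\ge0$ holds exactly on the interval $[t_1,t_2]$ between its roots; translating back through $\gamma=t|\tilde\lambda|/M$ and $\bar\gamma_H=t_2|\tilde\lambda|/M$ produces the stated range $\gamma\in[(t_1/t_2)\bar\gamma_H,\bar\gamma_H]$, and since $t_1<t_2$ the endpoint $\bar\gamma_H$ lies in it, giving $q_H(\bar\gamma_H)\ge0$. The one genuinely substantive point — and the main obstacle — is that the roots are real, which requires the discriminant $\tfrac14(1-c-c_H)^2-\tfrac23c_2\ge0$; I expect this is exactly where the hypothesis $c_H<1-c-\sqrt{\tfrac83 c_2}$ enters, after which the quadratic formula reproduces the explicit expression~\eqref{eq:t2}. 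Everything else is bookkeeping: tracking each error term with the correct sign and confirming that no inequality reverses when dividing through by $\gamma$.
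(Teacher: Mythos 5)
Your proposal is correct and follows essentially the same route as the paper's proof: the same Taylor bounds \eqref{taylor1}--\eqref{taylor2}, the same substitutions $g=\tilde g-\varepsilon$, $H=\tilde H - E$ with the sign convention \eqref{eq:min_eig}, the same reduction of the gradient case to $\gamma\le 2(1-c_1-c_g)/G$, and the same rescaling $\gamma=t|\tilde\lambda|/M$ turning \eqref{SD2} into $r(t)\ge 0$ on $[t_1,t_2]$. Your added remark that the hypothesis $c_H<1-c-\sqrt{\tfrac83 c_2}$ is exactly the nonnegativity of the discriminant is accurate and consistent with the paper.
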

\begin{proof}

    The analysis is similar to~\ref{proof:short_dec}.
    Again for simplicity, we drop iteration indices \(k\).

    For gradient steps we have \(\left\|\tilde g\right\| > \epsilon_{g}\). We write \(g = \tilde g - \varepsilon\). Using \(\left\|\varepsilon\right\| \le c_{1} \epsg < c_1 \|\tilde g\|\), it follows from~\eqref{taylor1} that
    \[
        \begin{aligned}
            f(w-\gamma \tilde{g}) & \le f(w)-\gamma(\tilde{g}-\varepsilon)^{\top} \tilde{g}+\frac{G}{2} \gamma^{2}\|\tilde{g}\|^{2}         \\
                                  & \le f(w)-\left(\gamma-\frac{G}{2} \gamma^{2}\right)\|\tilde{g}\|^{2}+\gamma\|\tilde{g}\|\|\varepsilon\| \\
                                  & \le f(w)-\gamma\left(1-\frac{G}{2} \gamma-c_{1}\right) \|\tilde{g}\|^{2},
        \end{aligned}
    \]
    It follows by definition of \(\bar \gamma_g\) that~\eqref{SD1} holds when \(\gamma \le \bar \gamma_g\) and

    % If we do backtracking with initial \(\gamma_{0,g} >\bar \gamma_g\) and the decreasing parameter \(\beta \in(0,1)\), we will have \(\gamma \ge \beta \bar \gamma_g\)
    When negative curvature steps are taken, we have \(\tilde \lambda < -\epsH\). By assumption, we have  \(\|\varepsilon\| \le \frac{c_2}{M} \epsH^2 < \frac{c_2}{M} |\tilde \lambda|^2\) and \(\|E\| \le  c \, \epsH < c |\tilde \lambda|\).
    Recall the definition~\eqref{eq:min_eig} of \(\tilde p\) and we write \(g = \tilde g - \varepsilon\), \(\tilde H = H - E\). From~\eqref{taylor2}, we have for \(\gamma > 0\) that
    \[
        \begin{aligned}
            f(w+\gamma \tilde{p}) & \le f(w)+\gamma \tilde{g}^{T} \tilde{p}+\frac{1}{2} \gamma^{2} \tilde {p}^{T} H \tilde {p}+\frac{1}{6} M \gamma^{3}\| \tilde {p} \|^{3}    -\gamma {\varepsilon}^{T} \tilde{p}-\frac{1}{2} \gamma^{2} \tilde{p}^{T} E \tilde{p} \\
                                  & \le f(w)-\frac{1}{2} \gamma^{2}| \tilde {\lambda} |+\frac{1}{6} M \gamma^{3}+  \gamma\|\varepsilon\|+\frac{1}{2} \gamma^{2}\|E\|                                                                                                \\
                                  & \le f(w)-\underbrace{\left(\frac{1}{2} \gamma^{2}(1-c)|\tilde{\lambda}|-\gamma \frac{c_{2}}{M}|\tilde{\lambda}|^{2}-\frac{1}{6} M \gamma^{3}\right)}_{g(\gamma)}.
        \end{aligned}
    \]
    By reparameterizing \(\gamma=\frac{t|\tilde{\lambda}|}{M}\), we obtain
    \[
        g(\gamma)-\frac{1}{2} c_H \gamma^{2}|\tilde{\lambda}| = \left(-\frac{1}{6} t^{3} + \frac{1}{2}\left(1-c-c_{H}\right) t^{2}- c_{2}t\right)\frac{|\tilde{\lambda}|^{3}}{M^{2}} = t \cdot r(t)\frac{|\tilde{\lambda}|^{3}}{M^{2}}.
    \]
    Note that~\eqref{SD2} holds when \(g(\gamma) \ge \frac{1}{2} c_H \gamma^{2}|\tilde{\lambda}|\). The result follows from the fact that \(r(t) \ge 0\) for \(t \in [t_{1}, t_{2}]\).
    % If we choose \(c_H\) and \(\beta \in (0,1)\) such that
    % \(\beta>\frac{t_{1}}{t_{2}}\) with starting step size \(\gamma_{ibit} \ge \bar \gamma_H=\frac{t_2|\tilde{\lambda}|}{M}\)
    % \sw{We need this to be a "mild" condition on \(\beta\), so we need to be sure that \(t_1/t_2\) is not too close to \(1\). Ideally it should be be \(1/4\) or less.} and the initial step \(\gamma_{0} = t_0 \frac{|\tilde \lambda|}{M}\) such that \(t_0 > t_2\), backtracking will succeed with a \(t \ge \beta t_2\).
    % \sw{Same comment as above - this isn't necessarily true. You can say that \(t \ge \beta t_2\) and \(t\) is also greater than whatever starting value is used (corresponding to \(\gamma_{ibit}\).}
    % Since \(\beta t_2 > t_1\), we will have \(r(t) \ge 0\) and thus~\eqref{SD2} is satisfied.

    % do we need this analysis given we can always fall back to the short step size?
\end{proof}

\subsection{Proof of Lemma~\ref{lem:ls_decrease}}
\begin{lemma}
    Using DP line search Algorithm~\ref{alg:dpopt_ls}, assume the same bounded noise conditions~\eqref{eq:bounded_noise} as before. With the choice of sufficient decrease coefficients \(c_g \in (0, 1 - c_1), c_{H} \in (0, 1-c-\sqrt{\frac{8}{3} c_{2}})\), define \(\bar \gamma_g\) and \(\bar \gamma_H\) as before. Choose initial step size multipliers \( b_g, b_H > 1 \)
    and decrease parameters \(\beta_g \in (0,1), \beta_H \in (t_1/t_2, 1) \).
    Let \(i_{\max} = \lfloor\log_{\beta}\max (b_g, b_H)\rfloor + 1\). If \(n\) satisfies the following:
    \begin{equation}
        n \ge 16\lambda \left(\log i_{\max}  + \log (T / \xi)\right) \max \left(
        2 b_g \frac{B_g}{c_g\epsg},
        4 b_H \frac{B_g M}{t_2 c_{H} \epsH^2}
        \right),
    \end{equation}
    with probability at least \(1- \xi / T\),
    the amount of decrease in a single step is at least
    \begin{equation}
        \textnormal{MIN\_DEC} =
        %     \right) \ge
        \min \left(
        \frac1G (1-c_1-c_g) c_g \epsg^2,
        \frac14 c_{H} t_2^2 \frac{\epsH^{3}}{M^{2}}
        \right).
    \end{equation}
\end{lemma}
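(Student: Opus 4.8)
The plan is to view each \textsc{DP-LineSearch} call as a one-sided sparse-vector query whose implicit threshold is $0$, and to show that the Laplace noise can only make the search halt prematurely by a controlled additive amount $\alpha_{\mathrm{SVT}}$; the lower bound on $n$ then forces $\alpha_{\mathrm{SVT}}$ to be at most half of the decrease that the fallback step already guarantees, which delivers MIN\_DEC. Throughout I would condition on the bounded-noise event~\eqref{eq:bounded_noise}, so that Lemma~\ref{lem:sd} is in force.

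First I would record the geometry of the backtracking grid. The queried step sizes are $\gamma^{\mathrm{init}}, \beta\gamma^{\mathrm{init}}, \dots$, all lying in $[\bar\gamma, b\bar\gamma]$ (the loop length $i_{\max}$ is chosen so the smallest query is still $\ge \bar\gamma$), and the fallback output is exactly $\bar\gamma$; hence any value $\gamma$ the subroutine can return obeys $\gamma \ge \bar\gamma$. Two facts follow. First, the sensitivity $\Delta_q = \tfrac{2}{n}\gamma^{\mathrm{init}} B_g \|\cdot\|$, computed from the largest step, upper-bounds the true query sensitivity $\tfrac{2}{n}\gamma B_g\|\cdot\|$ at every queried $\gamma$, so the noise scales used by SVT are valid. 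Second, by Lemma~\ref{lem:sd} the fallback is itself a sufficient-decrease step, $q_g(\bar\gamma_g)\ge 0$ and $q_H(\bar\gamma_H)\ge 0$, so the only way a step short of the guaranteed decrease can be returned is a noise-induced early halt.

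Next I would bound the Laplace noise. Writing $\nu_0\sim\lap{2\lambda\Delta_q}$ for the threshold perturbation (called $\xi$ in Algorithm~\ref{svt}; renamed here to avoid clashing with the failure parameter $\xi$) and $\nu_i\sim\lap{4\lambda\Delta_q}$ for the query perturbations, a Laplace tail bound together with a union bound over the at most $i_{\max}+1$ samples gives, with probability at least $1-\xi/T$, that $|\nu_0|\le 2\lambda\Delta_q L$ and $\max_i|\nu_i|\le 4\lambda\Delta_q L$, where $L = \log\tfrac{(i_{\max}+1)T}{\xi} = \Theta(\log i_{\max} + \log(T/\xi))$. On this event the one-sided test is sound: whenever the subroutine halts at some $\gamma$ we have $q(\gamma)+\nu_i\ge\nu_0$, hence $q(\gamma)\ge\nu_0-\nu_i\ge -6\lambda\Delta_q L =: -\alpha_{\mathrm{SVT}}$, while if it returns the fallback then $q(\bar\gamma)\ge 0\ge -\alpha_{\mathrm{SVT}}$. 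So in every case the returned $\gamma$ satisfies $\gamma\ge\bar\gamma$ and $q(\gamma)\ge-\alpha_{\mathrm{SVT}}$. I would then convert this into a function decrease. For a gradient step, $f(w)-f(w-\gamma\tilde g)=q_g(\gamma)+c_g\gamma\|\tilde g\|^2\ge c_g\bar\gamma_g\|\tilde g\|^2-\alpha_{\mathrm{SVT}}$, and substituting $\Delta_q=\tfrac{2}{n}b_g\bar\gamma_g B_g\|\tilde g\|$ factors the right-hand side as $\bar\gamma_g\|\tilde g\|\bigl(c_g\|\tilde g\|-\tfrac{12\lambda b_g B_g L}{n}\bigr)$. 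The assumed bound on $n$ makes the subtracted term at most $\tfrac12 c_g\epsg$, so with $\|\tilde g\|>\epsg$ the decrease is at least $\tfrac12 c_g\bar\gamma_g\epsg^2=\tfrac1G(1-c_1-c_g)c_g\epsg^2$, the first entry of MIN\_DEC. The negative-curvature case is identical in structure: $f(w)-f(w+\gamma\tilde p)=q_H(\gamma)+\tfrac12 c_H\gamma^2|\tilde\lambda|\ge \tfrac12 c_H\bar\gamma_H^2|\tilde\lambda|-\alpha_{\mathrm{SVT}}$, and with $\bar\gamma_H=t_2|\tilde\lambda|/M$, $\Delta_q=\tfrac2n b_H\bar\gamma_H B_g$ and $|\tilde\lambda|>\epsH$, the same arithmetic yields at least $\tfrac14 c_H t_2^2\,\epsH^3/M^2$. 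Taking the minimum of the two cases gives~\eqref{eq:ls_decrease}.

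I expect the main obstacle to be the bookkeeping in this last conversion rather than any deep idea: one must notice that the query sensitivity scales linearly in $\gamma^{\mathrm{init}}$, hence in $\|\tilde g\|$ (resp.\ $|\tilde\lambda|$), so that $\alpha_{\mathrm{SVT}}$ carries exactly the same $\|\tilde g\|$ (resp.\ $|\tilde\lambda|$) factor as the decrease term and can be absorbed by a condition on $n$ alone. The deliberate factor of $2$ between MIN\_DEC and the exact fallback decrease is precisely what creates the room for this absorption, and matching the explicit constant in~\eqref{eq:ls_cond} is just a matter of the chosen Laplace-tail constant. The hypotheses $c_g<1-c_1$, $c_H<1-c-\sqrt{\tfrac83 c_2}$ and $\beta_H\in(t_1/t_2,1)$ enter only through Lemma~\ref{lem:sd}, which certifies that the fallback steps lie in the sufficient-decrease region; once the fallback validity $q_H(\bar\gamma_H)\ge0$ is in hand, no further use of the interval structure is needed.
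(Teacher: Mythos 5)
Your proposal is correct and follows essentially the same route as the paper's proof: compute the query sensitivities $\Delta_{q_g}=\tfrac{2}{n}b_g\bar\gamma_g B_g\|\tilde g\|$ and $\Delta_{q_H}=\tfrac{2}{n}b_H\bar\gamma_H B_g$, invoke the SVT accuracy guarantee to get $q(\gamma)\ge -t\Delta_q$ with $t=\Theta(\lambda(\log i_{\max}+\log(T/\xi)))$, note that the returned step always satisfies $\gamma\ge\bar\gamma$, and use the condition on $n$ together with $\|\tilde g\|>\epsg$ (resp.\ $|\tilde\lambda|>\epsH$ and $\bar\gamma_H\ge t_2\epsH/M$) to absorb the SVT error into half of the fallback decrease. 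The only cosmetic difference is that you re-derive the \texttt{AboveThreshold} accuracy bound from Laplace tails rather than citing it, which only perturbs the constant.
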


\begin{proof}

    For the loss function \(l\) in the ERM setting, we have by the mean value theorem that there exists \(t \in (0,1)\) for which
    \[
        |l(w, x) - l(w-\gamma \tilde{g}, x)| = |\gamma \tilde{g}^T \nabla_w l(w-t\gamma \tilde{g}, x)| \le \gamma B_g \|\tilde{g}\|.
    \]
    We start by analyzing the sensitivity of \(q_g\) and \(q_H\). From the definition of \(q_g(\gamma,w)\) in~\eqref{eq:defqg} and the bound above that
    \begin{align*}
        |q_g(\gamma, w, D) - q_g(\gamma, w, D')| & = \frac{1}{n}|
        l(w, x_k) - l(w-\gamma \tilde{g}, x_k) -
        l(w, x_k') - l(w-\gamma \tilde{g}, x_k')|                                           \\
                                                 & \le \frac{1}{n}|
        l(w, x_k) - l(w-\gamma \tilde{g}, x_k)| +
        |l(w, x_k') + l(w-\gamma \tilde{g}, x_k')|                                          \\
                                                 & \le \frac{2}{n}\gamma B_g \|\tilde{g}\|.
    \end{align*}
    We have \(\gamma \le \gamma_{g}^{\text{init}} = b_g \bar \gamma_g\). Thus for the sensitivity of \(q_g\), we have
    \begin{equation} \label{eq:Delqg}
        \Delta_{q_g} = \frac{2}{n}\gamma_{g}^{\text{init}} B_g \|\tilde{g}\| = \frac{2}{n} b_g \bar \gamma_{g} B_g \|\tilde{g}\|.
    \end{equation}
    For \(q_H\), using the definition~\eqref{eq:defqH} and by a similar argument to the one above, we obtain the following upper bound on the sensitivity:
    %% \sw{Here you have \(w-\gamma p\) while it was \(w+\gamma p\) in earlier version of this formula} 
    %% \sw{to save space you could remove these restatements o \(q_g\) and \(q_H\) and just refer to  the original definitions~\eqref{eq:defqg} and~\eqref{eq:defqH}.}
    %% \[
    %% q_H(\gamma, w) = f(w) - f(w+\gamma p) - \frac{1}{2} c_{H} \gamma^{2}|\tilde{\lambda}|.
    %% \]
    \begin{equation} \label{eq:DelqH}
        \Delta_{q_H} = \frac{2}{n}\gamma_{H}^{\text{init}} B_g \|p\| = \frac{2}{n} b_H \bar \gamma_{H} B_g.
    \end{equation}
    % \gcy{This also holds for \(\Delta_{q_g}\) so when \(\|\tilde{g}\|\) this might be a better bound.}

    By the property of SVT (cf. \citet[Theorem~3.24]{dworkAlgorithmicFoundationsDifferential2014}), we know that with probability at least \(1 - \xi / T\), \textsc{LineSearch} will output a \(\gamma\) such that
    \[
        q(\gamma) \ge - t \Delta_q, \quad \text{where} \; t = 8 \lambda \left(\log i_{\max} + \log (T / \xi)\right).
    \]
    Here, \(i_{\max}\) is the maximum number of iterations for \textsc{DP-LineSearch}, which should be \(\lfloor\log_{\beta}b_g\rfloor + 1\) and \(\lfloor\log_{\beta}b_H\rfloor + 1\) for gradient and negative curvature steps, respectively. We take \(i_{\max}\) as the maximum of the two so the bound will hold for both cases.
    (If we end up with a fallback value for the step size, the final output \(\bar \gamma\) will automatically satisfy the condition above, since by definition we have \(q(\bar \gamma) \ge 0\).)
    Rewrite our assumptions of \(n\) using \(t\),
    \[
        2t b_g \frac{B_g}{n} \le \frac12 c_g \epsg, \quad 2t b_H \frac{B_g}{n} \le \frac14 c_H \epsH \cdot \frac{t_2 \epsH}{M}.
    \]
    If follows by combinint with~\eqref{eq:Delqg} and~\eqref{eq:DelqH} that
    \[
        \begin{aligned}
            t \Delta_{q_g} & = 2t b_g \frac{B_g}{n}  \bar \gamma_{g} \|\tilde{g}\|
            \le \frac12 c_g \epsg \bar \gamma_{g} \|\tilde{g}\|  \le \frac12 c_g \bar \gamma_{g} \|\tilde{g}\|^2,                                                      \\
            t \Delta_{q_H} & = 2t b_H \frac{B_g}{n} \bar \gamma_H \le \frac14 c_H \epsH \bar \gamma_H \cdot \frac{t_2 \epsH}{M} \le \frac14 c_H \epsH \bar \gamma_H^2,
        \end{aligned}
    \]
    where we use the preconditions for gradient steps and negative curvature steps, \(\left\|\tilde g\right\| > \epsilon_{g}\) and \(\tilde \lambda < -\epsH\), respectively, and the fact that \(\bar \gamma_H = \frac{t_2 |\tilde \lambda|}{M} \ge \frac{t_2 \epsH}{M}\) by definition. For the output \(\gamma\), we always have \(\gamma \ge \bar \gamma\), since we decrease \(\gamma\) at most \(\lfloor\log_{\beta}\max b \rfloor + 1\) times. It follows that, the amount of decrease is at least
    \[
        \begin{aligned}
            c_g \gamma_g \|\tilde{g}\|^2 - t \Delta_{q_g}                 & \ge \frac12 c_g \bar \gamma_g \|\tilde{g}\|^2 \ge
            \frac12 c_g \bar \gamma_g \epsg^2
            = \frac1G (1-c_1-c_g) c_g \epsg^2,
            \\
            \frac12 c_H \gamma_H^{2} \|\tilde{\lambda}\| - t \Delta_{q_H} & \ge
            \frac14 c_{H} \epsH \bar \gamma_H^2 \ge
            \frac14 c_{H} t_2^2 \frac{\epsH^{3}}{M^{2}}                   ,
        \end{aligned}
    \]
    for gradient steps and negative curvature steps, respectively. The result follows by taking the minimum of two quantities on the right hand sides.

\end{proof}

\section{Experimental Settings and Additional Experiments}\label{sec:exp_details}
The algorithms are implemented using PyTorch. For privacy accounting of RDP, which is used in the mini-batched algorithm, we use the \texttt{autodp} package\footnote{Open source repo: \url{https://github.com/yuxiangw/autodp}}.
All our experiments were carried out on a cluster with 36-core Intel Xeon Gold 6254 3.1GHz CPUs, utilizing
%% on the Wisconsin Institutes for Discovery optimization cluster.
8 CPU cores for each run.
\subsection{Datasets}
The Covertype dataset\footnote{Data source: UCI Machine Learning Repository \url{https://archive.ics.uci.edu/ml/datasets/covertype}} contains \(n=581012\) data points.
Each data point has the form \((x, y)\), where \(x\) is a 54-dimensional feature vector (first 10 are dimensions numerical, column 11 – 14 is the WildernessArea one-hot vector, and last 40 columns are the SoilType one-hot vector), and \(y\) being the label, is one of \(\{1, 2, \ldots, 7\}\).

For preprocessing, we normalize the first 10 numerical columns, and keep only those samples for which  \(y=1,2\).
The number of samples remaining in this restricted set is \(n=495141\).
We recode \(y=2\) to \(y=-1\) so that \(y \in \{-1,1\}\).

The IJCNN dataset\footnote{Data source: LIBSVM data repository \url{https://www.openml.org/search?type=data&sort=runs&id=1575&status=active}} contatins \(n=4999\) data points. Each point consists of \((x, y)\), where \(x\) is a 22-dimensional feature vector (first 10 are one hot and column 11 – 22 are numerical, and \(y\) being the label is binary). For preprocessing, we normalize the data.

Below we repeat the same experiment using the IJCNN dataset.
\subsection{IJCNN experiment using loss in Section~\ref{sec:exp}}

\begin{table}[!htb]
    \centering
    \caption{IJCNN:\@ finding a loose solution, \((\epsg, \epsH) = (0.040, 0.200)\)}
\begin{tabular}{ccccccc}
    \toprule
    % \hline
    \multirow{2}{*}{method} & \multicolumn{2}{c}{\(\varepsilon = 0.2\)}                                     & \multicolumn{2}{c}{\(\varepsilon = 0.6\)}  & \multicolumn{2}{c}{\(\varepsilon = 1.0\)}            \\
    \cmidrule(lr){2-3}
    \cmidrule(lr){4-5}
    \cmidrule(lr){6-7}
    & final loss & runtime & loss & runtime & loss & runtime \\
    \midrule
    TR      &    \(0.621 \pm 0.009\) & \( \times \) &   \(0.71 \pm 0.025\) & \(1.3 \pm 1.5\) &  \(0.718 \pm 0.019\) & \(0.8 \pm 1.1\) \\
    TR-B    &    \(0.622 \pm 0.009\) & \( \times \) &  \(0.718 \pm 0.046\) & \(0.4 \pm 0.4\) &   \(0.72 \pm 0.043\) & \(1.3 \pm 1.6\) \\
\midrule
    OPT     &    \(0.603 \pm 0.012\) & \( \times \) &     \(0.644 \pm 0.014\) & \( \times \) &  \(0.702 \pm 0.015\) & \(0.1 \pm 0.0\) \\
    OPT-B   &  \(0.71 \pm 0.022\) & \(2.7 \pm 2.4\) &   \(0.71 \pm 0.022\) & \(2.6 \pm 2.5\) &   \(0.71 \pm 0.022\) & \(2.8 \pm 2.3\) \\
    OPT-LS  &     \(0.671 \pm 0.03\) & \( \times \) &      \(0.593 \pm 0.02\) & \( \times \) &     \(0.658 \pm 0.019\) & \( \times \) \\
\midrule
    2OPT    &    \(0.631 \pm 0.052\) & \( \times \) &     \(0.679 \pm 0.048\) & \( \times \) &     \(0.676 \pm 0.056\) & \( \times \) \\
    2OPT-B  &  \(0.71 \pm 0.022\) & \(1.0 \pm 0.3\) &   \(0.71 \pm 0.022\) & \(1.0 \pm 0.3\) &   \(0.71 \pm 0.022\) & \(1.1 \pm 0.3\) \\
    2OPT-LS &  NA &            NA  &  \(0.696 \pm 0.013\) & \(0.045 \pm 0.005\) &  \(0.693 \pm 0.013\) & \(0.047 \pm 0.005\) \\
    \bottomrule
\end{tabular}
\end{table}

\begin{table}[!htb]
    \centering
    \caption{IJCNN:\@ finding a tight solution: \((\epsg, \epsH) = (0.020, 0.141)\)}
    \begin{tabular}{ccccccc}
        \toprule
        % \hline
        \multirow{2}{*}{method} & \multicolumn{2}{c}{\(\varepsilon = 0.2\)}                                     & \multicolumn{2}{c}{\(\varepsilon = 0.6\)}  & \multicolumn{2}{c}{\(\varepsilon = 1.0\)}            \\
        \cmidrule(lr){2-3}
        \cmidrule(lr){4-5}
        \cmidrule(lr){6-7}
        & final loss & runtime & loss & runtime & loss & runtime \\ \midrule
    TR      &     \(0.625 \pm 0.008\) & \( \times \) &      \(0.59 \pm 0.009\) & \( \times \) &      \(0.57 \pm 0.009\) & \( \times \) \\
TR-B    &     \(0.625 \pm 0.009\) & \( \times \) &     \(0.591 \pm 0.009\) & \( \times \) &     \(0.574 \pm 0.009\) & \( \times \) \\
\midrule
OPT     &     \(0.643 \pm 0.011\) & \( \times \) &     \(0.583 \pm 0.019\) & \( \times \) &     \(0.515 \pm 0.039\) & \( \times \) \\
OPT-B   &  \(0.667 \pm 0.013\) & \(1.1 \pm 0.3\) &  \(0.667 \pm 0.013\) & \(0.9 \pm 0.1\) &  \(0.667 \pm 0.013\) & \(1.0 \pm 0.1\) \\
OPT-LS  &     \(0.835 \pm 0.069\) & \( \times \) &     \(0.635 \pm 0.022\) & \( \times \) &     \(0.598 \pm 0.024\) & \( \times \) \\
\midrule
2OPT    &     \(0.537 \pm 0.027\) & \( \times \) &     \(0.522 \pm 0.074\) & \( \times \) &     \(0.552 \pm 0.091\) & \( \times \) \\
2OPT-B  &  \(0.666 \pm 0.014\) & \(1.1 \pm 0.4\) &  \(0.666 \pm 0.014\) & \(1.3 \pm 0.6\) &  \(0.666 \pm 0.014\) & \(1.1 \pm 0.3\) \\
2OPT-LS &  NA &            NA &      \(0.623 \pm 0.06\) & \( \times \) &  \(0.649 \pm 0.004\) & \(0.1 \pm 0.1\) \\
        \bottomrule
    \end{tabular}
\end{table}

We remark that for 2OPT-LS under \(\varepsilon = 0.2\), the result is unavailable (reported as NA) because due to numerical issues, the package \texttt{autodp} we use cannot handle subsampling with a very low privacy budget.

\subsection{Additional Experiments}\label{sec:add_exp}

Additionally, we consider the logistic loss
\[
\nabla l(w) = \frac{1}{n} \sum_{i=1}^{n} \frac{1}{1+\exp \left(-y_{i}\left\langle x_{i}, w\right\rangle\right)} + \frac{\lambda}{2} \|w\|^2,
\]
and repeat our experiments on the aforementioned datasets with \(\lambda = 10^{-3}\).
We can verify that the two chosen losses have Lipschitz gradients and Hessians as long as the feature vector \(x_i\)'s are bounded.

In this set of experiments, we find solutions \((\epsg, \epsH) = (0.040, 0.200)\) and \((\epsg, \epsH) = (0.020, 0.141)\).
We also show the aggregated results for the number of noisy Hessian evaluations. We note that the number of noisy Hessian evaluations required in our algorithm is very low, whereas DP-TR needs to evaluate the noisy Hessian every iteration.

\subsubsection{Covertype experiment using logistic loss}
\begin{table}[!htb]
    \centering
    \caption{Covertype (logistic loss): finding a loose solution, \((\epsg, \epsH) = (0.040, 0.200)\)}
\begin{tabular}{ccccccc}
    \toprule
    % \hline
    \multirow{2}{*}{method} & \multicolumn{2}{c}{\(\varepsilon = 0.2\)}                                     & \multicolumn{2}{c}{\(\varepsilon = 0.6\)}  & \multicolumn{2}{c}{\(\varepsilon = 1.0\)}            \\
    \cmidrule(lr){2-3}
    \cmidrule(lr){4-5}
    \cmidrule(lr){6-7}
    & final loss & runtime & loss & runtime & loss & runtime \\
    \midrule
TR      &      \(0.425 \pm 0.009\) & \( \times \) &      \(0.388 \pm 0.001\) & \( \times \) &      \(0.381 \pm 0.001\) & \( \times \) \\
TR-B    &      \(0.425 \pm 0.009\) & \( \times \) &      \(0.388 \pm 0.002\) & \( \times \) &      \(0.382 \pm 0.001\) & \( \times \) \\
\midrule
OPT     &      \(0.442 \pm 0.006\) & \( \times \) &   \(0.539 \pm 0.022\) & \(0.3 \pm 0.1\) &   \(0.539 \pm 0.022\) & \(0.4 \pm 0.1\) \\
OPT-B   &  \(0.539 \pm 0.022\) & \(11.1 \pm 1.7\) &  \(0.539 \pm 0.022\) & \(10.6 \pm 0.4\) &  \(0.539 \pm 0.022\) & \(10.6 \pm 0.4\) \\
OPT-LS  &      \(0.385 \pm 0.002\) & \( \times \) &      \(0.455 \pm 0.014\) & \( \times \) &   \(0.539 \pm 0.022\) & \(0.4 \pm 0.1\) \\
\midrule
2OPT    &   \(0.539 \pm 0.022\) & \(0.3 \pm 0.1\) &   \(0.539 \pm 0.022\) & \(0.3 \pm 0.1\) &   \(0.539 \pm 0.022\) & \(0.4 \pm 0.1\) \\
2OPT-B  &   \(0.539 \pm 0.022\) & \(1.0 \pm 0.0\) &   \(0.539 \pm 0.022\) & \(1.0 \pm 0.1\) &   \(0.539 \pm 0.022\) & \(1.0 \pm 0.1\) \\
2OPT-LS &   \(0.539 \pm 0.022\) & \(0.4 \pm 0.1\) &   \(0.539 \pm 0.022\) & \(0.3 \pm 0.1\) &   \(0.539 \pm 0.022\) & \(0.4 \pm 0.1\) \\
    \bottomrule
\end{tabular}
\end{table}

\begin{table}[!htb]
    \centering
    \caption{Covertype Hess evals (logistic loss): finding a loose solution, \((\epsg, \epsH) = (0.040, 0.200)\)}
\begin{tabular}{ccccccc}
    \toprule
    % \hline
    \multirow{2}{*}{method} & \multicolumn{2}{c}{\(\varepsilon = 0.2\)}                                     & \multicolumn{2}{c}{\(\varepsilon = 0.6\)}  & \multicolumn{2}{c}{\(\varepsilon = 1.0\)}            \\
    \cmidrule(lr){2-3}
    \cmidrule(lr){4-5}
    \cmidrule(lr){6-7}
    & Hess evals & runtime & Hess evals & runtime & Hess evals & runtime \\
    \midrule
TR      &    \(375.0 \pm 0.0\) & \( \times \) &      \(375.0 \pm 0.0\) & \( \times \) &    \(375.0 \pm 0.0\) & \( \times \) \\
TR-B    &    \(375.0 \pm 0.0\) & \( \times \) &      \(375.0 \pm 0.0\) & \( \times \) &    \(375.0 \pm 0.0\) & \( \times \) \\
\midrule
OPT     &   \(190.6 \pm 32.3\) & \( \times \) &     \(1.0 \pm 0.0\) & \(0.3 \pm 0.1\) &   \(1.0 \pm 0.0\) & \(0.4 \pm 0.1\) \\
OPT-B   &  \(1.0 \pm 0.0\) & \(11.1 \pm 1.7\) &    \(1.0 \pm 0.0\) & \(10.6 \pm 0.4\) &  \(1.0 \pm 0.0\) & \(10.6 \pm 0.4\) \\
OPT-LS  &      \(0.0 \pm 0.0\) & \( \times \) &  \(282.4 \pm 174.252\) & \( \times \) &   \(1.0 \pm 0.0\) & \(0.4 \pm 0.1\) \\
\midrule
2OPT    &   \(1.0 \pm 0.0\) & \(0.3 \pm 0.1\) &     \(1.0 \pm 0.0\) & \(0.3 \pm 0.1\) &   \(1.0 \pm 0.0\) & \(0.4 \pm 0.1\) \\
2OPT-B  &   \(1.0 \pm 0.0\) & \(1.0 \pm 0.0\) &     \(1.0 \pm 0.0\) & \(1.0 \pm 0.1\) &   \(1.0 \pm 0.0\) & \(1.0 \pm 0.1\) \\
2OPT-LS &   \(1.0 \pm 0.0\) & \(0.4 \pm 0.1\) &     \(1.0 \pm 0.0\) & \(0.3 \pm 0.1\) &   \(1.0 \pm 0.0\) & \(0.4 \pm 0.1\) \\
    \bottomrule
\end{tabular}
\end{table}

\begin{table}[!htb]
    \centering
    \caption{Covertype (logistic loss): finding a tight solution: \((\epsg, \epsH) = (0.020, 0.141)\)}
    \begin{tabular}{ccccccc}
        \toprule
        % \hline
        \multirow{2}{*}{method} & \multicolumn{2}{c}{\(\varepsilon = 0.2\)}                                     & \multicolumn{2}{c}{\(\varepsilon = 0.6\)}  & \multicolumn{2}{c}{\(\varepsilon = 1.0\)}            \\
        \cmidrule(lr){2-3}
        \cmidrule(lr){4-5}
        \cmidrule(lr){6-7}
        & final loss & runtime & loss & runtime & loss & runtime \\ \midrule
TR      &     \(0.408 \pm 0.004\) & \( \times \) &       \(0.381 \pm 0.0\) & \( \times \) &       \(0.378 \pm 0.0\) & \( \times \) \\
TR-B    &     \(0.408 \pm 0.004\) & \( \times \) &       \(0.381 \pm 0.0\) & \( \times \) &       \(0.378 \pm 0.0\) & \( \times \) \\
\midrule
OPT     &     \(0.379 \pm 0.001\) & \( \times \) &      \(0.38 \pm 0.001\) & \( \times \) &      \(0.39 \pm 0.002\) & \( \times \) \\
OPT-B   &  \(0.454 \pm 0.004\) & \(1.3 \pm 0.2\) &  \(0.454 \pm 0.004\) & \(1.4 \pm 0.4\) &  \(0.454 \pm 0.004\) & \(1.5 \pm 0.6\) \\
OPT-LS  &      \(0.41 \pm 0.005\) & \( \times \) &     \(0.381 \pm 0.001\) & \( \times \) &       \(0.378 \pm 0.0\) & \( \times \) \\
\midrule
2OPT    &     \(0.386 \pm 0.006\) & \( \times \) &       \(0.378 \pm 0.0\) & \( \times \) &       \(0.377 \pm 0.0\) & \( \times \) \\
2OPT-B  &  \(0.454 \pm 0.004\) & \(2.0 \pm 0.2\) &  \(0.454 \pm 0.004\) & \(2.0 \pm 0.3\) &  \(0.454 \pm 0.004\) & \(1.8 \pm 0.2\) \\
2OPT-LS &     \(0.441 \pm 0.007\) & \( \times \) &  \(0.447 \pm 0.009\) & \(0.6 \pm 0.2\) &  \(0.447 \pm 0.008\) & \(0.7 \pm 0.2\) \\
    \bottomrule
    \end{tabular}
\end{table}

\begin{table}[!htb]
    \centering
    \caption{Covertype Hess evals (logistic loss): finding a tight solution: \((\epsg, \epsH) = (0.020, 0.141)\)}
    \begin{tabular}{ccccccc}
        \toprule
        % \hline
        \multirow{2}{*}{method} & \multicolumn{2}{c}{\(\varepsilon = 0.2\)}                                     & \multicolumn{2}{c}{\(\varepsilon = 0.6\)}  & \multicolumn{2}{c}{\(\varepsilon = 1.0\)}            \\
        \cmidrule(lr){2-3}
        \cmidrule(lr){4-5}
        \cmidrule(lr){6-7}
        & Hess evals & runtime & Hess evals & runtime & Hess evals & runtime \\ \midrule
\midrule
TR      &  \(1061.0 \pm 0.0\) & \( \times \) &   \(1061.0 \pm 0.0\) & \( \times \) &     \(1061.0 \pm 0.0\) & \( \times \) \\
TR-B    &  \(1061.0 \pm 0.0\) & \( \times \) &   \(1061.0 \pm 0.0\) & \( \times \) &     \(1061.0 \pm 0.0\) & \( \times \) \\
\midrule
OPT     &     \(0.0 \pm 0.0\) & \( \times \) &  \(58.6 \pm 24.765\) & \( \times \) &  \(372.6 \pm 141.077\) & \( \times \) \\
OPT-B   &  \(1.0 \pm 0.0\) & \(1.3 \pm 0.2\) &   \(1.0 \pm 0.0\) & \(1.4 \pm 0.4\) &     \(1.0 \pm 0.0\) & \(1.5 \pm 0.6\) \\
OPT-LS  &     \(0.0 \pm 0.0\) & \( \times \) &      \(0.0 \pm 0.0\) & \( \times \) &        \(1.0 \pm 1.0\) & \( \times \) \\
\midrule
2OPT    &     \(0.0 \pm 0.0\) & \( \times \) &      \(0.0 \pm 0.0\) & \( \times \) &      \(0.2 \pm 0.447\) & \( \times \) \\
2OPT-B  &  \(1.0 \pm 0.0\) & \(2.0 \pm 0.2\) &   \(1.0 \pm 0.0\) & \(2.0 \pm 0.3\) &     \(1.0 \pm 0.0\) & \(1.8 \pm 0.2\) \\
2OPT-LS &  \(33.0 \pm 6.745\) & \( \times \) &   \(1.0 \pm 0.0\) & \(0.6 \pm 0.2\) &     \(1.0 \pm 0.0\) & \(0.7 \pm 0.2\) \\
    \bottomrule
    \end{tabular}
\end{table}

\newpage

\subsubsection{IJCNN experiment using logistic loss}
\begin{table}[!htb]
    \centering
    \caption{IJCNN (logistic loss): finding a loose solution, \((\epsg, \epsH) = (0.040, 0.200)\)}
\begin{tabular}{ccccccc}
    \toprule
    % \hline
    \multirow{2}{*}{method} & \multicolumn{2}{c}{\(\varepsilon = 0.2\)}                                     & \multicolumn{2}{c}{\(\varepsilon = 0.6\)}  & \multicolumn{2}{c}{\(\varepsilon = 1.0\)}            \\
    \cmidrule(lr){2-3}
    \cmidrule(lr){4-5}
    \cmidrule(lr){6-7}
    & final loss & runtime & loss & runtime & loss & runtime \\
    \midrule
TR      &      \(0.477 \pm 0.01\) & \( \times \) &      \(0.454 \pm 0.006\) & \( \times \) &     \(0.446 \pm 0.004\) & \( \times \) \\
TR-B    &      \(0.477 \pm 0.01\) & \( \times \) &      \(0.454 \pm 0.006\) & \( \times \) &     \(0.446 \pm 0.004\) & \( \times \) \\
\midrule
OPT     &       \(0.46 \pm 0.01\) & \( \times \) &      \(0.439 \pm 0.002\) & \( \times \) &     \(0.463 \pm 0.005\) & \( \times \) \\
OPT-B   &  \(0.501 \pm 0.008\) & \(8.6 \pm 0.2\) &  \(0.501 \pm 0.008\) & \(11.0 \pm 4.9\) &  \(0.501 \pm 0.008\) & \(9.0 \pm 1.0\) \\
OPT-LS  &     \(0.607 \pm 0.076\) & \( \times \) &      \(0.473 \pm 0.012\) & \( \times \) &     \(0.454 \pm 0.004\) & \( \times \) \\
\midrule
2OPT    &     \(0.493 \pm 0.022\) & \( \times \) &   \(0.501 \pm 0.008\) & \(0.0\footnote{Due to round off, the same below} \pm 0.0\) &  \(0.501 \pm 0.008\) & \(0.0 \pm 0.0\) \\
2OPT-B  &  \(0.501 \pm 0.008\) & \(1.0 \pm 0.0\) &   \(0.501 \pm 0.008\) & \(1.4 \pm 0.9\) &  \(0.501 \pm 0.008\) & \(1.1 \pm 0.3\) \\
2OPT-LS &     \(1.002 \pm 0.321\) & \( \times \) &   \(0.501 \pm 0.008\) & \(0.0 \pm 0.0\) &  \(0.501 \pm 0.008\) & \(0.0 \pm 0.0\) \\
    \bottomrule
\end{tabular}
\end{table}

\begin{table}[!htb]
    \centering
    \caption{IJCNN Hess evals (logistic loss): finding a loose solution, \((\epsg, \epsH) = (0.040, 0.200)\)}
    \begin{tabular}{ccccccc}
        \toprule
        % \hline
        \multirow{2}{*}{method} & \multicolumn{2}{c}{\(\varepsilon = 0.2\)}                                     & \multicolumn{2}{c}{\(\varepsilon = 0.6\)}  & \multicolumn{2}{c}{\(\varepsilon = 1.0\)}            \\
        \cmidrule(lr){2-3}
        \cmidrule(lr){4-5}
        \cmidrule(lr){6-7}
        & Hess evals & runtime & Hess evals & runtime & Hess evals & runtime \\ \midrule
TR      &   \(375.0 \pm 0.0\) & \( \times \) &    \(375.0 \pm 0.0\) & \( \times \) &     \(375.0 \pm 0.0\) & \( \times \) \\
TR-B    &   \(375.0 \pm 0.0\) & \( \times \) &    \(375.0 \pm 0.0\) & \( \times \) &     \(375.0 \pm 0.0\) & \( \times \) \\
\midrule
OPT     &     \(0.0 \pm 0.0\) & \( \times \) &    \(0.5 \pm 0.577\) & \( \times \) &  \(85.25 \pm 47.647\) & \( \times \) \\
OPT-B   &  \(1.0 \pm 0.0\) & \(8.6 \pm 0.2\) &  \(1.0 \pm 0.0\) & \(11.0 \pm 4.9\) &    \(1.0 \pm 0.0\) & \(9.0 \pm 1.0\) \\
OPT-LS  &     \(0.0 \pm 0.0\) & \( \times \) &      \(0.0 \pm 0.0\) & \( \times \) &      \(0.25 \pm 0.5\) & \( \times \) \\
\midrule
2OPT    &   \(6.5 \pm 1.732\) & \( \times \) &   \(1.0 \pm 0.0\) & \(0.0 \pm 0.0\) &    \(1.0 \pm 0.0\) & \(0.0 \pm 0.0\) \\
2OPT-B  &  \(1.0 \pm 0.0\) & \(1.0 \pm 0.0\) &   \(1.0 \pm 0.0\) & \(1.4 \pm 0.9\) &    \(1.0 \pm 0.0\) & \(1.1 \pm 0.3\) \\
2OPT-LS &    \(0.25 \pm 0.5\) & \( \times \) &   \(1.0 \pm 0.0\) & \(0.0 \pm 0.0\) &    \(1.0 \pm 0.0\) & \(0.0 \pm 0.0\) \\
    \bottomrule
    \end{tabular}
\end{table}

\begin{table}[!htb]
    \centering
    \caption{IJCNN (logistic loss): finding a tight solution: \((\epsg, \epsH) = (0.020, 0.141)\)}
    \begin{tabular}{ccccccc}
        \toprule
        % \hline
        \multirow{2}{*}{method} & \multicolumn{2}{c}{\(\varepsilon = 0.2\)}                                     & \multicolumn{2}{c}{\(\varepsilon = 0.6\)}  & \multicolumn{2}{c}{\(\varepsilon = 1.0\)}            \\
        \cmidrule(lr){2-3}
        \cmidrule(lr){4-5}
        \cmidrule(lr){6-7}
        & final loss & runtime & loss & runtime & loss & runtime \\ \midrule
   \midrule
TR      &      \(0.478 \pm 0.01\) & \( \times \) &     \(0.454 \pm 0.005\) & \( \times \) &     \(0.446 \pm 0.003\) & \( \times \) \\
TR-B    &      \(0.479 \pm 0.01\) & \( \times \) &     \(0.454 \pm 0.005\) & \( \times \) &     \(0.446 \pm 0.003\) & \( \times \) \\
\midrule
OPT     &     \(0.484 \pm 0.012\) & \( \times \) &     \(0.447 \pm 0.007\) & \( \times \) &      \(0.44 \pm 0.003\) & \( \times \) \\
OPT-B   &  \(0.501 \pm 0.007\) & \(7.9 \pm 6.3\) &  \(0.501 \pm 0.007\) & \(7.8 \pm 6.2\) &  \(0.501 \pm 0.007\) & \(9.2 \pm 7.4\) \\
OPT-LS  &     \(0.761 \pm 0.066\) & \( \times \) &     \(0.553 \pm 0.037\) & \( \times \) &     \(0.492 \pm 0.016\) & \( \times \) \\
\midrule
2OPT    &     \(0.527 \pm 0.024\) & \( \times \) &     \(0.462 \pm 0.013\) & \( \times \) &  \(0.501 \pm 0.007\) & \(0.0 \pm 0.0\) \\
2OPT-B  &  \(0.502 \pm 0.007\) & \(1.5 \pm 0.4\) &  \(0.502 \pm 0.007\) & \(1.3 \pm 0.1\) &  \(0.502 \pm 0.007\) & \(1.6 \pm 0.3\) \\
2OPT-LS &      \(3.504 \pm 0.98\) & \( \times \) &     \(0.798 \pm 0.081\) & \( \times \) &   \(0.459 \pm 0.01\) & \(0.4 \pm 0.4\) \\
        \bottomrule
    \end{tabular}
\end{table}

\begin{table}[!htb]
    \centering
    \caption{IJCNN Hess evals (logistic loss): finding a tight solution: \((\epsg, \epsH) = (0.020, 0.141)\)}
    \begin{tabular}{ccccccc}
        \toprule
        % \hline
        \multirow{2}{*}{method} & \multicolumn{2}{c}{\(\varepsilon = 0.2\)}                                     & \multicolumn{2}{c}{\(\varepsilon = 0.6\)}  & \multicolumn{2}{c}{\(\varepsilon = 1.0\)}            \\
        \cmidrule(lr){2-3}
        \cmidrule(lr){4-5}
        \cmidrule(lr){6-7}
        & Hess evals & runtime & Hess evals & runtime & Hess evals & runtime \\ \midrule
TR      &  \(1061.0 \pm 0.0\) & \( \times \) &  \(1061.0 \pm 0.0\) & \( \times \) &   \(1061.0 \pm 0.0\) & \( \times \) \\
TR-B    &  \(1061.0 \pm 0.0\) & \( \times \) &  \(1061.0 \pm 0.0\) & \( \times \) &   \(1061.0 \pm 0.0\) & \( \times \) \\
\midrule
OPT     &     \(0.0 \pm 0.0\) & \( \times \) &     \(0.0 \pm 0.0\) & \( \times \) &      \(0.0 \pm 0.0\) & \( \times \) \\
OPT-B   &  \(1.0 \pm 0.0\) & \(7.9 \pm 6.3\) &  \(1.0 \pm 0.0\) & \(7.8 \pm 6.2\) &   \(1.0 \pm 0.0\) & \(9.2 \pm 7.4\) \\
OPT-LS  &     \(0.0 \pm 0.0\) & \( \times \) &     \(0.0 \pm 0.0\) & \( \times \) &      \(0.0 \pm 0.0\) & \( \times \) \\
\midrule
2OPT    &     \(0.0 \pm 0.0\) & \( \times \) &   \(3.0 \pm 1.581\) & \( \times \) &   \(1.0 \pm 0.0\) & \(0.0 \pm 0.0\) \\
2OPT-B  &  \(1.0 \pm 0.0\) & \(1.5 \pm 0.4\) &  \(1.0 \pm 0.0\) & \(1.3 \pm 0.1\) &   \(1.0 \pm 0.0\) & \(1.6 \pm 0.3\) \\
2OPT-LS &     \(0.0 \pm 0.0\) & \( \times \) &   \(0.4 \pm 0.548\) & \( \times \) &  \(4.8 \pm 4.97\) & \(0.4 \pm 0.4\) \\
    \bottomrule
    \end{tabular}
\end{table}

\end{document}